\definecolor{db}{rgb}{0.0, 0.2, 0.7}
\newtheorem{thm}{Theorem}
\newtheorem{lem}{Lemma}
\newtheorem{rem}{Remark}
\newtheorem{cor}{Corollary}
\newtheorem{prop}{Proposition}
\newtheorem{defn}{Definition}
\begin{document}

\title{Time-optimal Convexified Reeds-Shepp Paths on a Sphere}

\author{Sixu Li,~\IEEEmembership{Student Member,~IEEE}, Deepak Prakash Kumar,~\IEEEmembership{Student Member,~IEEE}, Swaroop Darbha,~\IEEEmembership{Fellow,~IEEE}, Yang Zhou ,~\IEEEmembership{Member,~IEEE}
\thanks{(Corresponding authors: Swaroop Darbha, Yang Zhou)}
\thanks{Sixu Li and Yang Zhou are with the Zachry Department of Civil $\&$ Environmental Engineering, Texas A$\&$M University, College Station, TX 77843, USA (e-mail: sixuli@tamu.edu; yangzhou295@tamu.edu).}%
\thanks{Deepak Prakash Kumar and Swaroop Darbha are with the Department of Mechanical Engineering, Texas A\&M University, College Station, TX 77843 USA (e-mail: deepakprakash1997@gmail.com; dswaroop@tamu.edu).}}



\maketitle

\begin{abstract}

This article studies the time-optimal path planning problem for a convexified Reeds–Shepp (CRS) vehicle on a unit sphere, capable of both forward and backward motion, with speed bounded in magnitude by 1 and turning rate bounded in magnitude by a given constant. For the case in which the turning-rate bound is at least 1, using Pontryagin’s Maximum Principle and a phase-portrait analysis, we show that the optimal path connecting a given initial configuration to a desired terminal configuration consists of at most six segments drawn from three motion primitives: tight turns, great circular arcs, and turn-in-place motions. A complete classification yields a finite sufficient list of 23 optimal path types with closed-form segment angles derived. The complementary case in which the turning-rate bound is less than 1 is addressed via an equivalent reformulation. The proposed formulation is applicable to underactuated satellite attitude control, spherical rolling robots, and mobile robots operating on spherical or gently curved surfaces. The source code for solving the time-optimal path problem and visualization is publicly available at \href{https://github.com/sixuli97/Optimal-Spherical-Convexified-Reeds-Shepp-Paths}{https://github.com/sixuli97/Optimal-Spherical-Convexified-Reeds-Shepp-Paths}.

\end{abstract}

\begin{IEEEkeywords}
Reeds-Shepp vehicle, time-optimal paths, optimization and optimal control, spherical path planning, space robotics, spherical rolling robot.
\end{IEEEkeywords}

\section{Introduction}
\IEEEPARstart{A}{utonomous} vehicles and robotics have seen significant advancements and applications recently, creating a growing demand for effective path planning across diverse scenarios. In the realm of path planning, time-optimal paths hold significant value because they provide a fundamental metric on the configuration space \cite{balkcom2006time}. Additionally, they act as effective motion primitives \cite{frazzoli2005maneuver,mueller2015computationally}, which can be sampled or sequentially combined to tackle more complex tasks such as obstacle avoidance. Time-optimal path planning problems typically seek the minimum-time path connecting an initial and a desired terminal configuration\footnote{The configuration of a vehicle is defined by its position and orientation.} of a vehicle subject to kinematic constraints. While time-optimal path planning has been extensively studied in the plane and in certain three-dimensional settings, the structure of time-optimal paths on curved manifolds, such as the sphere, remains far less understood.
In particular, among the existing literature, the Dubins vehicle \cite{dubins1957curves} is the only model for which time-optimal paths on a sphere have been characterized \cite{darbha2023optimal,kumar2024generalization}.
\begin{figure}[!t]
    \centering
    {\includegraphics[width=0.20\textwidth]{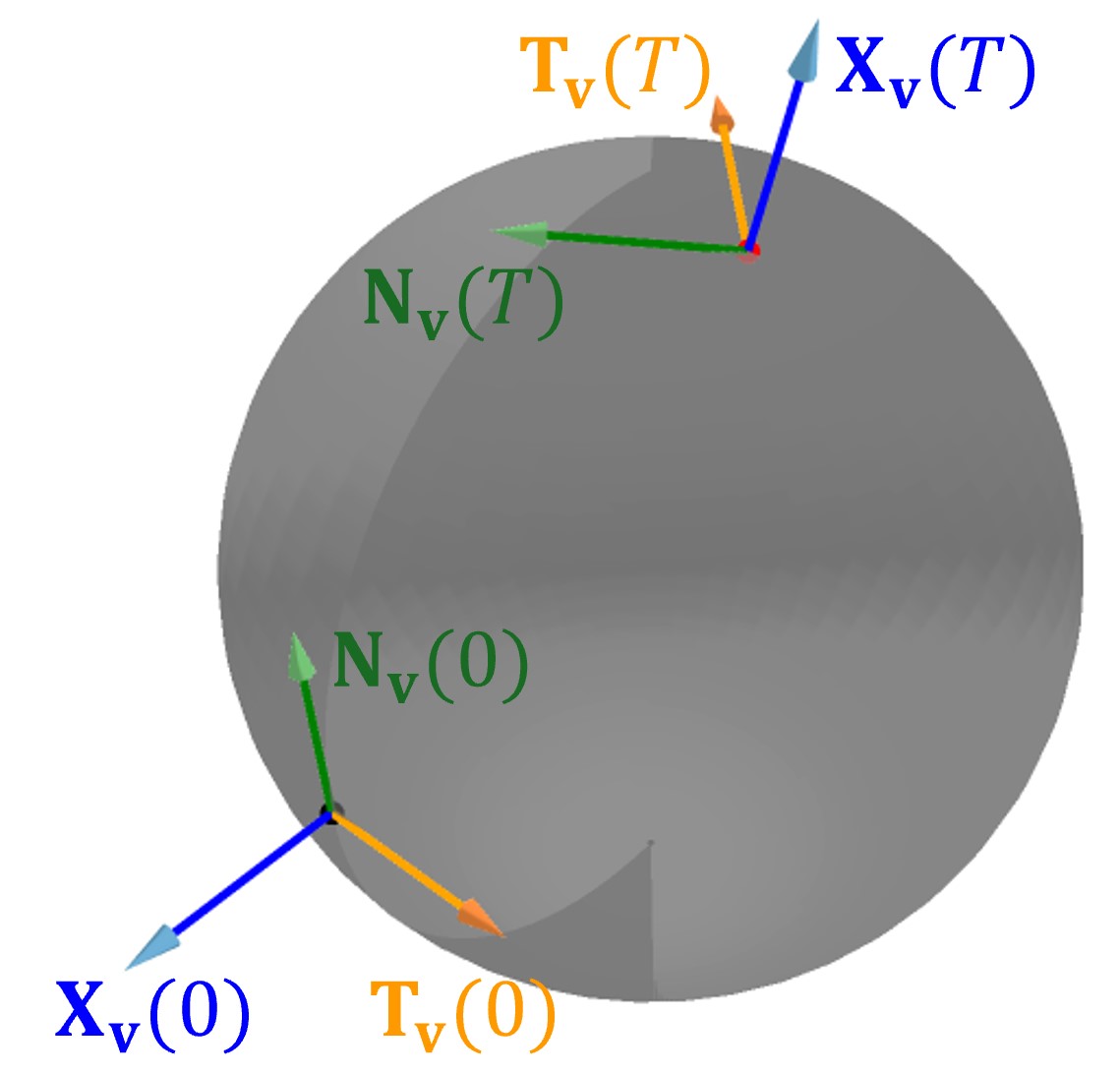}}
    \caption{Configurations on a sphere}
    \label{fig:problem_form}
    \end{figure}
    \begin{figure}[!t]
    \centering
    \subfigure[fixed $v$, different constant $u_g$]{\includegraphics[width=0.20\textwidth]{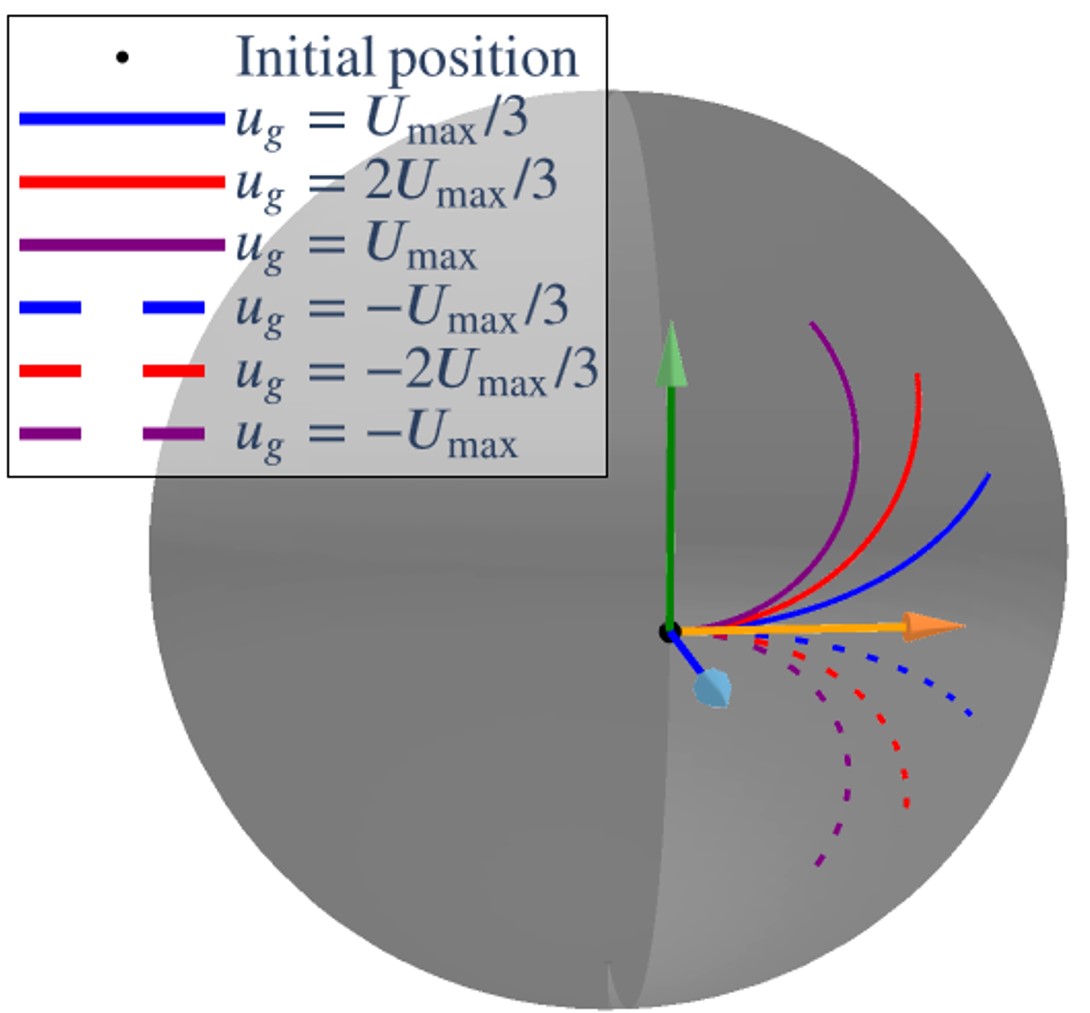}}
    \hspace{0.04\textwidth}
    \subfigure[fixed $u_g$, different constant $v$]{\includegraphics[width=0.186\textwidth]{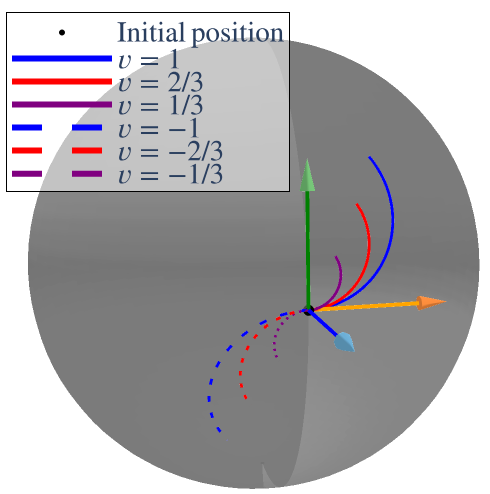}}
    \caption{Illustration of $u_g$ and $v$}
    \label{fig:vary}
    \end{figure}

In this paper, we extend this line of research to the convexified Reeds–Shepp (CRS)\footnote{In this paper, we use the term CRS to denote the variant of the classical Reeds-Shepp model with the convexified constraint $v\in[-1,1]$. The term “convexified’’ does not imply an approximation or the existence of smooth (non-bang-bang) optimal solutions.} vehicle \cite{sussmann1991shortest}, a variant of the classical Reeds–Shepp (RS) model \cite{reeds1990optimal}, and study the corresponding time-optimal\footnote{The notion of optimality is understood with respect to the CRS model, which is subject to the same ideal kinematic assumptions as the classical Dubins and RS models.} path planning problem on the sphere. Unlike the planar case, the intrinsic curvature of the sphere further couples translational and rotational motion, so that even locally straight motion induces changes in orientation, fundamentally altering the structure of optimal paths. As depicted in Fig. \ref{fig:problem_form}, the configuration of a CRS vehicle on a sphere is represented by a collection of vectors $\bold{X_v}(t)$, $\bold{T_v}(t)$, and $\bold{N_v}(t)$, which denote the CRS vehicle's position, heading direction, and lateral direction, respectively. The spherical CRS problem involves finding a path that starts from an initial configuration  $[\bold{X_v}(0), \bold{T_v}(0), \bold{N_v}(0)]$ at $t=0$, and reaches a desired terminal configuration  $[\bold{X_v}(T), \bold{T_v}(T), \bold{N_v}(T)]$ at $t=T$, while minimizing $T$. The CRS vehicle is subjected to the input constraints $u_g(t)\in[-U_{max},U_{max}]$ and $v(t)\in[-1,1]$. Here, $u_g$ represents the turning rate, dictating the CRS vehicle's ability to turn away from its heading direction $\bold{T_v}(t)$, while $v$ denotes the vehicle's speed, determining its ability to move along $\bold{T_v}(t)$. Fig. \ref{fig:vary}(a) shows fixed-time paths with a fixed speed $v=1$ and various constant values of $u_g$. Since $v$ is fixed, the arc lengths of the paths are identical, but the change in heading increases as $|u_g|$ increases. Fig. \ref{fig:vary}(b) shows fixed-time paths with a fixed turning rate $u_g=U_{max}$ and various constant values of $v$. Since $u_g$ is fixed, the heading change remains the same across paths, while the arc length increases as $|v|$ increases. It is worth noting that there is no loss of generality in considering a unit maximum $|v|$ and a unit sphere since the distance and time can be scaled.

 The spherical CRS problem is motivated by three primary types of real-world applications: 
 
1) \textit{Optimal attitude control of underactuated satellites:} In scenarios where actuators fail, a satellite could become underactuated \cite{yoshimura2011position}; research from \cite{tsiotras2000control, Krishnan,crouch1984spacecraft} shows that only two control inputs suffice to control the pose of a satellite. The spherical CRS model is equivalent to the satellite model featuring two reaction wheels\footnote{Reaction wheels are internal satellite actuators that control attitude by exchanging angular momentum, allowing reorientation without external thrust.} \cite{bullo1995control,murray2017mathematical}.  This equivalence will be shown in more detail in Section \ref{sec2}. This work distinguishes from earlier work in the following two aspects: (1) the objective is to minimize the time to change pose with limited control effort, and (2) it is assumed that the reaction wheels' angular velocities are controlled instantaneously; therefore, this study can be considered as the development of a planner that produces a time-optimal trajectory, including references for both the satellite's pose and higher-level control signals, which can subsequently be tracked by a lower-level controller.

2) \textit{Optimal motion planning for spherical rolling robots \cite{diouf2024spherical} with an internal drive unit (IDU).} These robots possess a spherical outer shell with the IDU placed at the bottom inside. The IDU typically includes a unicycle maintaining continuous contact with the inner surface of the shell \cite{halme1996motion,bicchi1997introducing,zhan2011design}. By controlling wheel rotation and direction, the IDU induces the shell to roll. Fig. \ref{fig:sphere robot} illustrates such a robot schematically. The kinematics of the shell's pose relative to the IDU \cite{bicchi1997introducing,zhan2011design} is equivalent to the spherical CRS model, which describes the model of a CRS vehicle traveling on a unit sphere. This equivalence will be shown in more detail in Section \ref{sec2}. The model helps to perform motion planning tasks. For example, in reconnaissance, a shutter is incorporated into the shell, which allows sensors to extend outward \cite{bujvnak2022spherical}. The robot typically needs to reorient to position the shutter on top before extending the sensors. Assuming the wheels do not slip inside the shell and the ground friction is low, the shell-ground contact point will eventually be where the IDU comes to a stop on the shell. Hence, re-adjusting the shutter position can be achieved by directing the IDU to the shutter's antipodal point, as shown in Fig. \ref{fig:sphere robot plan}. The dashed line from the shutter to the IDU's target runs through the sphere's center, with the red line showing the IDU's optimal path on the sphere.

\begin{figure}[h]
    \centering
    {\includegraphics[width=0.15\textwidth]{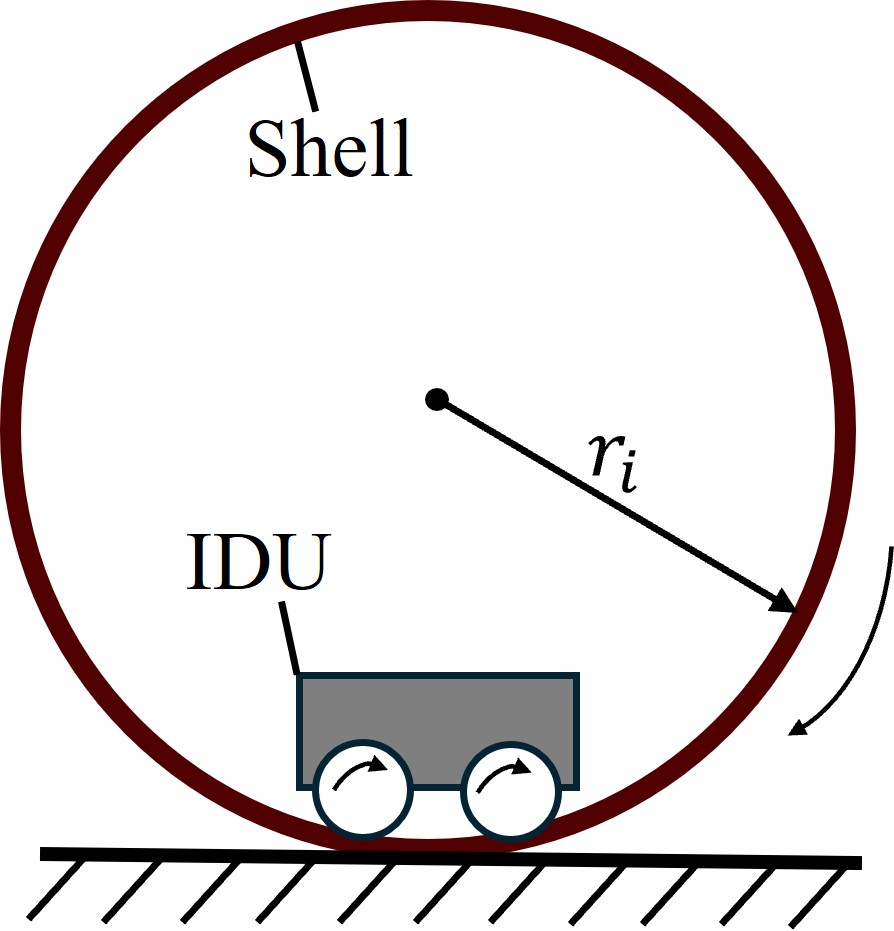}}
    \caption{Schematic plot of the spherical rolling robot}
    \label{fig:sphere robot}
    \end{figure}

    \begin{figure}[h]
    \centering
    \subfigure[Initial configuration]{\includegraphics[width=0.155\textwidth]{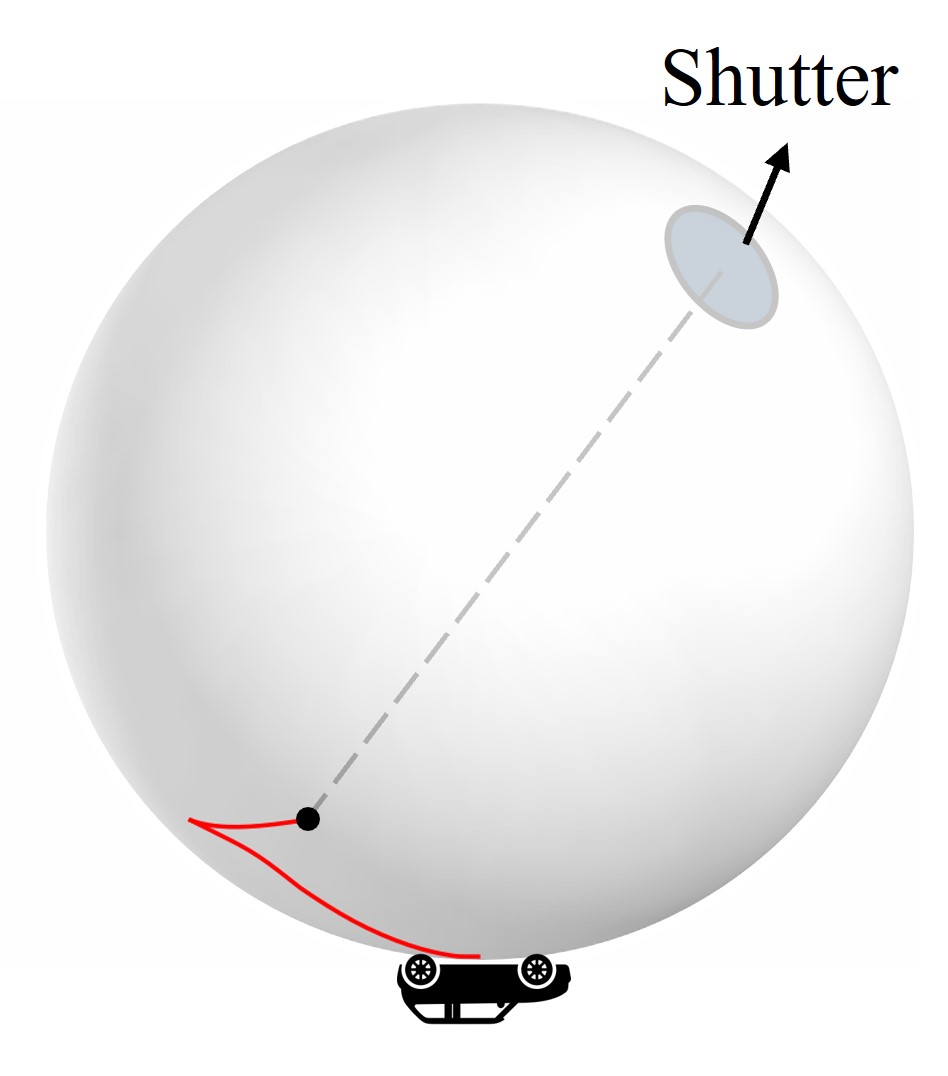}}
    \hspace{0.04\textwidth}
    \subfigure[Terminal configuration]{\includegraphics[width=0.15\textwidth]{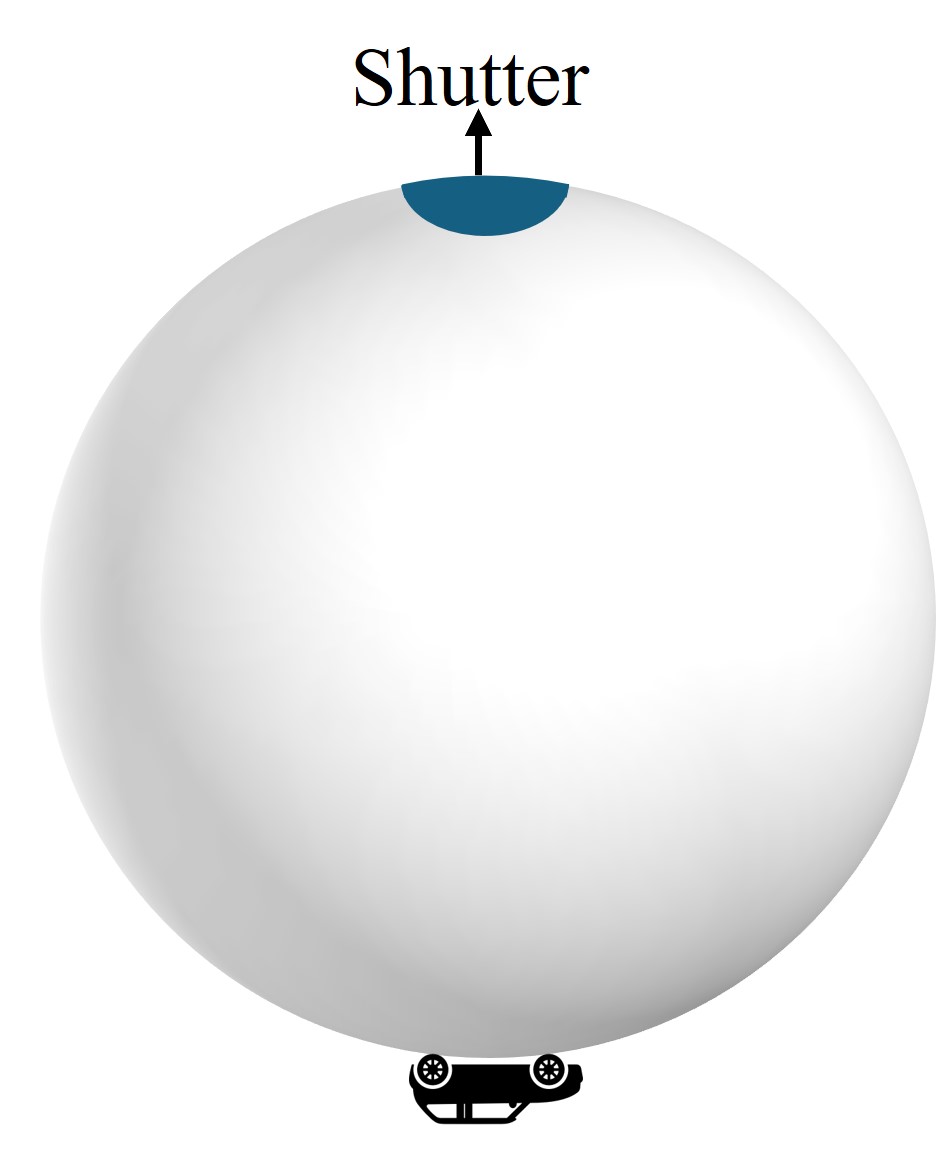}}
    \caption{Repositioning of the shutter}
    \label{fig:sphere robot plan}
    \end{figure}

3) \textit{Optimal path planning for a CRS-like robot (e.g., with differential drive) on spherical surfaces or uneven terrains.} The application on spherical surfaces is straightforward and is utilized in inspection tasks (e.g., for spherical gas tanks \cite{li2022weld,okamoto2012development}). As for uneven terrains, the proposed formulation applies particularly to terrains that can be locally approximated by a spherical patch through curvature matching, indicating that the local terrain has nearly constant or gradually varying non-negative Gaussian curvature. Such terrain characteristics are seen in “gently rolling terrain”, as illustrated, for example, in Figure 2 of \cite{luca2011geomorphological} and Figure 1 of \cite{caruso2012compositional}. Fig. \ref{fig:uneven terrain} demonstrates such local approximation with a spherical patch, where the red line shows the optimal CRS path on the sphere. For a given start/terminal configuration pair, the set of all admissible paths on the spherical patch is contained within the set of all admissible paths on the entire sphere. Consequently, the path in Fig. \ref{fig:uneven terrain} is also optimal within the patch. The difference between the actual terrain and its spherical patch approximation can be treated as a disturbance input to the lower-level controller. 

\begin{figure}[h]
    \centering
    \setlength{\abovecaptionskip}{0pt}
    \includegraphics[width=0.27\textwidth]{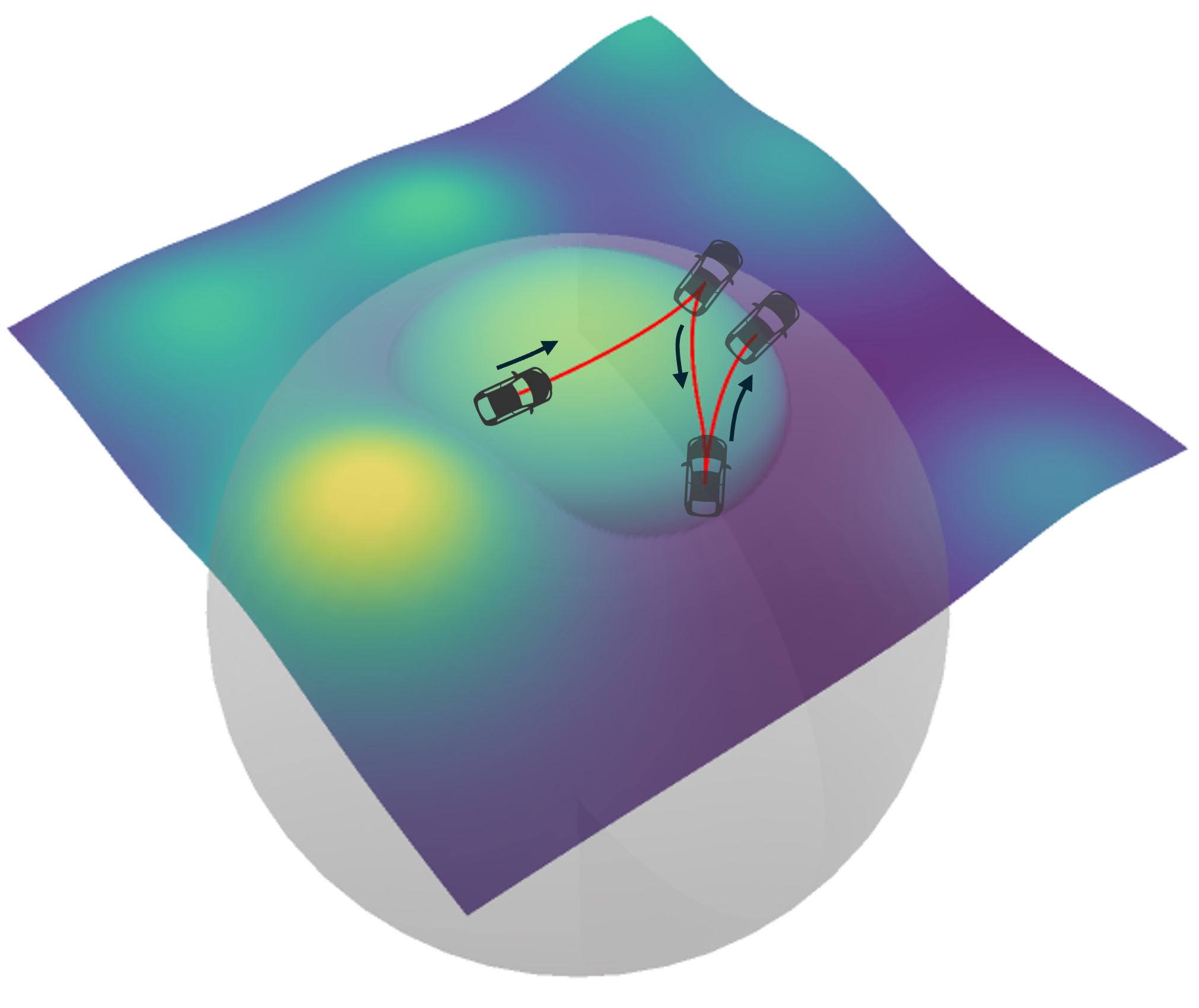}
    \caption{Approximating local uneven terrain with a spherical patch}
    \label{fig:uneven terrain}
\end{figure}

These applications show that a wide range of systems share the same underlying kinematics captured by the spherical CRS model. Solving the corresponding time-optimal planning problem therefore provides fundamental insight, including a lower bound on the fastest achievable motion and the qualitative structure of optimal paths (e.g., switching behavior), and serves as a canonical benchmark for evaluation. In practice, the resulting paths can be used as high-level references, while smoothness and dynamical feasibility are handled by downstream tracking controllers, since the bang-bang nature of kinematic time-optimal solutions precludes perfect execution on dynamically constrained systems.

Another motivating factor is the exploration of mechanism design \cite{balkcom2002time,balkcom2006time}. Expanding the set of vehicles with known optimal paths enables direct comparisons between different actuation models. For example, in a robotic task that requires traveling between two configurations on a sphere in minimum time, we can evaluate whether a Dubins-type vehicle or a CRS-type vehicle\footnote{A Dubins-type vehicle has a single independent control input (the turning rate), with forward speed assumed constant, whereas a CRS-type vehicle admits two independent control inputs (turning rate and forward speed).} is more efficient, while also considering trade-offs between travel time, actuation complexity, and implementation cost.

The main contributions of this paper are as follows: 

1) Formulation of the spherical CRS problem.

2) Deriving necessary conditions for time-optimal paths using the Pontryagin Maximum Principle (PMP) and phase portraits. 

3) Characterization of a sufficient list of the time-optimal path types for $U_{max}\geq1$ (without loss of generality) by proving that some paths satisfying the necessary conditions are non-optimal or redundant\footnote{The threshold 1 in the condition $U_{max}\geq1$ is stated under the adopted normalization (unit sphere, unit maximum speed) and reflects the relative scaling between the bounds on turning rate and speed.}. The complementary case $U_{\max}<1$ can be rewritten into an equivalent formulation with
a new bound $\tilde U_{\max}>1$, for which the results of this paper directly apply;
see Appendix \ref{append equivalence} in the supplementary materials.

4) Derivation of closed-form expressions for the angles
of each path in the sufficient list, given an initial configuration, a desired terminal configuration, and $U_{max}$. 

5) Full release of the source code for the research community  to utilize, enhance, and build upon.

The remainder of this article is structured as follows. Section \ref{sec related} reviews prior results. Section \ref{sec2} presents the problem formulation and utilizes PMP to obtain some basic properties. Section \ref{sec3} further characterizes the optimal paths and demonstrates that the problem can be divided into three distinct cases. These cases are analyzed in detail in Sections \ref{sec 3.1}, \ref{sec3.2}, and \ref{sec3.3}, respectively, obtaining a sufficient list of optimal path types. Section \ref{sec generate} includes a discussion on path generation, along with numerical experiments. Conclusions are drawn in Section \ref{sec5}.

The main symbols used throughout the paper are summarized in Table \ref{tab: key-symbols} for convenience.

\begin{table}[h]
\centering
\caption{Key symbols}
\label{tab: key-symbols}
\setlength{\tabcolsep}{3pt}
\begin{tabular}{ll}
\hline
Symbol & Description \\
\hline
\\[-6pt]$\mathbf{X_v}$ & Position of the CRS vehicle on the unit sphere \\[4pt]
$\mathbf{T_v}$ & Heading (tangent) direction of the vehicle \\[4pt]
$\mathbf{N_v}$ & Lateral direction, completing the moving frame \\[4pt]
$\mathbf{R}$ & Configuration matrix $[\mathbf{X_v}, \mathbf{T_v}, \mathbf{N_v}] \in SO(3)$ \\[4pt]

$v$ & Signed linear speed, $v \in [-1,1]$ \\[4pt]
$u_g$ & Turning rate control, $u_g \in [-U_{max}, U_{max}]$ \\[4pt]
$U_{max}$ & Maximum allowable turning rate \\[4pt]

$h_1, h_2, H_{12}$ & Hamiltonians associated with left-invariant vector fields \\[4pt]
$\mathcal{A}, \mathcal{B}, \mathcal{C}$ &
\makecell[l]{Reparameterized variables defined by $\mathcal{A}:=H_{12}$, $\mathcal{B}:=h_2$,\\
  $\mathcal{C}:=-h_1$} \\[8pt]

$g$ & Invariant constant $g\equiv\mathcal{A}^2+\mathcal{B}^2+\mathcal{C}^2$ along an extremal \\[4pt]

$C$ & Tight turn segment ($|v|=1$, $|u_g|=U_{max}$) \\[4pt]
$G$ & Great-circle arc ($|v|=1$, $u_g=0$) \\[4pt]
$T$ & Turn-in-place segment ($v=0$, $|u_g|=U_{max}$) \\[4pt]
$|$ & Cusp (switch in the sign of $v$) \\[4pt]

$L, R$ & \makecell[l]{Action labels indicating saturated turning rate:
$L$ for $u_g=$\\ $U_{max}$ and
$R$ for $u_g=-U_{max}$}  \\[8pt]

$+,0,-$ & \makecell[l]{Superscripts indicating the speed mode of a segment: $+$ for\\
 $v=1$, $0$ for $v=0$,
and $-$ for $v=-1$}\\[8pt]

$r$ & Radius of a tight turn, $r=\frac{1}{\sqrt{1+U_{max}^2}}$ \\[4pt]

$c(\cdot), s(\cdot)$ & Shorthand notations for $\cos(\cdot)$ and $\sin(\cdot)$, respectively \\[4pt]

$\alpha$, $\sigma$, $\rho$ & Angles with values $\alpha>0$, $\sigma\geq0$, and $\rho\geq0$ \\[4pt]

$\beta$ & Angle with value $\beta=\arctan(\frac{1}{\sqrt{U_{max}^4-1}})+\frac{\pi}{2}$ \\[4pt]

$\mu$ & Angle with value $0<\mu<\arctan(\frac{1}{\sqrt{U_{max}^4-1}})+\frac{\pi}{2}$ \\[4pt]

$\epsilon$ & Angle with value $0<\epsilon<2\mu$ \\[4pt]

\hline
\end{tabular}
\end{table}

\section{Related Work}
\label{sec related}
To place the present work in context, we next review prior results on time-optimal path planning in planar, three-dimensional (3D), and spherical settings.

\subsection{Time-optimal Path Planning in a Plane}

In \cite{dubins1957curves}, the time-optimal path problem was solved for a forward-moving vehicle with constant speed $v=1$, and bounded turning rate\footnote{With a constant speed, a bounded turning rate corresponds to a minimum turning radius.} $u\in[-U_{max},U_{max}]$, which is known as the Dubins vehicle. The authors proved that the optimal path must be of types $CCC$, $CSC$, or their degenerate forms, where $C$ represents a left or right turn with a maximum turning rate and $S$ denotes a straight line. In \cite{reeds1990optimal}, the problem was extended to the RS vehicle, which moves both forward and backward with $v\in\{-1,1\}$. It was proven that the optimal path must be one of the following types:  $CSC,$ $C|C|C,$ $CC|C,$ $C|CC,$ $CC|CC,$ $C|CC|C,$ $C|CSC,$ $CSC|C,$ $C|CSC|C$, or its degenerate form; here, $``|"$ represents a cusp. 

The aforementioned pioneering works relied on geometry and differential calculus. Subsequent improvements to the RS results were made using PMP \cite{pontryagin2018mathematical}, as seen in \cite{boissonnat1994shortest} and \cite{sussmann1991shortest}. In particular, as a proof technique, \cite{sussmann1991shortest} addressed the RS problem by relaxing the constraint $v\in\{-1,1\}$ to $v\in[-1,1]$. The authors showed that, in a plane, the sufficient list of time-optimal paths for the relaxed problem is also admissible for the RS problem, indicating that the sufficient lists for both problems are identical. The resulting model with the convexified constraint $v\in[-1,1]$ is known as the CRS vehicle. Later, in \cite{chitsaz2009minimum}, it was proved that the sufficient lists for the time-optimal CRS problem and the minimum wheel-rotation differential-drive problem also coincide. A phase portrait approach was recently used in \cite{kaya2017markov} to obtain a simplified proof utilizing PMP. A similar phase-portrait approach was used to solve the weighted Dubins problem using PMP in \cite{kumar2023weighted}. Time-optimal paths of differential drive vehicles, omni-directional vehicles, and car-like mobile robots in a plane were studied using PMP in \cite{balkcom2002time}, \cite{balkcom2006time}, and \cite{ben2021time}, respectively.

\subsection{Time-optimal Path Planning in 3D and on a sphere}
Although optimal path planning in 3D has posed difficulties, several significant studies exist. In \cite{sussmann1995shortest}, the problem of a 3D Dubins path with constraints on total curvature was investigated, demonstrating that the time-optimal path is either a helicoidal arc or is composed of up to three segments. However, this study does not take into account the full configuration of the vehicle in 3D space. In \cite{chitsaz2007time}, the Dubins vehicle model was extended to incorporate altitude, allowing for the modeling of airplanes. The study determined the time-optimal paths for achieving final altitudes categorized as low, medium, and high. Nevertheless, the full configuration space was not comprehensively considered, and the challenges for control synthesis remained open. For surfaces of non-negative curvature, the existence conditions of Dubins paths were given in \cite{chitour2005dubins} without addressing the optimality of the paths. 

Exploring optimal paths on a sphere is another valuable area, particularly due to its relevance in path planning on uneven terrain and planetary surfaces, as well as in attitude control. In \cite{thomas2025formulating}, a linear time-varying formulation of a unicycle on the sphere was proposed to generate smooth paths that are advantageous for direct execution. However, the formulation is aimed at stabilization rather than time-optimal planning. In \cite{monroy1998non}, it was shown that the results of planar Dubins generalize to a unit sphere for the specific value $r=\frac{1}{\sqrt{2}}$ (or $U_{max}=1$), where $r=\frac{1}{\sqrt{1+U_{max}^2}}$ denotes the radius of the tight turn. In \cite{darbha2023optimal}, the spherical Dubins problem was studied for $r\leq\frac{1}{2}$ (or $U_{max}\geq\sqrt{3}$) by modeling the vehicle using a Sabban frame and constructing a geodesic curvature-constrained time-optimal path problem. The authors proved that for $r\leq\frac{1}{2}$, the optimal path is of types $CCC$, $CGC$, or their degenerate forms, where $C$ represents a left or right turn with the maximum absolute geodesic curvature, and $G$ denotes a great circular arc on the sphere---a geodesic. In \cite{kumar2024generalization}, the spherical Dubins problem was extended to free terminal orientation using PMP. The authors proved that for $r\leq\frac{\sqrt{3}}{2}$ (or $U_{max}\geq\frac{1}{\sqrt{3}}$), the time-optimal paths must be of types $CG,$ $CC$, or their degenerate forms.

Clearly, the CRS problem on a sphere has not been addressed in the literature. In this work, we address this gap by proposing an appropriate model, deriving a finite sufficient list of optimal path types, and providing the analytical constructions of candidate paths within this list.

\section{Problem Formulation \label{sec2}}

In this section, the problem formulation is proposed, and PMP is utilized to obtain some basic properties of the optimal path. 

\subsection{The spherical CRS problem}
In this paper, the time-optimal path of a CRS vehicle on a sphere is considered. The problem is modeled based on the spherical Dubins vehicle model using a Sabban frame\footnote{The Sabban frame is the orthonormal moving frame associated with a curve on the unit sphere.}, as proposed in \cite{darbha2023optimal}:

\begin{align}
\label{DU model}
&\frac{d\bold{X}}{ds} = \bold{T}(s),~ ~
\frac{d\bold{T}}{ds} = -\bold{X}(s) + u_g(s)\bold{N}(s), \notag \\
&\frac{d\bold{N}}{ds} = -u_g(s)\bold{T}(s),
\end{align}
where $\bold{X}, \bold{T}, \bold{N}$ denote the position vector, tangent vector, and the tangent-normal vector, respectively, and form an orthonormal basis that describes the location and orientation of a Dubins vehicle on a sphere (see Fig. 1 of \cite{darbha2023optimal}). Furthermore, $s$ represents the arc length traversed by the Dubins vehicle and $u_g$ denotes the geodesic curvature.


We first derive the spherical RS model. When considering an RS path, a new control variable $v\in\{-1,1\}$ for the velocity needs to be introduced to describe the forward/backward movement of an RS vehicle. To distinguish between the forward/backward movements along the spherical curve $\bold{X}(s)$, we define a new set of state vectors, $\bold{X_v}(s), \bold{T_v}(s)$ and $\bold{N_v}(s)$. Here, $\bold{X_v}(s)= \bold{X}(s)$ represents the position of the RS vehicle on a sphere, $\bold{T_v}(s)= v(s)\bold{T}(s)$ represents the direction that the RS vehicle is facing, and $\bold{N_v}(s)=\bold{X_v}(s)\wedge\bold{T_v}(s)=v(s)\bold{N}(s)$. The relation between $[\bold{X_v}(s), \bold{T_v}(s),\bold{N_v}(s)]$ and $[\bold{X}(s), \bold{T}(s),\bold{N}(s)]$ is shown in Fig. \ref{fig:basis relation}. Similar to \cite{sussmann1991shortest}, we first derive the spherical RS model (with $v \in \{-1,1\}),$ and then relax $v$ to lie in $[-1,1]$ to obtain the spherical CRS model.

Noting that for the RS model, $v\in\{-1,1\}$ is piecewise constant and $\frac{ds}{dt}=|v|=1$, the new state equations re-parameterized with respect to time\footnote{For Dubins and RS vehicles, time and arc length are equivalent since $|v|=1$. In contrast, $|v|$ varies for a CRS vehicle, necessitating a re-parameterization.} almost everywhere\footnote{The equations hold almost everywhere because $v\in\{-1,1\}$ is piecewise constant and may switch at isolated time instants where derivatives involving 
$v$ (e.g., $\frac{d(v\mathbf{T})}{dt}$) are undefined.} are obtained as $\frac{d\bold{X_v}}{dt} =\frac{d\bold{X}}{dt}= \bold{T}(t)=\bold{T_v}(t)/v(t)=v(t)\bold{T_v}(t)$ (since $|v(t)|=1$)\footnotemark\footnotetext{We place $v(t)$ in the numerator because this form will also be used later for the CRS model, where $v(t)=0$ is allowed. In that case, $v(t)=0$ in the numerator correctly yields $\frac{d\bold{X_v}}{dt}=\mathbf 0$ (no motion), whereas placing $v(t)$ in the denominator would make $\frac{d\bold{X_v}}{dt}$ undefined.}, $\frac{d\bold{T_v}}{dt} = \frac{d(v\mathbf{T})}{dt}=v(t)\frac{d\bold{T}}{dt}=-v(t)\bold{X}(t) + v(t)u_g(t)\bold{N}(t)=-v(t)\bold{X_v}(t) + u_g(t)\bold{N_v}(t)$, and $\frac{d\bold{N_v}}{dt}=\frac{d(v\mathbf{N})}{dt}=v(t)\frac{d\bold{N}}{dt} = -v(t)u_g(t)\bold{T}(t)=-u_g(t)\bold{T_v}(t)$.

    \begin{figure}[h]
    \centering
    \subfigure[$v=1$]{\includegraphics[width=0.22\textwidth]{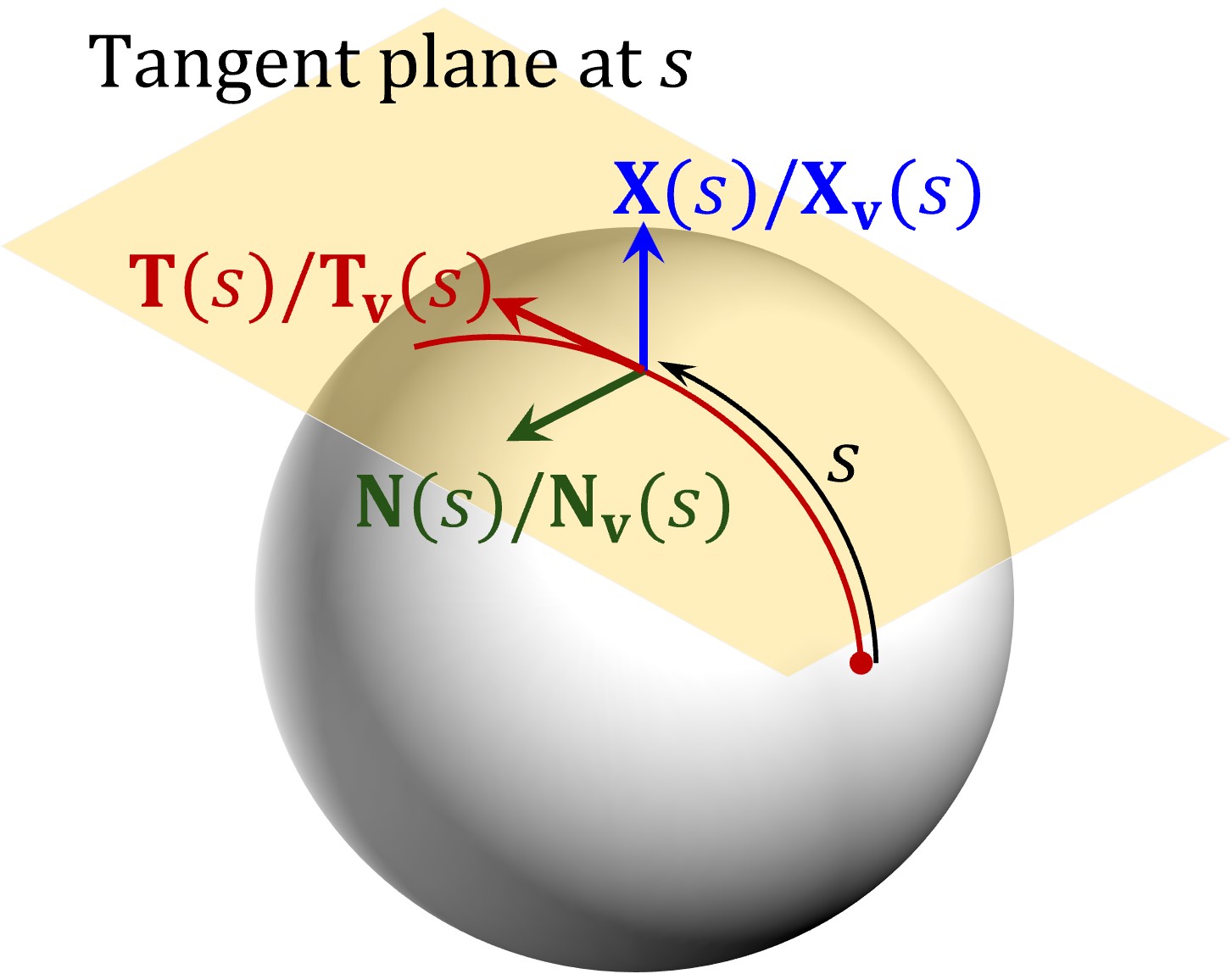}}
    \subfigure[$v=-1$]{\includegraphics[width=0.22\textwidth]{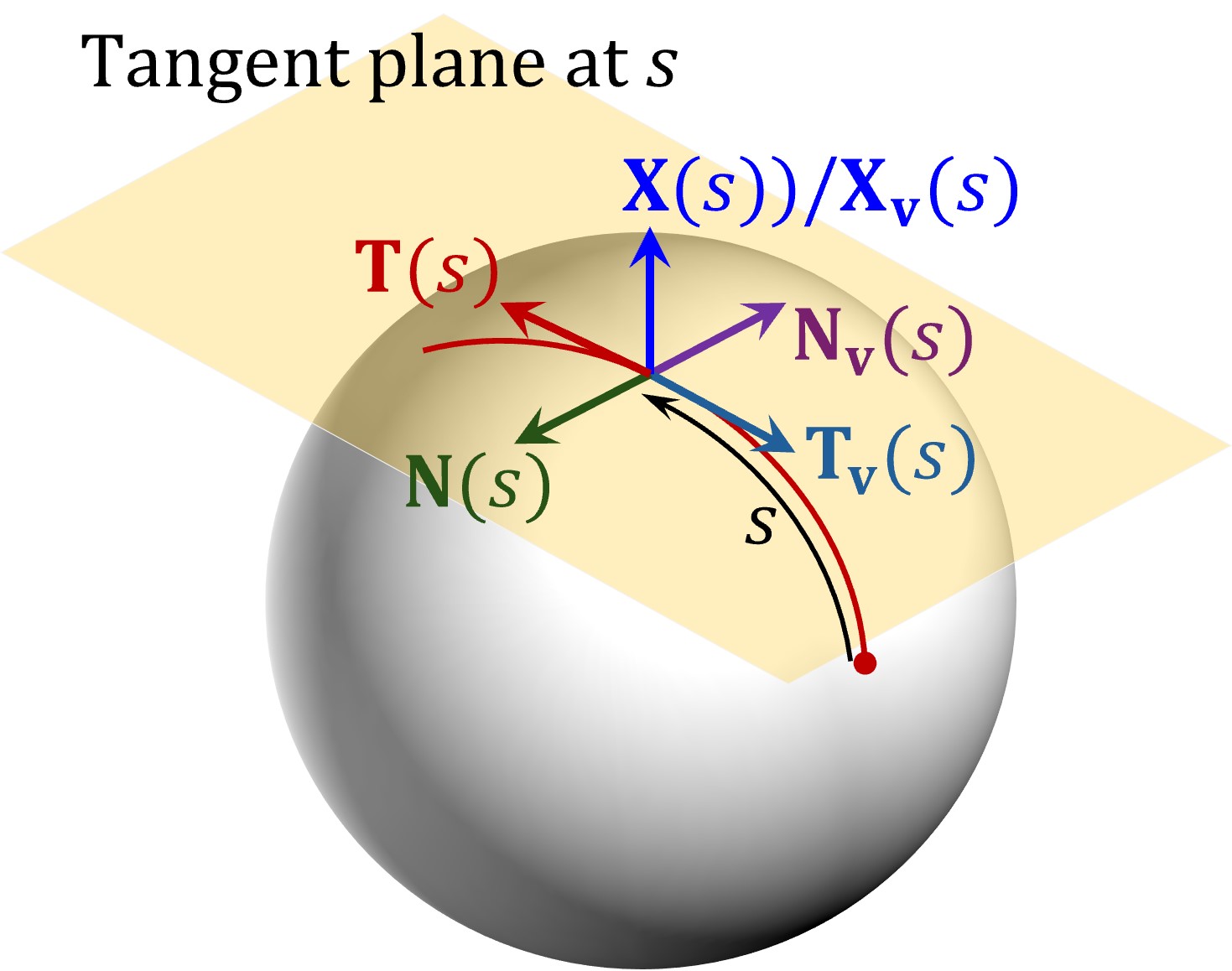}}
    \caption{Relation between $[\bold{X_v}(s), \bold{T_v}(s),\bold{N_v}(s)]$ and $[\bold{X}(s), \bold{T}(s),\bold{N}(s)]$}
    \label{fig:basis relation}
    \end{figure}


Analogous to \cite{sussmann1991shortest}, we derive the CRS model by expanding the admissible set of $v$ to $[-1,1]$, resulting in the formal formulation of the time-optimal spherical CRS problem:
\begin{equation}
\label{eq: time cost}
    J^* = \min \int_0^T 1 \; dt
\end{equation}
subject to
\begin{align}
\label{RS model1}
&\frac{d\bold{X_v}}{dt} = v(t)\bold{T_v}(t), \\
\label{RS model2}
&\frac{d\bold{T_v}}{dt} = -v(t)\bold{X_v}(t) +u_g(t)\bold{N_v}(t), \\
\label{RS model3}
&\frac{d\bold{N_v}}{dt} = -u_g(t)\bold{T_v}(t), \\
\label{terminal conditions}
&\bold{R}(0) =\bold{I_3}, ~\bold{R}(T) =\bold{R_f}, 
\end{align}
where $v \in[-1\text{,}1]$ and $u_g \in [-U_{max},U_{max}]$, $\bold{R}(t)=[\bold{X_v}(t), \bold{T_v}(t), \bold{N_v}(t)]\in SO(3)$, $\bold{I_3}$ is the identity matrix, and $\bold{R_f}$ represents the desired terminal configuration. 

Note that the state equations, (\ref{RS model1})-(\ref{RS model3}), are equivalent to
\begin{align}
\label{RS model}
    \frac{d\bold{R}(t)}{dt}
    = \bold{R}(t) \underbrace{\left( \begin{array}{ccc}
0 & -v(t) & 0 \\
v(t) & 0 & -u_g(t) \\
0 & u_g(t) & 0 \\
\end{array} \right)}_{\Omega(t)}.
\end{align}

\begin{rem}
    In the above model, $u_g$ no longer represents the geodesic curvature. This formulation is analogous to the model in \cite{sussmann1991shortest} for a CRS vehicle in a plane, where $u_g$ specifies the turning rate and allows for turn-in-place motions when $v=0$. 
\end{rem}



\subsection{Examples: Underactuated Satellite and Spherical Rolling Robot}
To highlight the generality of the CRS model, we note that underactuated satellites and spherical rolling robots can be cast into the same form.

\vspace*{0.1in} 

\noindent{\bf Example: Underactuated Satellite:}
    For a satellite with two reaction wheels, following \cite{bullo1995control} and neglecting its initial angular momentum \cite{murray2017mathematical}, the satellite's kinematics is obtained as: 
    \begin{equation}
        J\omega_s = e_1v_1 + e_3v_2,
    \end{equation}
    where $J=diag(J_1,J_2,J_3)$ denotes the inertia matrix, $\omega_s$ denotes the angular velocity vector of the satellite relative to its body frame, $e_1 = [1, 0, 0]^T$, $e_3 = [0, 0, 1]^T$, and $v_i\in[-v_{i,max},v_{i,max}]$ represent the scaled reaction wheel velocities. Using the above equation and mapping the angular velocity vector to its corresponding skew-symmetric matrix:
    \begin{equation}
        \frac{d\bold{R}(t)}{dt}
    = \bold{R}(t) \left(\begin{array}{ccc}
0 & -\frac{v_2(t)}{J_3} & 0 \\
\frac{v_2(t)}{J_3} & 0 & -\frac{v_1(t)}{J_1} \\
0 & \frac{v_1(t)}{J_1} & 0 \\
\end{array} \right),
    \end{equation}
    where in this case, $\bold{R}(t)\in SO(3)$ represents the pose of the satellite. Redefining new control variables $u_g = \frac{v_1J_3}{v_{2,max}J_1}$, $v=\frac{v_2}{v_{2,max}}$, and scaling time with $\tau=\frac{v_{2,max}}{J_3}t$, it is obtained that 
    \begin{equation}
    \label{satellite equiv}
        \frac{d\bold{R}(\tau)}{d\tau}
    = \bold{R}(\tau) \Omega(\tau),
    \end{equation}
    and $v\in[-1,1]$, $u_g\in[-\frac{v_{1,max}J_3}{v_{2,max}J_1},\frac{v_{1,max}J_3}{v_{2,max}J_1}]$. It is clear that (\ref{RS model}) and (\ref{satellite equiv}) are equivalent.


\vspace*{0.1in}

\noindent{\bf Example: Spherical Rolling Robot:} 
    For a spherical rolling robot, following \cite{zhan2011design}, the kinematics describing the robot's pose is as follows:
    \begin{equation}
    \left\{
    \begin{array}{l}
        \dot{x} = \frac{u_1}{R} \frac{\sin(z - \theta_v)}{\cos y} \\
    \dot{y} = \frac{u_1}{R} \cos(z - \theta_v) \\
    \dot{z} = \frac{u_1}{R} \tan y \sin(z - \theta_v). 
    \end{array}
    \right.
    \end{equation}
    where $x, y$, and $z$ denote the Euler angles of the spherical shell relative to a ground-fixed inertial frame, according to the ZYX (or Tait-Bryan) convention. $u_1$ and $\theta_v$ represent the velocity and heading angle of the IDU related to the ground-fixed inertial frame, respectively. Also, $\dot{\theta}_v=u_2$, with $u_2$ indicating the IDU's yaw rate. $R$ denotes the spherical shell's radius.

    Following \cite{zhan2011design}, we assume the projection of geometrical center of the IDU on the ground is always coincident with the contact point of the robot with the groud. To describe the pose kinematics of the shell relative to the IDU, we define a new variable $\Bar{z}=z-\theta_v$. It can be observed that $x,y$, and $\Bar{z}$ represent the ZYX Euler angles of the shell relative to a body frame attached to the IDU. By scaling $u_1$ with $v=\frac{u_1}{R}$ and defining $u_g=-u_2$, it is obtained that
\begin{equation}
    \left\{
    \begin{array}{l}
        \dot{x} = v \frac{\sin(\Bar{z})}{\cos y} \\
    \dot{y} = v \cos(\Bar{z}) \\
    \dot{\Bar{z}} = v \tan y \sin(\Bar{z})+u_g. 
    \end{array}
    \right.
    \end{equation}
    Referring to \cite{kumar2024equivalence}, the above equation is equivalent to the model of a vehicle traveling on a unit sphere using geographic coordinates, and its equivalence to the model in $SO(3)$ is shown in \cite{kumar2024equivalence} as well.

After illustrating real-world systems with the same kinematics as the spherical CRS model, we next apply PMP to characterize the optimal control actions for the time-optimal spherical CRS problem.

\subsection{Applying PMP to derive the optimal control actions}
We now apply PMP to the time-optimal problem for~\eqref{RS model}. Similar to \cite{monroy1998non}, PMP is applied for the symplectic formalism \cite{jurdjevic1997geometric}, wherein PMP is applied on the cotangent bundle of the Lie group, $T^*SO(3)$. For details of definitions and derivations, see Appendix \ref{appd PMP} in the supplementary materials.

The Hamiltonian $H_{u_g (t), v (t)}$ can be written as 
\begin{align}
\begin{split}
\label{Halmil0}
    &H_{u_g (t), v (t)} \left(\zeta_0, \zeta \right) = \zeta_0 + v (t) h_1 (\zeta (t)) -  u_g (t) H_{12} (\zeta (t)),
\end{split}
\end{align}
where $\zeta_0\le 0$ is a constant and $\zeta(t)\in T^*SO(3)$ is the integral curve. 
Here $h_1$ and $H_{12}$ are the Hamiltonians associated with control vector fields.

\noindent
\textbf{From PMP, the following hold for an optimal path:}

1) $\zeta_0\leq0$ and is constant in $t$.

2) If $\zeta_0=0$, then $h_1$, $h_2$, and $H_{12}$ cannot all be identically zero.

3) $v$ and $u_g$ pointwise maximize $H_{u_g (t), v (t)}$.

4) $H_{u_g (t), v (t)}\equiv0$.
\vspace{0.5em}

Moreover, the evolution of the Hamiltonians from PMP are given by \cite{monroy1998non} 
\begin{align}
\label{Ham1}
    \frac{d{h}_1}{dt} &= u_g h_2, 
    \\
    \label{Ham2}
    \frac{d{h}_2}{dt} &= -v H_{12} - u_g h_1, \\
    \label{Ham3}
    \frac{d{H}_{12}}{dt} &= v h_2.
\end{align}
Here, $h_2$ is the Hamiltonian associated with a control vector field. The detailed derivation is given in Appendix~\ref{appd PMP}.






Noting that $\zeta_0=0$ corresponds to the abnormal case, we have the following lemma:

\begin{lem}
\label{lem1}
No nontrivial abnormal extremal path exists.
\end{lem}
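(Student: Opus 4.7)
The plan is to derive a contradiction by assuming the existence of a nontrivial abnormal extremal, i.e., one with $\zeta_0 = 0$, and then extracting, in sequence, the vanishing of $h_1$, $H_{12}$, and $h_2$ along the trajectory, which together force $\zeta(t)\equiv 0$ and contradict PMP condition 2.

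First I would exploit the maximization condition (PMP condition 3) together with the vanishing Hamiltonian condition (PMP condition 4). Setting $\zeta_0 = 0$ in (\ref{Halmil0}) reduces the Hamiltonian to $H_{u_g,v}(\zeta) = v\, h_1(\zeta) - u_g\, H_{12}(\zeta)$. Maximizing separately over $v \in [-1,1]$ and $u_g \in [-U_{max}, U_{max}]$ gives pointwise maximum value $|h_1(\zeta(t))| + U_{max}\,|H_{12}(\zeta(t))|$, and PMP condition 4 forces this sum to be zero almost everywhere. Since $h_1$ and $H_{12}$ are continuous functions of $t$ (they satisfy the ODEs (\ref{Ham1}) and (\ref{Ham3})), I would conclude that $h_1(\zeta(t))\equiv 0$ and $H_{12}(\zeta(t))\equiv 0$ on $[0,\bar t]$.

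Next I would differentiate these identities and use the evolution equations. From (\ref{Ham1}), $h_1\equiv 0$ implies $u_g(t)\, h_2(\zeta(t)) = 0$ a.e., and from (\ref{Ham3}), $H_{12}\equiv 0$ implies $v(t)\, h_2(\zeta(t)) = 0$ a.e. At any $t$ where $h_2(\zeta(t)) \ne 0$, both controls must vanish, giving a trivial instantaneous motion. If $h_2$ vanishes on a set of positive measure, then $v \equiv u_g \equiv 0$ on its complement, which again corresponds to no motion. The remaining possibility is that $h_2(\zeta(t))\equiv 0$ along the trajectory. But then $h_1 = h_2 = H_{12} = 0$ everywhere; since $\{l_1, l_2, L_{12}\}$ is a basis of $so(3)$ (a direct computation using the definitions of $l_1, L_{12}$ and the Lie bracket $l_2 = [L_{12},l_1]$ confirms linear independence), this forces the dual covector $\zeta(t)$ to be identically zero, contradicting PMP condition 2.

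The main obstacle I anticipate is bookkeeping the distinction between ``almost everywhere'' and ``everywhere''. The maximized Hamiltonian identity and the control relations only hold a.e., whereas the Hamiltonians $h_1, h_2, H_{12}$ are absolutely continuous (as solutions of (\ref{Ham1})--(\ref{Ham3})), so their a.e. vanishing automatically upgrades to everywhere vanishing. One must also carefully exclude the purely trivial extremal $v\equiv u_g\equiv 0$ (which leaves $\bold{R}(t)$ constant and hence is not a path at all) as the only residual case, so that the lemma's quantifier ``nontrivial'' is what rules it out.
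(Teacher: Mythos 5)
Your proposal follows essentially the same route as the paper's proof: use PMP conditions 3) and 4) to force $h_1 \equiv 0$ and $H_{12} \equiv 0$, then use the evolution equations (\ref{Ham1}) and (\ref{Ham3}) to reduce to the dichotomy between a (partially) non-moving path, dismissed as trivial, and $h_2 \equiv 0$, which contradicts condition 2). The only differences are cosmetic refinements---you compute the maximized Hamiltonian $|h_1| + U_{max}|H_{12}|$ explicitly rather than deriving sign conditions, and you spell out the linear-independence argument showing $h_1 = h_2 = H_{12} \equiv 0$ forces $\zeta \equiv 0$---both of which are implicit in the paper.
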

\begin{proof}
See Appendix \ref{appd lem1} in the supplementary materials.
\end{proof}

\begin{rem}
\label{remark e=1}
    Since  $\zeta_0\leq0$, and $\zeta_0\neq0$ by Lemma \ref{lem1}, hence, $\zeta_0<0$. Furthermore, $\zeta_0<0$ can be normalized to $\zeta_0=-1$ without loss of generality \cite{monroy1998non}. Therefore, hereafter, for a nontrivial extremal path, we treat $\zeta_0$ as $-1$.
\end{rem}

Thus far, the Hamiltonian can be further simplified with Remark \ref{remark e=1}. Furthermore, for the simplicity of symbols and the uniformity of signs of the control variables, we define:
\begin{equation}
    \mathcal{A}:=H_{12}, ~\mathcal{B}:=h_2, ~\mathcal{C}:=-h_1,
\end{equation}
accordingly, by \eqref{Ham1}-\eqref{Ham3},
\begin{equation}
    \label{d_switch}
    \frac{d\mathcal{A}}{dt}=v\mathcal{B}, ~~~~\frac{d\mathcal{B}}{dt}=-v\mathcal{A}+u_g\mathcal{C}, ~~~~\frac{d\mathcal{C}}{dt}=-u_g\mathcal{B}.
\end{equation}

\begin{rem}
\label{rem:Euler-Rodriguez}
The above set of equations can be compactly expressed as 
$$ \frac{d}{dt} \begin{bmatrix} {\mathcal A}(t) \\ {\mathcal B}(t) \\ {\mathcal C}(t) \end{bmatrix} = -\Omega(t) \begin{bmatrix} {\mathcal A} \\ {\mathcal B} \\ {\mathcal C} \end{bmatrix}. $$
When $(v,u_g) \ne (0,0)$ and is constant, one can define $\hat \Omega (t) := \frac{1}{\sqrt{v^2+u_g^2}}\Omega(t)$ and $\phi:= \sqrt{v^2+u_g^2}t$. Since $\hat{\Omega}$ is skew-symmetric, we have $e^{-\phi\hat{\Omega}}=e^{\phi\hat{\Omega}^T}=(e^{\phi\hat{\Omega}})^T$, one may express the solution as:
\begin{align}
\label{eq:A_B_C_sol}
    \begin{bmatrix} {\mathcal A}(t)\\ {\mathcal B}(t)\\ {\mathcal C}(t) \end{bmatrix} &= e^{-\phi\hat{\Omega}}
    \begin{bmatrix} {\mathcal A}(0)\\ {\mathcal B}(0)\\ {\mathcal C}(0) \end{bmatrix}
    ={\mathbf M^T(\cdot)}
    \begin{bmatrix} {\mathcal A}(0)\\ {\mathcal B}(0)\\ {\mathcal C}(0) \end{bmatrix},
\end{align}
where 
\begin{equation}
\label{eq: M_definition}
\mathbf{M}(\cdot):=e^{\phi\hat{\Omega}},
\end{equation}
and its specific calculation will be given in Subsection \ref{sec geometric meaning}
\end{rem}

Therefore, the Hamiltonian in (\ref{Halmil0}) simplifies to:

\begin{equation}
\label{Halmil}
    H_{u_g,v}=-1-v\mathcal{C}-u_g\mathcal{A},
\end{equation}

From PMP, $v$ and $u_g$ pointwise maximize $H_{u_g,v}$, therefore:
\begin{align}
    \label{bangbang1}
    v=\begin{cases} 
    -1 &\text{if } \mathcal{C}>0\\
    1 &\text{if } \mathcal{C}<0
    \end{cases}, ~~~ 
    u_g=\begin{cases} 
    -U_{max} &\text{if } \mathcal{A}>0\\
    U_{max} &\text{if } \mathcal{A}<0
    \end{cases}.
\end{align}

By (\ref{bangbang1}), it is clear that $v$ and $u_g$  switch between extreme values when $\mathcal{C}$ and $\mathcal{A}$ change sign, respectively. Next, we analyze the cases where $\mathcal{C}=0$ or $\mathcal{A}=0$.



\begin{lem}
    \label{Lem axis point}
    On a nontrivial extremal path, the following hold:
    
     (i) ${\mathcal A}$ and ${\mathcal C}$ do not take a value of $0$ simultaneously. More specifically, (a)  $|\mathcal{C}|=1$ if $\mathcal{A}=0$; (b)  $|\mathcal{A}|=\frac{1}{U_{max}}$ if $\mathcal{C}=0$. 
        
        (ii) $u_g\equiv0$ iff $\mathcal{A}\equiv0$.
        
        (iii) $v\equiv0$ iff $\mathcal{C}\equiv0$.
        
\end{lem}
\begin{proof}
We prove the statements separately.

\vspace{6pt}
     \noindent(i) $|\mathcal{C}|=1$ if $\mathcal{A}=0$; and $|\mathcal{A}|=\frac{1}{U_{max}}$ if $\mathcal{C}=0$.
     
     By Condition 4) of PMP and (\ref{Halmil}), $-1+|\mathcal{C}|=0$ if $\mathcal{A}=0$ and $-1+U_{max}|\mathcal{A}|=0$ if $\mathcal{C}=0$. Hence, $|\mathcal{C}|=1$ if $\mathcal{A}=0$ and $|\mathcal{A}|=\frac{1}{U_{max}}$ if $\mathcal{C}=0$.
     
     \vspace{6pt}
     \noindent(ii) $u_g\equiv0$ iff $\mathcal{A}\equiv0$.
     
     (Sufficiency) If $\mathcal{A}\equiv0$, then $|\mathcal{C}|\equiv1$ by (i). Furthermore, by (\ref{d_switch}), $\frac{d\mathcal{A}}{dt}=v\mathcal{B}\equiv0$, which implies $\mathcal{B}\equiv0$ since $v\neq0$ by \eqref{bangbang1}. Hence, $\frac{d\mathcal{B}}{dt}=-v\mathcal{A}+u_g\mathcal{C}\equiv0$. Therefore, $u_g\equiv0$. 
     
     (Necessity) If $u_g\equiv0$, then $\mathcal{A}\equiv0$ since by (\ref{bangbang1}), $u_g\neq0$ if $\mathcal{A}\neq0$. 

      \vspace{6pt}
     \noindent(iii) $v\equiv0$ iff $\mathcal{C}\equiv0$.
     
     (Sufficiency) If $\mathcal{C}\equiv0$, then $|\mathcal{A}|\equiv\frac{1}{U_{max}}$  by (i), and $u_g\neq0$ by (\ref{bangbang1}). Furthermore, by (\ref{d_switch}), $\frac{d\mathcal{C}}{dt}=-u_g\mathcal{B}\equiv0$, which implies $\mathcal{B}\equiv0$. Hence, $\frac{d\mathcal{B}}{dt}=-v\mathcal{A}+u_g\mathcal{C}\equiv0$, which implies that $v\equiv0$. 
     
     (Necessity) If $v\equiv0$, then $\mathcal{C}\equiv0$ since by (\ref{bangbang1}), $v\neq0$ if $\mathcal{C}\neq0$. 
\end{proof}

Therefore, the optimal control actions are given by
\begin{align}
    \label{bangbang2}
    v=\begin{cases} 
    -1 &\text{if } \mathcal{C}>0\\
    0 &\text{if } \mathcal{C}\equiv0\\
    1 &\text{if } \mathcal{C}<0
    \end{cases}, ~~~ 
    u_g=\begin{cases} 
    -U_{max} &\text{if } \mathcal{A}>0\\
    0 &\text{if } \mathcal{A}\equiv0\\
    U_{max} &\text{if } \mathcal{A}<0
    \end{cases}.
\end{align}

\subsection{Geometric meaning of the optimal control actions}
\label{sec geometric meaning}
Because the optimal controls are piecewise constant and take only extreme/zero values, the optimal path is a concatenation of a small set of motion primitives, which result from different combinations of the values of $v$ and $u_g$. 

For constant $v$ and $u_g$, by differentiating (\ref{RS model2}), substituting (\ref{RS model1}) and (\ref{RS model3}), and simplifying terms, we obtain the second derivative of $\bold{T_v}(t)$:
\begin{equation}
\label{Eq. ddTv}
    \frac{d^2\bold{T_v}}{dt^2}=-(v^2(t)+u_g^2(t))\bold{T_v}(t),
\end{equation}
which can be seen as a spring-mass system. When $v=V$ and $u_g=U$ are constants, the solution of (\ref{Eq. ddTv}) is a periodic function $\bold{T_v}(t)=\mathbf {T_v}(0)\cos(\omega t)
+
\frac{\dot{\mathbf T}_\mathbf{v}(0)}{\omega}\sin(\omega t),
\qquad
\omega=\sqrt{V^2+U^2}.$ with angular frequency $\omega=\sqrt{V^2+U^2}$. Hence, from (\ref{RS model1}), $\bold{X_v}(t)=\bold{X_v}(0)+
\frac{V}{\omega}\,\mathbf{T_v}(0)\sin(\omega t)
+
\frac{V}{\omega^2}\,\dot{\mathbf T}_\mathbf{v}(0)\bigl(1-\cos(\omega t)\bigr)$ is also periodic with angular frequency $\omega$. When $|v|=1$, the radius of the periodic motion equals $\frac{|v|}{\omega}=\frac{|V|}{\sqrt{V^2+U^2}}=\frac{1}{\sqrt{1+U^2}}$, therefore, defining $r:=\frac{1}{\sqrt{1+U_{max}^2}}$, we have
\begin{itemize}
    \item $u_g=\pm U_{max}$, $v=\pm1$ corresponds to a tight turn with radius $r$, denoted as a ``$C$" segment.
    \item $u_g=0$, $v=\pm1$ corresponds to a great circular arc with radius 1, denoted as a ``$G$" segment. 
    \item $u_g=\pm U_{max}$, $v=0$ corresponds to a turn-in-place motion with a turning rate of $\sqrt{0 + U_{max}^2} = U_{max}$, denoted as a ``$T$" segment.
\end{itemize}



    If specific actions need to be declared, we will use ``$L$" and ``$R$" to denote $u_g=U_{max}$ and $u_g=-U_{max}$, respectively, and superscripts ``$+$", ``$0$", and ``$-$" to represent $v=1$, $v=0$ and $v=-1$, respectively. All possible segment types are visualized in Fig. \ref{fig:segments}. Furthermore, we will use subscripts to denote the arc angle of a segment when they need to be specified. For instance, ``$L^+_\alpha G^+_\beta R^-_\gamma R^0_\sigma$" represents a path that is a concatenation of a tight turn segment (with $u_g=U_{max}$, $v=1$) with an angle of $\alpha$; an arc segment of a great circle (with $u_g=0$, $v=1$) with an angle of $\beta$; a tight turn segment (with $u_g=-U_{max}$, $v=-1$) with an angle of $\gamma$; and a segment of a turn-in-place motion (with $u_g=-U_{max}$, $v=0$) with an angle of $\sigma$.
    \begin{figure}[h]
    \centering
    \subfigure[$L^+$, $R^+$ and $G^+$]{\includegraphics[width=0.24\textwidth]{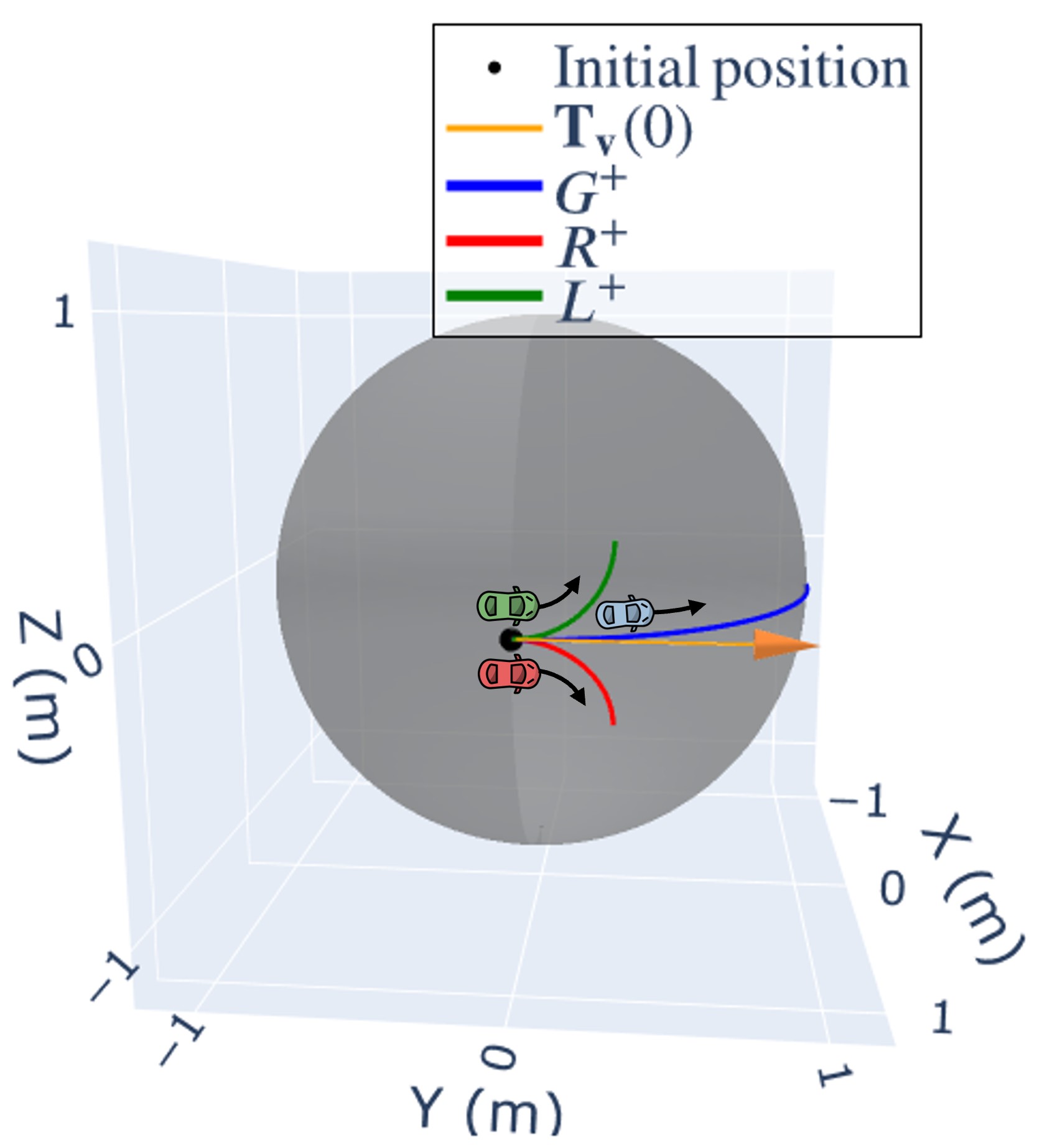}}
    \subfigure[$L^-$, $R^-$ and $G^-$]{\includegraphics[width=0.24\textwidth]{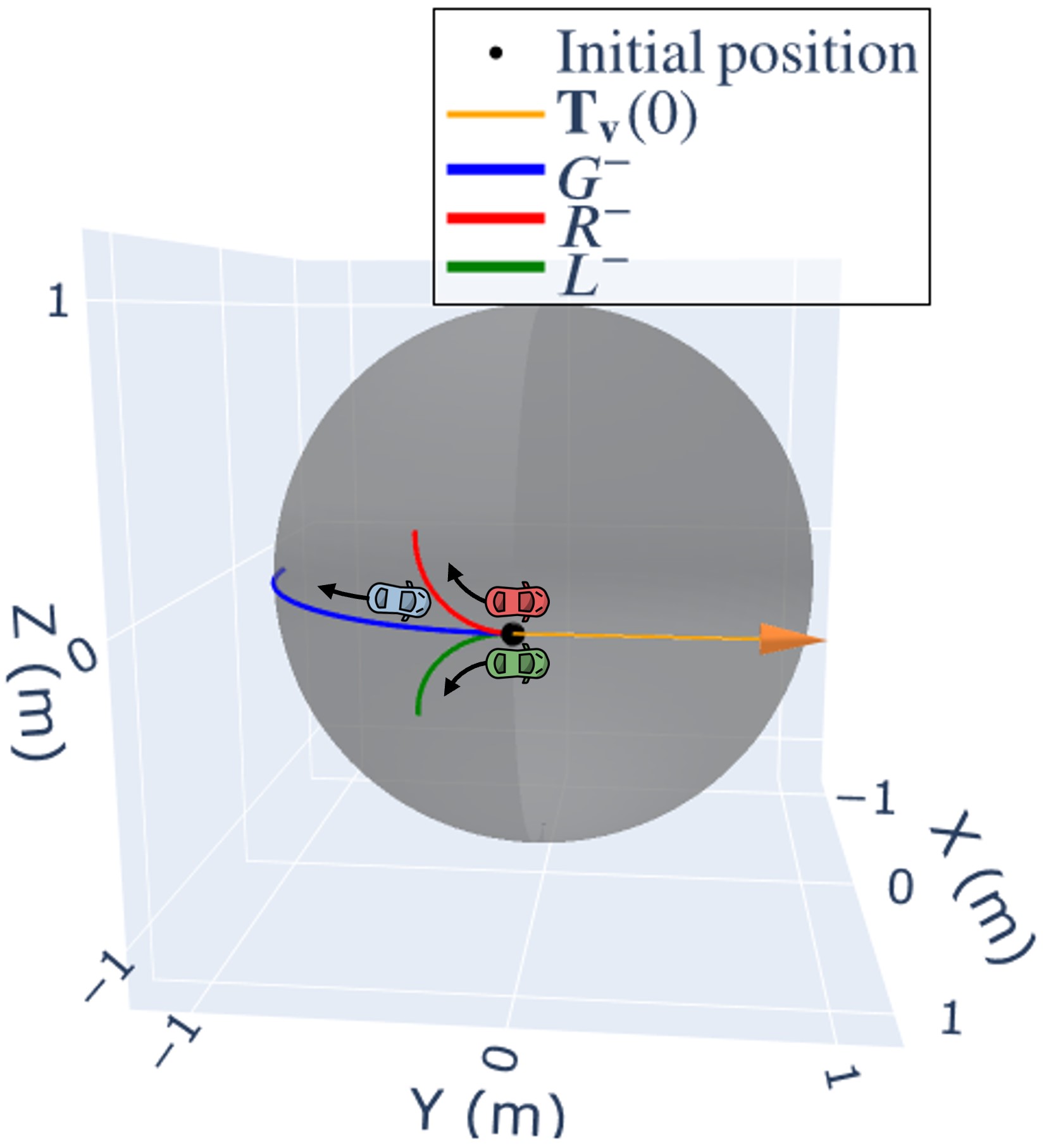}}
    \centering
    \subfigure[$R^0$]{\includegraphics[width=0.24\textwidth]{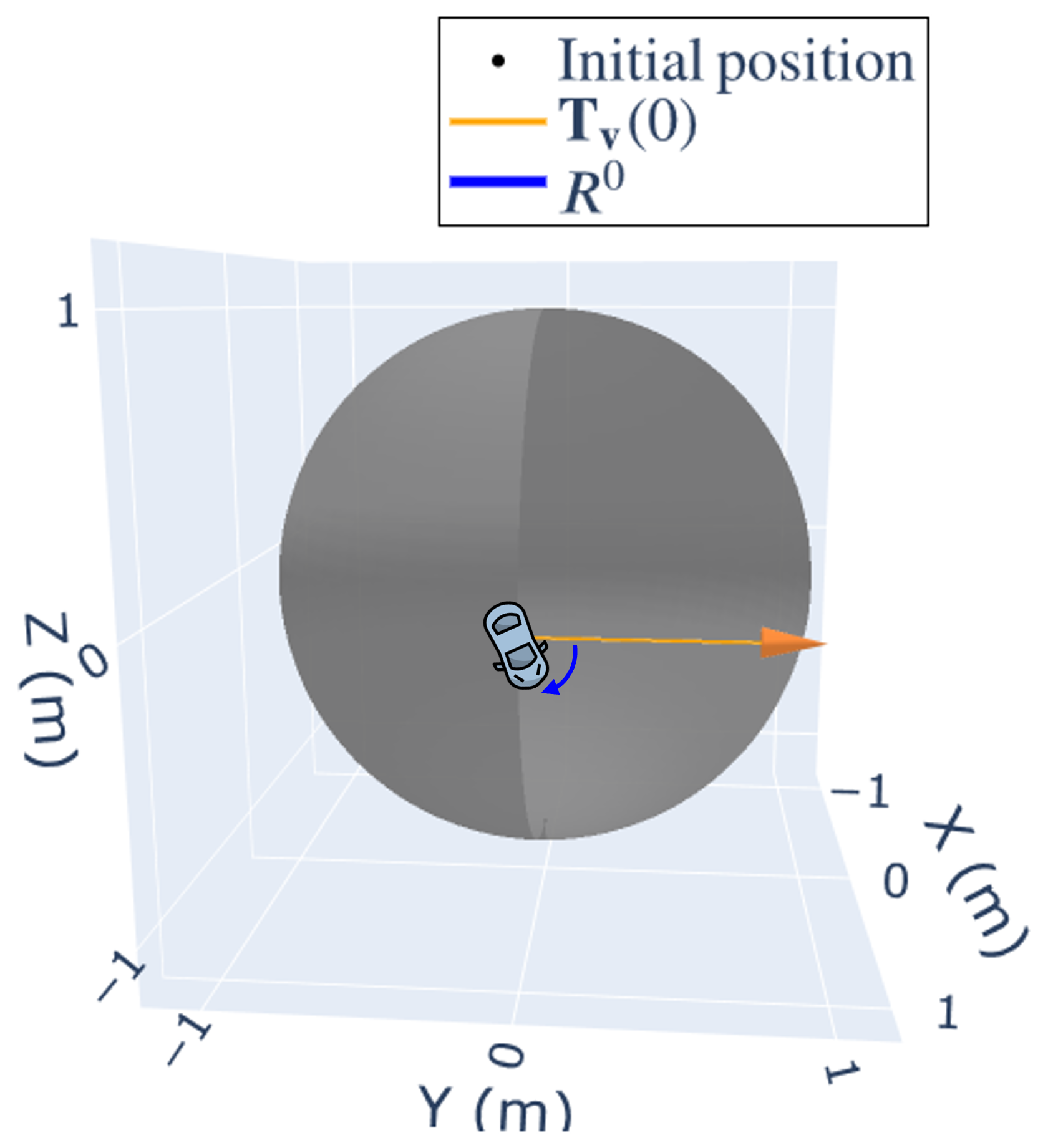}}
    \centering
    \subfigure[$L^0$]{\includegraphics[width=0.24\textwidth]{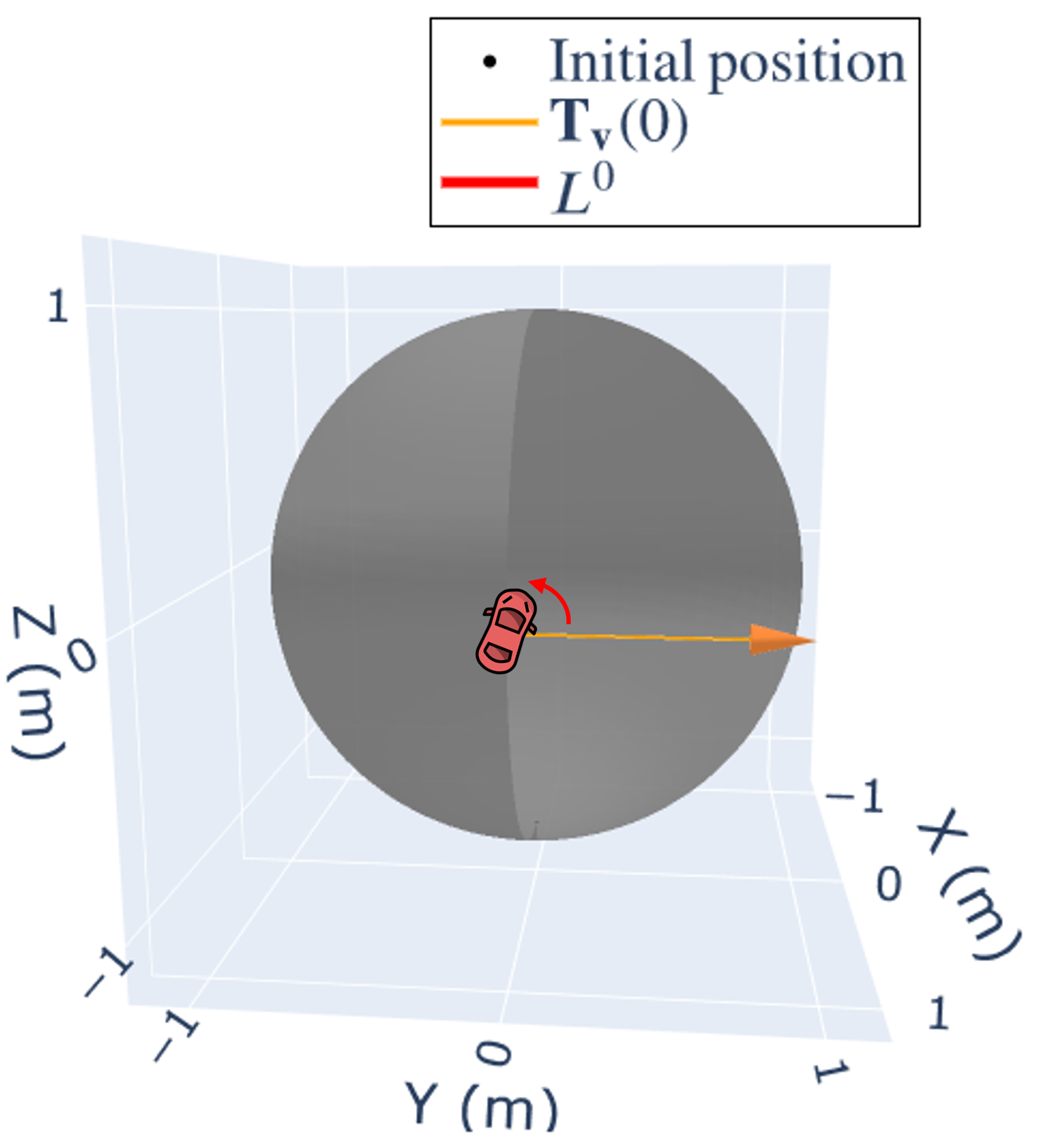}}
    \caption{Possible segment types on an extremal path}
    \label{fig:segments}
    \end{figure}

The above notation provides a compact way to describe any optimal path as a concatenation of the segments. We now leverage this piecewise-constant structure to obtain explicit closed-form state transitions for each segment type

    Recall the differential equation in (\ref{RS model}), since $v$ and $u_g$ remain constant on each segment, its solution on each segment is
\begin{equation}
\label{R sol}
    \bold{R}(t)=\bold{R}(0)\mathbf{M}(\cdot),
\end{equation}
where $\mathbf M(\cdot)$ is as defined in \eqref{eq: M_definition} and can be calculated using the Euler-Rodriguez formula \cite{darbha2023optimal}:
\begin{equation}
\label{eq: Euler-Rodriguez}
    \mathbf M(\cdot)=[I + \hat \Omega \sin \phi + \hat \Omega^2(1-\cos\phi) ]^T.
\end{equation}
Here, $\hat{\Omega}$ contains $u_g$ and $v$, and $\phi$ corresponds to the angle of the segment. We substitute specific values of $v$ and $u_g$ for each type of segment and calculate the corresponding $\bold{M}(\cdot)$ using \eqref{eq: Euler-Rodriguez}. Note that here and throughout the paper, we use the shorthand notations $c(\cdot):=\cos(\cdot)$, and $s(\cdot):=\sin(\cdot)$. The following is obtained
\begin{align}
\label{M_G+}
    \bold{M}_{G^+}(\phi) &= \left(\begin{array}{ccc}
c(\phi) & -s(\phi) & 0 \\
s(\phi) & c(\phi) & 0 \\
0 & 0 & 1 \\
\end{array} \right),\\
\label{M_L+}
    \bold{M}_{L^+}(r, \phi) &= \left( \begin{array}{ccc}
\eta_{11} & -rs(\phi) & \eta_{13} \\
rs(\phi) & c(\phi) & -\eta_{23} \\
\eta_{13} & \eta_{23} & \eta_{33} \\
\end{array} \right),\\
\label{M_R+}
    \bold{M}_{R^+}(r, \phi) &= \left( \begin{array}{ccc}
\eta_{11} & -rs(\phi) & -\eta_{13} \\
rs(\phi) & c(\phi) & \eta_{23} \\
-\eta_{13} & -\eta_{23} & \eta_{33} \\
\end{array} \right),\\
\label{M_L0}
    \bold{M}_{L^0}(\phi)& = \begin{pmatrix}
1 & 0 & 0 \\
0 & c(\phi) & -s(\phi) \\
0 & s(\phi) & c(\phi)
\end{pmatrix},\\
\label{M_G-}
    \bold{M}_{G^-}(\phi) &= \bold{M}^T_{G^+}(\phi),\\
\label{M_L-}
    \bold{M}_{L^-}(r, \phi)& = \bold{M}^T_{R^+}(r, \phi),\\
\label{M_R-}
    \bold{M}_{R^-}(r, \phi)& = \bold{M}^T_{L^+}(r, \phi),\\
\label{M_R0}
    \bold{M}_{R^0}(\phi)& = \bold{M}^T_{L^0}(\phi),
\end{align}
where $\eta_{11} = 1 - (1 - c(\phi))r^2, \quad \eta_{13} = (1 - c(\phi))r\sqrt{1 - r^2}, \eta_{23} = s(\phi)\sqrt{1 - r^2}, \quad \eta_{33} = c(\phi) + (1 - c(\phi))r^2.$

    Another notation that is important for the later phase portrait analysis is a cusp, which corresponds to a change in the value of $v$ between $1$ and $-1$, denoted as ``$|$". Fig. \ref{fig:cusp} illustrates a cusp on a $C|C$ path. For instance, ``$GC|CT$" represents a path starting as a segment of a great circle arc, followed by two tight turns joined by a cusp, and concluding with a turn-in-place motion.

    \begin{figure}[h]
    \centering
    \setlength{\abovecaptionskip}{0pt}
    \includegraphics[width=0.24\textwidth]{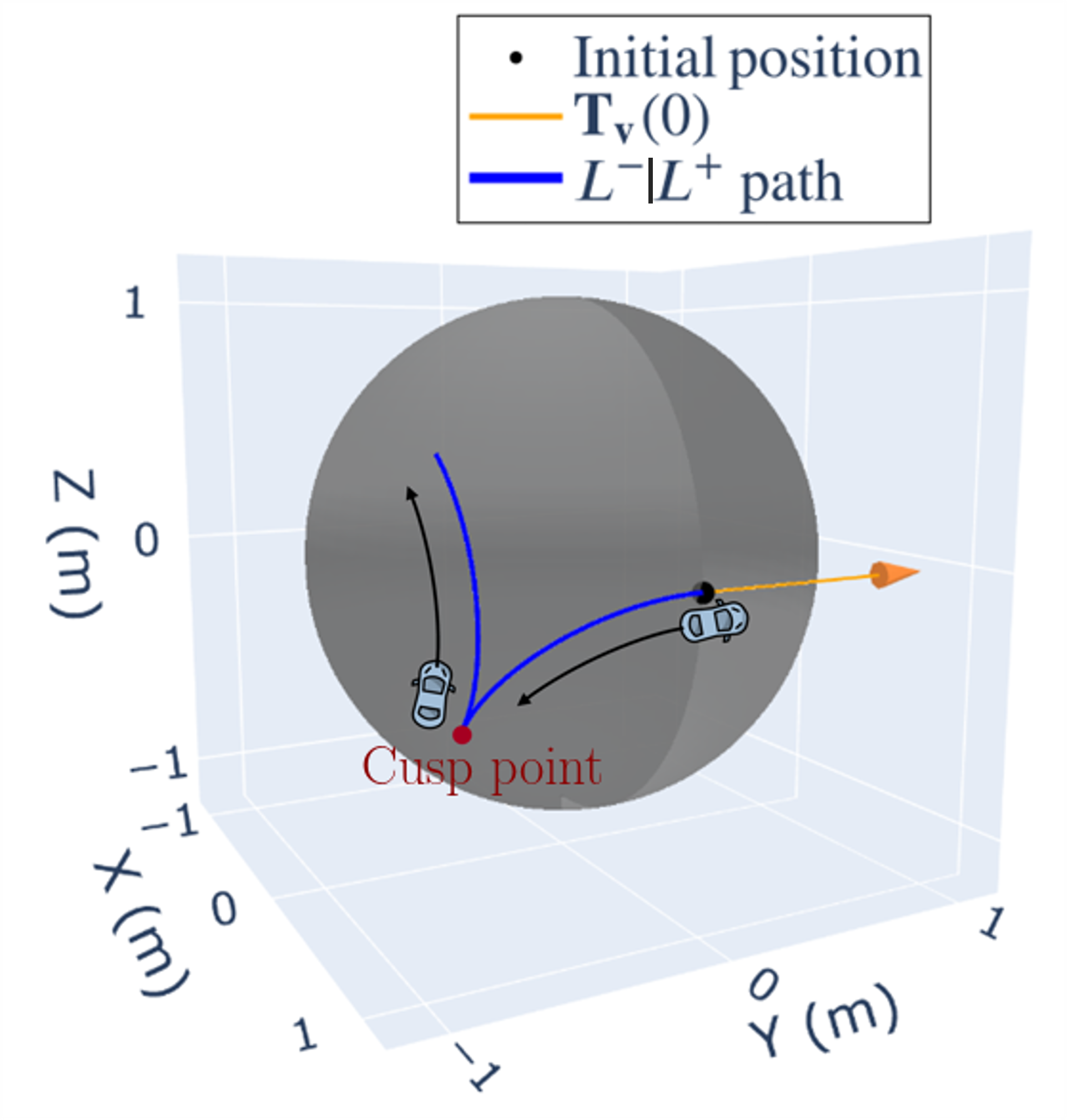}
    \caption{Cusp on a $C|C$ path}
    \label{fig:cusp}
\end{figure}

\section{Brief Analysis of the Optimal Paths \label{sec3}}
In this section, we adopt a phase portrait based approach to characterize optimal path types.

Since the control actions $v$ and $u_g$ depend on $\mathcal{C}$ and $\mathcal{A}$, respectively, the optimal path can be characterized by the evolution of $\mathcal{C}$ and $\mathcal{A}$. By (\ref{Halmil}) and further with $H_{u_g,v}\equiv0$ (from PMP), a clear $\mathcal{A}$-$\mathcal{C}$ relation is obtained:
\begin{equation}
\label{A-C relation}
    1+v\mathcal{C}+u_g\mathcal{A}\equiv0.
\end{equation}
Together with (\ref{bangbang2}), the shape of the $\mathcal{A}-\mathcal{C}$ phase portrait can be shown to form a diamond-like figure as in Fig. \ref{fig:A-C};  the evolution direction of $\mathcal{A}$ and $\mathcal{C}$ on the phase portrait needs further analysis.

\begin{figure}[h]
    \centering
    \setlength{\abovecaptionskip}{0pt}
    \includegraphics[width=0.26\textwidth]{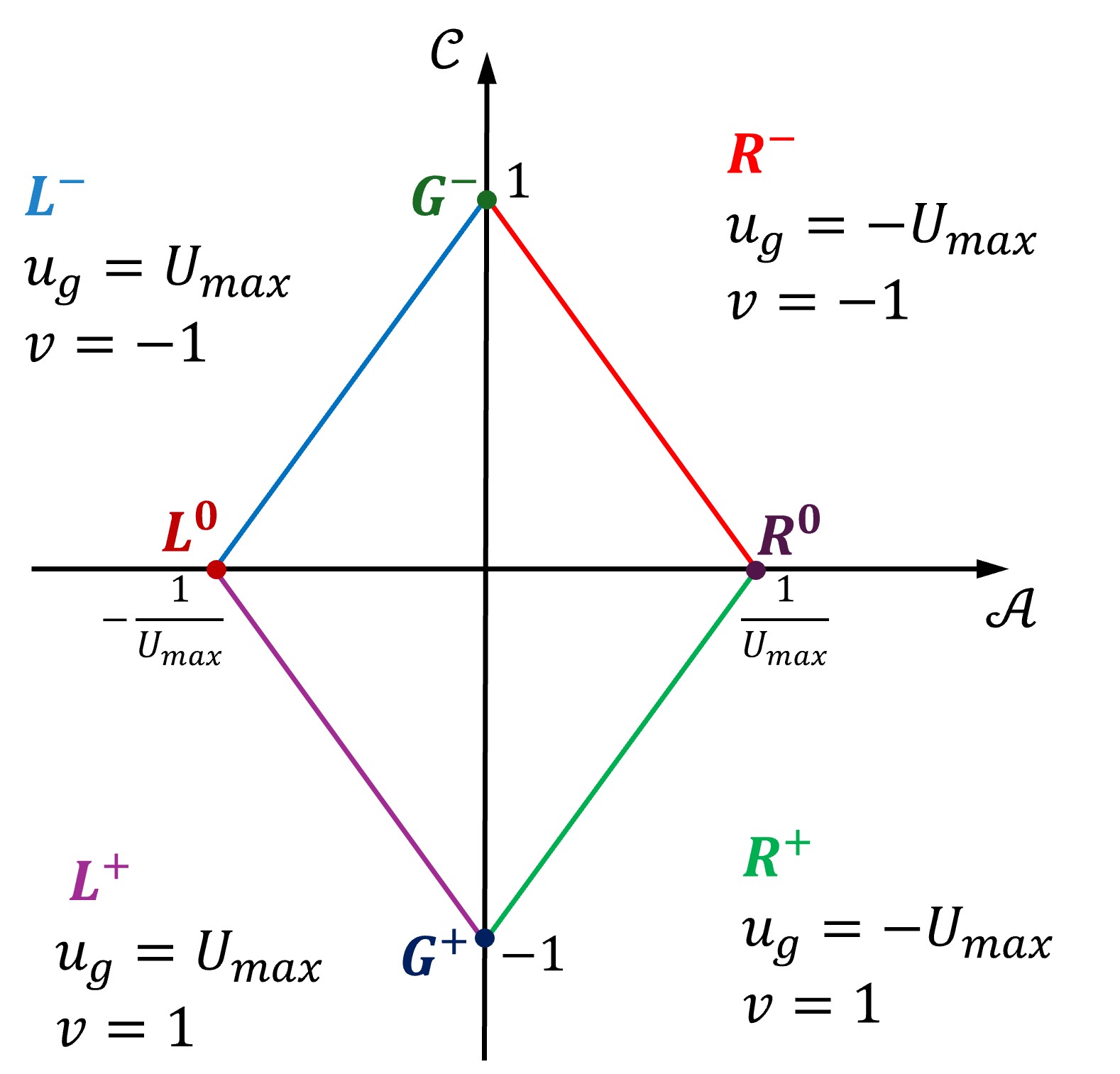}
    \caption{Phase portrait of $\mathcal{A}-\mathcal{C}$}
    \label{fig:A-C}
\end{figure}

By Lemma \ref{Lem axis point}, $\mathcal{A}$ and $\mathcal{C}$ cannot have any common zero. This property is also illustrated in Fig.~\ref{fig:A-C}, where the phase portrait does not pass through the origin. Therefore, by (\ref{bangbang2}), $u_g$ and $v$ cannot switch values at the same time instance.


 A cusp can only occur between two $C$ segments with the value of $u_g$ unchanged. By (\ref{bangbang2}), the points $(-\frac{1}{U_{max}},0)$ and $(\frac{1}{U_{max}},0)$ in the phase portrait correspond to either cusps or $T$ segments, and $(0,-1)$ and $(0,1)$ correspond to either inflection points or $G$ segments. 

By further inspecting Fig. \ref{fig:A-C}, the possible segment transitions on an optimal path are as follows: 
\begin{itemize}
    \item [(i)] between $C$ and $G$ segments ($L^-/R^-\leftrightarrow G^-$ and $L^+/R^+\leftrightarrow G^+$);
    \item[(ii)] between two $C$ segments joined by an inflection point ($L^-\leftrightarrow R^-$ and $L^+\leftrightarrow R^+$);
    \item[(iii)]  between $C$ and $T$ segments ($L^-/L^+\leftrightarrow L^0$ and $R^-/R^+\leftrightarrow R^0$); and 
    \item[(iv)]between two $C$ segments joined by a cusp ($L^-\leftrightarrow L^+$ and $R^-\leftrightarrow R^+$).
\end{itemize}

\begin{lem}
\label{clock/counter B}
    A path evolves clockwise in the $\mathcal{A}-\mathcal{C}$ portrait if $\mathcal{B}<0$, and counter-clockwise if $\mathcal{B}>0$.
\end{lem}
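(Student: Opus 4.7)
The plan is to detect the sense of rotation of the planar trajectory $(\mathcal{A}(t),\mathcal{C}(t))$ about the origin by computing the scalar cross product $\mathcal{A}\dot{\mathcal{C}}-\mathcal{C}\dot{\mathcal{A}}$, whose sign by the standard orientation convention distinguishes counter-clockwise (positive) from clockwise (negative) motion. Substituting the costate dynamics from \eqref{d_switch}, namely $\dot{\mathcal{A}}=v\mathcal{B}$ and $\dot{\mathcal{C}}=-u_g\mathcal{B}$, I would immediately obtain
\begin{equation*}
\mathcal{A}\dot{\mathcal{C}}-\mathcal{C}\dot{\mathcal{A}} \;=\; -\mathcal{B}\bigl(u_g\mathcal{A}+v\mathcal{C}\bigr).
\end{equation*}

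To collapse the right-hand side, I would invoke the Hamiltonian identity \eqref{A-C relation} (which comes from Condition 4 of PMP, $H_{u_g,v}\equiv 0$), rewritten as $v\mathcal{C}+u_g\mathcal{A}=-1$. This yields $\mathcal{A}\dot{\mathcal{C}}-\mathcal{C}\dot{\mathcal{A}}=\mathcal{B}$, so the sign of $\mathcal{B}$ alone governs the direction of traversal, and the lemma follows at once.

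The only subtlety I anticipate is checking that clockwise versus counter-clockwise is well-defined, i.e., that the trajectory in the $\mathcal{A}$-$\mathcal{C}$ plane never passes through the origin. But this is immediate from \eqref{A-C relation}: $(\mathcal{A},\mathcal{C})=(0,0)$ would force $1=0$, a contradiction. Hence there is no real obstacle; the whole argument is a two-line calculation once the planar cross product is written down, with the crux being the use of the vanishing Hamiltonian to eliminate the controls $v$ and $u_g$ from the expression. I would make no appeal to the explicit bang-bang formulas \eqref{bangbang2}, which keeps the argument uniform across $C$, $G$, and $T$ segments (the latter two being instantaneously fixed points of the phase portrait, trivially consistent with any orientation).
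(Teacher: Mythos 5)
Your proof is correct, but it takes a genuinely different route from the paper's. The paper proves Lemma \ref{clock/counter B} by a region-by-region sign check that leans on the bang-bang law \eqref{bangbang2}: for $\mathcal{B}<0$, since $u_g=-U_{max}$ in the right half plane of Fig.~\ref{fig:A-C}, $\frac{d\mathcal{C}}{dt}=-u_g\mathcal{B}<0$ there, symmetrically $\frac{d\mathcal{C}}{dt}>0$ in the left half plane, and at the vertices $(0,\pm 1)$ the sign of $\frac{d\mathcal{A}}{dt}=v\mathcal{B}$ fixes the horizontal direction of crossing --- down on the right, up on the left, rightward at the top, leftward at the bottom, hence clockwise. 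Your argument instead computes the planar angular momentum $\mathcal{A}\dot{\mathcal{C}}-\mathcal{C}\dot{\mathcal{A}}=-\mathcal{B}\,(u_g\mathcal{A}+v\mathcal{C})=\mathcal{B}$, using only the costate equations \eqref{d_switch} and the vanishing-Hamiltonian identity \eqref{A-C relation}, so the sign of $\mathcal{B}$ reads off the orientation in one stroke. What your route buys: no case analysis over quadrants and vertices, no appeal to the explicit control law, a quantitative by-product (the angular rate about the origin is exactly $\mathcal{B}/(\mathcal{A}^2+\mathcal{C}^2)$), and a built-in check that the trajectory avoids the origin. What the paper's route buys: it is tied directly to the geometry of the phase portrait that later lemmas reuse, making explicit which control drives the motion on each edge of the diamond and at which corners transitions occur. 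One half-sentence worth adding to your write-up: the lemma's ``clockwise/counter-clockwise'' refers to traversal of the level curve $U_{max}|\mathcal{A}|+|\mathcal{C}|=1$, and since that curve encloses the origin, the rotation sense about the origin (which is what your cross product measures) coincides with the traversal sense along the curve, so your criterion is indeed equivalent to the paper's.
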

\begin{proof}
    Recall from (\ref{d_switch}) the relationships between $\mathcal{A}$, $\mathcal{B}$, and $\mathcal{C}$. Suppose $\mathcal{B}<0$. Since $u_g=-U_{max}$ within the right half plane of Fig. \ref{fig:A-C}, by (\ref{d_switch}), $\frac{d\mathcal{C}}{dt}=-u_g\mathcal{B}<0$ within the right half plane. Similarly, it can be obtained that $\frac{d\mathcal{C}}{dt}>0$ within the left half plane. At point $(0,1)$, since $v=-1$, by (\ref{d_switch}), $\frac{d\mathcal{A}}{dt}=v\mathcal{B}>0$. Similarly, it can be obtained that $\frac{d\mathcal{A}}{dt}<0$ at $(0,-1)$. Therefore, the path evolves clockwise.

    With a similar proof for $\mathcal{B}>0$, the lemma is proved.
\end{proof}


%

It is also important to note that, by (\ref{d_switch}), $\frac{d(\mathcal{A}^2(t)+\mathcal{B}^2(t)+\mathcal{C}^2(t))}{dt}=2\big(\mathcal{A}(t)\frac{d\mathcal{A}(t)}{dt}+\mathcal{B}(t)\frac{d\mathcal{B}(t)}{dt}+\mathcal{C}(t)\frac{d\mathcal{C}(t)}{dt}\big)\equiv 0$. Hence, we define
\begin{equation}
\label{ABC constant}
    g:=\mathcal{A}^2(t)+\mathcal{B}^2(t)+\mathcal{C}^2(t),
\end{equation}
which is a non-negative constant.



From the phase portrait in Fig.~\ref{fig:A-C}, the role of the parameter $g$ can be understood as follows. Since $\mathcal{A}^2(t)+\mathcal{B}^2(t)+\mathcal{C}^2(t)\equiv g$ is invariant, the evolution of $(\mathcal{A},\mathcal{C})$ in the phase portrait is constrained by $\mathcal{A}^2(t)+\mathcal{C}^2(t)\le g$. When $g<1$, this implies that the points $(\mathcal{A},\mathcal{C})=(0,\pm1)$ cannot be reached, and consequently the evolution in the phase portrait is confined to either the left or the right half plane. In contrast, when $g\ge 1$, these boundary points become reachable, allowing transitions between the two half planes. Therefore, the value $g=1$ serves as a critical threshold, leading to different admissible switching structures. This motivates the classification of time-optimal paths into three cases: (1) $g=1$; (2) $g>1$; (3) $g<1$. These three cases will be analyzed in detail in Sections \ref{sec 3.1}, \ref{sec3.2}, and \ref{sec3.3}, respectively. We provide an overview of the connections between the upcoming lemmas, corollaries, and propositions with Theorem \ref{thm1}, in Fig. \ref{fig:g=1 diagram_all}, along with summaries of the key results.
\begin{figure}[h]
    \centering
    \setlength{\abovecaptionskip}{0pt}
    \includegraphics[width=0.48\textwidth]{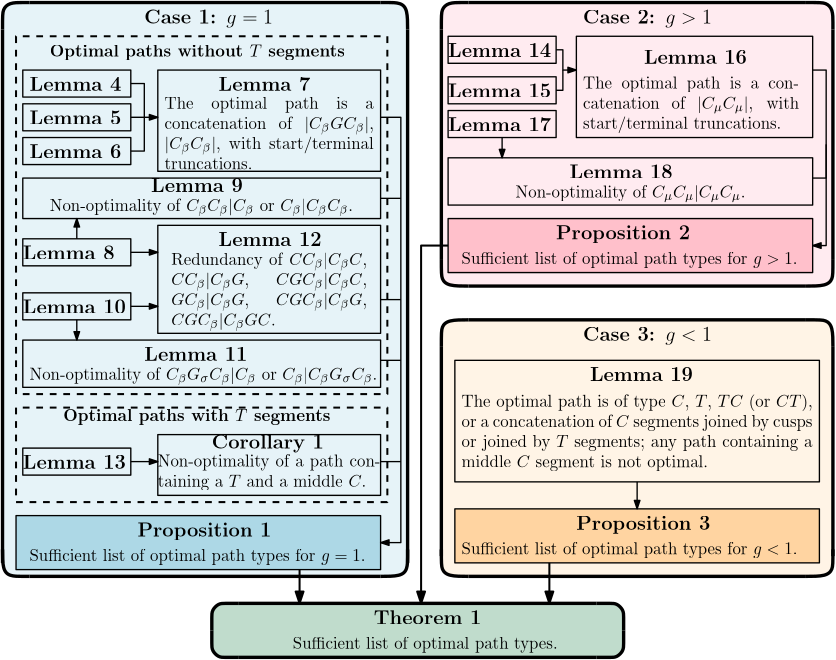}
    \caption{Connections between  proofs}
    \label{fig:g=1 diagram_all}
\end{figure}

Moreover, the analysis throughout the paper relies heavily on path substitution arguments to eliminate candidate paths from the sufficient list of time-optimal path types. A candidate path is eliminated if it is shown to be either non-optimal or redundant\footnote{If a path is redundant, it means that there is always an alternative path of the same cost, belonging to the sufficient list, with fewer segments.}. Non-optimality is established by showing that either (a) there exists an alternative path with strictly shorter travel time, or (b) there exists an alternative path with equal travel time that is itself known to be non-optimal, implying that the candidate cannot be optimal. Redundancy is established by showing that an alternative path with equal travel time but fewer segments exists. The path substitution arguments used throughout the paper are summarized in Table~\ref{tab:substitutions} to provide a concise reference for the eliminations. Recall the value of angles are given in Table \ref{tab: key-symbols}.

\begin{table}[t]
\centering
\caption{Summary of path substitutions used to eliminate non-optimal or redundant path candidates.}
\label{tab:substitutions}
\setlength{\tabcolsep}{5.7pt}
\renewcommand{\arraystretch}{1.05}
\begin{tabular}{c p{0.26\linewidth} p{0.29\linewidth} c}
\hline
\textbf{Lemma} & \textbf{Eliminated path} & \textbf{Alternative path} & \textbf{Comparison} \\
\hline
\ref{GC|C|C non optimal} & $C_{\pi+\alpha}$ & $C_{\pi-\alpha}$ & shorter \\[2pt]
\ref{CC|C non optimal} & $C_\beta C_\beta|C_\beta$ & $ C_{2\beta}|C_\beta$ & equal  \\[2pt]
\ref{lem CGCC non optimal} & $C_\beta G_\sigma C_\beta|C_\beta$ & $C_{2\beta}|C_\beta G_\sigma$ & equal  \\[2pt]
\ref{lem redundent} & $CC_\beta|C_\beta C$ & $C|C$ & equal \\[2pt]
\ref{lem redundent} & $CC_\beta|C_\beta G$ & $C|C_\beta G$  & equal \\[2pt]
\ref{lem redundent} & $CGC_\beta|C_\beta C$ & $CGC_\beta|C$ & equal \\[2pt]
\ref{lem redundent} & $GC_\beta|C_\beta G$ & $C|C_\beta G$ & equal \\[2pt]
\ref{lem redundent} & $CGC_\beta|C_\beta G$ & $C|C_\beta G$ & equal \\[2pt]
\ref{lem redundent} & $CGC_\beta|C_\beta GC$ & $CGC_\beta|C$ & equal \\[2pt]
\ref{g=1 TC non opt} & $T_\rho C_\pi$ ($U_{max}=1)$ & $C_\pi G_\rho$ & equal  \\[2pt]
\ref{lem g>1 CC|C nonoptimal} & $C_\mu C_\mu| C_\mu C_\mu$ & $C_{\epsilon-\mu}C_\mu|C_\mu C_{\epsilon-\mu}$ or $C_{\mu-\epsilon}|C_\mu|C_\mu|C_{\mu-\epsilon}$ & shorter \\[2pt]
\hline
\end{tabular}
\end{table}

\subsection{Overview of the case $g=1$}
We first outline the proof of the main result for the case $g=1$ by explicitly listing below the key results on which it is built; the proofs of these results will follow in detail in the following section.


\begin{defn}
   Throughout this paper, we define the angle $\beta=\arctan(\frac{1}{\sqrt{U_{max}^4-1}})+\frac{\pi}{2}$, which will be utilized extensively.
\end{defn}

\begin{defn}
   The starting and terminal segments of any candidate optimal path with at least $n \ge 3$ segments will sometimes be referred to as boundary segments, and the other $n-2$ segments will be referred to as middle segments. For example, in the path $CC_\beta|C$, the middle segment is $C_\beta$ and in  $C|C_\beta GC_\beta|C$, the middle segments are $C_\beta, G$. 
\end{defn}

\noindent\textbf{Lemma \ref{lem G inflection}.} \textit{For $g=1$ and $U_{max}\geq1$ (or $r\leq\frac{1}{\sqrt{2}}$), an optimal path without $T$ segments is a concatenation of $|C_\beta GC_\beta|$ and $|C_\beta C_\beta|$ sub-paths, where the start/terminal sub-paths can be truncated\footnote{The start sub-paths may be truncated to $CGC_\beta|$, $GC_\beta|$, $C|$, or $CC_\beta|$, and the terminal sub-paths may be truncated to $|C_\beta GC$, $|C_\beta G$, $|C$, or $|C_\beta C$.}.}

\noindent\textbf{Lemma \ref{CC|C non optimal}.} \textit{A $C_\beta C_\beta|C_\beta$ (or $C_\beta|C_\beta C_\beta$) path is not optimal.}

\vspace{4pt}
\noindent\textbf{Lemma \ref{lem CGCC non optimal}.} \textit{For $\sigma\geq0$, a path of type $C_\beta G_\sigma C_\beta|C_\beta$ (or $C_\beta|C_\beta G_\sigma C_\beta$) is not optimal.}

\vspace{4pt}
\noindent\textbf{Lemma \ref{lem redundent}.} 
\textit{Paths of type $CC_\beta|C_\beta C$, $CC_\beta|C_\beta G$, 
$CGC_\beta|C_\beta C$, $GC_\beta|C_\beta G$, $CGC_\beta|C_\beta G$, 
and $CGC_\beta|C_\beta GC$ are redundant. In particular, each path can be replaced as follows:
\begin{enumerate}
  \item $CC_\beta|C_\beta C \;\rightarrow\; C|C$
  \item $CC_\beta|C_\beta G \;\rightarrow\; C|C_\beta G$
  \item $CGC_\beta|C_\beta C \;\rightarrow\; CGC_\beta|C$
  \item $GC_\beta|C_\beta G \;\rightarrow\; C|C_\beta G$
  \item $CGC_\beta|C_\beta G \;\rightarrow\; C|C_\beta G$
  \item $CGC_\beta|C_\beta GC \;\rightarrow\; CGC_\beta|C$
\end{enumerate}
}

\vspace{4pt}
\noindent\textbf{Corollary \ref{cor TC}.} \textit{For $g=1$ and $U_{max}\geq1$ (or $r\leq\frac{1}{\sqrt{2}}$), a path containing a $T$ segment and a middle $C$ segment is not optimal.}
\vspace{4pt}

Given these key results, we now prove how the main result for $g=1$, stated in Proposition \ref{prop2}, can be obtained.

\begin{prop}
\label{prop2}
    For $g=1$ and $U_{max}\geq1$ (or $r\leq\frac{1}{\sqrt{2}}$), the optimal path may be restricted to the following types, together with their symmetric forms\footnote{By the term ``symmetric form", we mean the reversal of a string, for example, the symmetric form of the path type $C|C_\beta G$ is $GC_\beta|C$.}: 
    
    $C$,\; $G$,\; $T$,\; $CC$,\; $GC$,\; $C|C$,\; $TC$,
    
    $CC_\beta|C$,\; $CGC$,\; $C|C_\beta G$,\; $CTC$,
    
    $C|C_\beta C_\beta|C$,\; $CGC_\beta|C$,\; 
    $C|C_\beta GC_\beta|C$,  
    
    where $\beta=\arctan(\frac{1}{\sqrt{U_{max}^4-1}})+\frac{\pi}{2}$.
\end{prop}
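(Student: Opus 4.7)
The plan is to prove Proposition \ref{prop2} by splitting on whether the path contains a $T$ segment, and then applying the non-optimality and redundancy results from the preceding lemmas to cut down the admissible structures.

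In the $T$-free case, Lemma \ref{lem G inflection} lets one write the path as a concatenation of middle sub-paths of type $|C_\beta GC_\beta|$ or $|C_\beta C_\beta|$ joined by cusps, flanked by truncated boundary sub-paths. I would organize the argument by the number $k$ of middle sub-paths. If $k\ge 2$, two consecutive middle sub-paths share a cusp and produce a local substring that is either $C_\beta C_\beta|C_\beta$ or $C_\beta|C_\beta C_\beta$ (forbidden by Lemma \ref{CC|C non optimal}) or $C_\beta GC_\beta|C_\beta$ or $C_\beta|C_\beta GC_\beta$ (forbidden by Lemma \ref{lem CGCC non optimal}); hence $k\le 1$. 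For $k=1$, I would systematically pair each of the four start truncations $\{CGC_\beta|,\,GC_\beta|,\,C_\beta|,\,CC_\beta|\}$ with each of the four end truncations $\{|C_\beta GC,\,|C_\beta G,\,|C_\beta,\,|C_\beta C\}$ and each choice of middle sub-path, collapse redundant combinations using Lemma \ref{lem redundent}, and again apply Lemmas \ref{CC|C non optimal}--\ref{lem CGCC non optimal} to eliminate any surviving three-consecutive-$C_\beta$ or $C_\beta GC_\beta|C_\beta$ substring. The survivors (up to reversal) are $CC_\beta|C$, $C|C_\beta G$, $CGC_\beta|C$, $C|C_\beta C_\beta|C$, and $C|C_\beta GC_\beta|C$. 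For $k=0$, the path is a single truncated boundary chain, yielding at most $C$, $G$, $CC$, $GC$, $CGC$, and $C|C$.

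In the $T$-containing case, Corollary \ref{cor TC} forbids any middle $C$ segment, so middle segments can only be $T$ or $G$. A $T$ segment corresponds to the point $(\pm 1/U_{max},0)$ of the $\mathcal{A}$--$\mathcal{C}$ phase portrait while a $G$ segment corresponds to $(0,\pm 1)$, and the remark after Fig. \ref{fig:A-C} rules out a common zero of $\mathcal{A}$ and $\mathcal{C}$; by continuity of the costates, no direct $T$--$G$ transition can occur without an intervening $C$ segment, which, as a middle $C$, is forbidden. Hence $T$ and $G$ cannot coexist on an optimal path. Moreover, two consecutive $T$'s of the same control sign merge into a single $T$, and two of opposite sign again require an intervening $C$ (middle, hence forbidden). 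Therefore the middle contains at most a single $T$, leaving only $T$, $TC$, and $CTC$ (with symmetric forms).

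Assembling the survivors from both cases produces exactly the 14 listed types, together with their symmetric forms. The principal obstacle I anticipate is the bookkeeping in the $k=1$ sub-case of Case 1: sixteen boundary-truncation pairings times two middle sub-path choices produce many formal combinations, and each must be either already on the final list or explicitly eliminated by one of the cited lemmas without double-counting the redundancy reductions of Lemma \ref{lem redundent}. A secondary difficulty is the continuity argument in the $T$-containing case, which must be phrased rigorously in terms of the costate trajectory in the phase portrait using Lemma \ref{Lem axis point}, rather than argued intuitively from the geometry of the path on the sphere.
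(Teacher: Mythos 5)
Your overall strategy is the same as the paper's: split on the presence of $T$ segments, use Lemma \ref{lem G inflection} to reduce the $T$-free case to concatenations of $|C_\beta GC_\beta|$ and $|C_\beta C_\beta|$ sub-paths, eliminate two or more contiguous middle sub-paths via Lemmas \ref{CC|C non optimal} and \ref{lem CGCC non optimal}, and handle the $T$-containing case with the phase portrait plus Corollary \ref{cor TC}. The $T$-containing half of your argument is sound and matches the paper's, and your final list of 14 types is the correct one.

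The gap is in the bookkeeping of the $T$-free case. With exactly one middle sub-path ($k=1$), any pairing with a nontrivial start truncation ($GC_\beta$, $CC_\beta$, $CGC_\beta$) or nontrivial end truncation creates a substring of type $C_\beta|C_\beta C_\beta$ or $C_\beta|C_\beta G_\sigma C_\beta$, so Lemmas \ref{CC|C non optimal} and \ref{lem CGCC non optimal} alone (no redundancy argument needed) leave only $C|C_\beta C_\beta|C$ and $C|C_\beta GC_\beta|C$. Your other claimed $k=1$ survivors, $CC_\beta|C$, $C|C_\beta G$, $CGC_\beta|C$, cannot arise from a start-truncation/middle/end-truncation concatenation: they contain no middle sub-path at all. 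They are $k=0$ paths of the form $P_i|P_f$, i.e., two truncated boundary sub-paths joined by a single cusp. But your $k=0$ case, as stated (``a single truncated boundary chain''), omits the $P_i|P_f$ forms entirely --- a single chain yields only $C$, $G$, $CC$, $GC$, $CG$, $CGC$, and not even the $C|C$ you list there. Followed literally, your case analysis therefore never enumerates the sixteen combinations $P_i \in \{C, GC_\beta, CC_\beta, CGC_\beta\}$, $P_f \in \{C, C_\beta G, C_\beta C, C_\beta GC\}$, and that enumeration is exactly where Lemma \ref{lem redundent} does its work (replacing, e.g., $CC_\beta|C_\beta G$ by $C|C_\beta G$ and $CGC_\beta|C_\beta GC$ by $CGC_\beta|C$); in your $k=1$ step, where you invoke it, it accomplishes nothing. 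The repair is straightforward and recovers the paper's proof: define $k=0$ as ``no full middle sub-path,'' split it into zero-cusp chains $P_s$ and one-cusp concatenations $P_i|P_f$, and apply Lemma \ref{lem redundent} to the latter.
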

\begin{proof}
    We divide the set of optimal path types into two disjoint sets:

\noindent\underline{Set 1: Paths without $T$ segments}

    By Lemma \ref{lem G inflection}, any optimal path without $T$ segments must be a concatenation of sub-paths of the form:

    (i) $|C_\beta GC_\beta|$ or (ii) $|C_\beta C_\beta|$,

    \noindent where the start sub-paths may be truncated to be of type $CGC_\beta|$, $GC_\beta|$, $C|$, or $CC_\beta|$; terminal sub-paths may similarly be truncated to be of type $|C_\beta GC,$  $|C_\beta G$, $|C,$ or $|C_\beta C$.

    However:
    \begin{itemize}
        \item  Lemma \ref{CC|C non optimal} states that $C_\beta C_\beta|C_\beta$ (or $C_\beta|C_\beta C_\beta$) is not optimal. This eliminates paths containing two adjacent middle sub-paths with at least one of type (ii), i.e., (i,ii), (ii,i), or (ii,ii). For example, (i,ii) corresponds to a $|C_\beta G C_\beta| C_\beta C_\beta|$ path, which is non-optimal by Lemma~\ref{CC|C non optimal}.
        \item Lemma \ref{lem CGCC non optimal} states that $C_\beta G_\sigma C_\beta|C_\beta$ (or $C_\beta|C_\beta G_\sigma C_\beta$) is not optimal, thereby eliminating paths containing two adjacent middle sub-paths of type (i), i.e., (i,i).
    \end{itemize}

    Thus, it suffices to consider paths with at most one middle sub-path. The paths with one middle sub-path have the form
    
   (a) $P_i|C_\beta GC_\beta|P_f$ or (b)  $P_i|C_\beta C_\beta|P_f$,
   
   \noindent where $P_i \in \{C, GC_\beta, CC_\beta, CGC_\beta\}$ and $P_f \in \{C, C_\beta G, C_\beta C,  C_\beta GC\}$. If $P_i \in \{GC_\beta, CC_\beta, CGC_\beta\}$ or $P_f \in \{C_\beta G, C_\beta C,  C_\beta GC\},$ then forms (a) and (b) contain patterns such as:
   \[
   C_\beta G_\sigma C_\beta|C_\beta \text{~or~} C_\beta| C_\beta G_\sigma C_\beta \text{~or~} C_\beta C_\beta|C_\beta \text{~or~} C_\beta| C_\beta C_\beta, 
   \]
   \noindent which are non-optimal by Lemmas \ref{CC|C non optimal} and \ref{lem CGCC non optimal}. Hence, only $P_i = P_f = C$ remains.

   If $P_i=P_f=C$, then forms (a) and (b) reduce to:
   \[
   C|C_\beta GC_\beta |C \text{~and~} C|C_\beta C_\beta|C, 
   \]
   \noindent which are included in the sufficient list. 
   
   Finally, consider paths with no middle sub-path (i.e., only start and terminal sub-paths). These include the forms: 
   \[
     P_i|P_f \text{~and~} P_s,
     \]
     \noindent where $P_i$ and $P_f$ are as defined earlier, and $P_s\in\{C,G,GC,CG,CC,CGC\}$. 
     
     Applying Lemma \ref{lem redundent}, we replace the following redundant paths:     
     $CC_\beta|C_\beta G$, $CGC_\beta|C_\beta C$, $GC_\beta|C_\beta G$, $CGC_\beta|C_\beta G$, and $CGC_\beta|C_\beta GC.$ 
     
     After these eliminations, the remaining paths in Set 1 are exactly those without $T$ segments listed in Proposition 1.

\vspace{0.5em}
    \noindent \underline{Set 2: Pats containing $T$ segments}

From the phase portrait (Fig. \ref{fig:A-C}), an $L^0$ or $R^0$ segment (a $T$ segment) can only appear between two $C$ segments. Therefore, any optimal path with $T$ segments must have the form:
\[
P_lTP_r, 
\]
\noindent where $P_l$ and $P_r$ are either empty or single $C$ segments. If either side contains more than a $C$ segment, the path includes a middle C segment and is non-optimal by Corollary \ref{cor TC}. 

Therefore, the optimal path types with $T$ segments are: 
\[
\{T, TC, CT, CTC\}.
\]
\vspace{0.5em}
\noindent \underline{Conclusion}

Combining Sets 1 and 2, and including symmetric forms (string reversal), yields the finite list stated in Proposition \ref{prop2}.

\end{proof}

\section{Optimal Paths for $g=1$ \label{sec 3.1}}
This section offers the detailed proofs of the key results for $g=1$, including Lemmas \ref{lem G inflection}, \ref{CC|C non optimal}, \ref{lem CGCC non optimal}, \ref{lem redundent}, and Corollary \ref{cor TC}, which are utilized for Proposition \ref{prop2}, together with their supporting lemmas illustrated in Fig. \ref{fig:g=1 diagram_all}.

Notice that $\tan^{-1}(\frac{y}{x})$ is not defined at $y=\pm 1~\&~ x=0$. In this article, we define $\arctan(\cdot)$ as
\begin{equation}
    \arctan(\frac{y}{x})=\begin{cases}
        \frac{\pi}{2} &~~\text{if} ~y=1~\&~x=0\\
        -\frac{\pi}{2} &~~\text{if} ~y=-1~\&~x=0\\
        \tan^{-1}(\frac{y}{x})&~~\text{otherwise}
    \end{cases}.
\end{equation}

The following Lemmas \ref{lem G/inflection C angle}-\ref{GC|C|C non optimal} will be useful throughout this section.
\begin{lem}
\label{lem G/inflection C angle}
    For $g=1$, the angle of a $C$ segment that is completely traversed\footnote{A $C$ segment is completely traversed if $|\mathcal{C}|$ monotonically increases from 0 to 1 or monotonically decreases from 1 to 0 along the $C$ segment.} clockwise/counter-clockwise is exactly $\arctan(\frac{1}{\sqrt{U_{max}^4-1}})+\frac{\pi}{2}$ radians.
\end{lem}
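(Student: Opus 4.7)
The plan is to exploit the conservation law $\mathcal{A}^2+\mathcal{B}^2+\mathcal{C}^2\equiv g=1$ to pin down the amplitude $\lambda_\mathcal{A}$ in formula (\ref{At}), then directly compute the phase change in the $\mathcal{A}$-$\mathcal{C}$ portrait as the phase $\psi(t)=\sqrt{1+U_{max}^2}\,t-\phi_\mathcal{A}$ advances from one axis-point to the adjacent one, and finally convert this phase change to an arc angle on the sphere.

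First I would reduce, without loss of generality, to a $C$ segment completely traversed inside the quadrant $\mathcal{A}>0$, $\mathcal{C}<0$ (the other three quadrants and the reversed direction follow by symmetry). In this quadrant $v=1$ and $u_g=-U_{max}$ by (\ref{bangbang2}), and by Lemma \ref{Lem axis point} the endpoints of a completely traversed segment must be the axis-points $(\tfrac{1}{U_{max}},0)$ and $(0,-1)$. I would single out, say, the clockwise traversal, which by Lemma \ref{clock/counter B} corresponds to $\mathcal{B}<0$; hence the path runs from $(\tfrac{1}{U_{max}},0)$ to $(0,-1)$ with $\mathcal{A}$ and $\mathcal{C}$ decreasing.

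Second, I would use $g=1$ at the two endpoints to determine $\mathcal{B}$: at $(\tfrac{1}{U_{max}},0)$ one gets $\mathcal{B}^2=1-\tfrac{1}{U_{max}^2}$, and at $(0,-1)$ one gets $\mathcal{B}=0$. Substituting these together with $\mathcal{A}'=v\mathcal{B}$ into (\ref{At}) and (\ref{dAt}), a short calculation yields
\begin{equation}
\lambda_\mathcal{A}=\frac{U_{max}}{1+U_{max}^2}.
\end{equation}
From this I can read off the phase at each endpoint: at the start, $\sin\psi_1=\tfrac{1}{U_{max}^2}$, and the sign of $\cos\psi_1$ is pinned down by $\mathcal{B}<0$ to be negative, giving $\cos\psi_1=-\tfrac{\sqrt{U_{max}^4-1}}{U_{max}^2}$ and hence $\psi_1=\pi-\arctan\!\bigl(\tfrac{1}{\sqrt{U_{max}^4-1}}\bigr)$. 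At the terminal axis-point, $\sin\psi_2=-1$ and $\cos\psi_2=0$; lifting $\psi$ continuously in $t$, I take $\psi_2=\tfrac{3\pi}{2}$. Thus the phase change is $\Delta\psi=\psi_2-\psi_1=\tfrac{\pi}{2}+\arctan\!\bigl(\tfrac{1}{\sqrt{U_{max}^4-1}}\bigr)=\beta$.

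Finally, I would translate this phase change to an arc angle on the sphere. Since a $C$ segment is a small circle of Euclidean radius $r=\tfrac{1}{\sqrt{1+U_{max}^2}}$ traversed with $|v|=1$, the arc length over time $\Delta t$ is $\Delta t$, and the arc angle (the angle subtended at the circle's axis) is $\Delta t/r=\sqrt{1+U_{max}^2}\,\Delta t=\Delta\psi=\beta$. The counter-clockwise case gives the same answer by swapping the signs of $\mathcal{B}$ and $\cos\psi_1$ (so $\psi_1$ lands in the first quadrant and $\psi_2$ at $-\tfrac{\pi}{2}$ after a full run), and the three remaining quadrants of the phase portrait are handled by the obvious sign-flip symmetries $\mathcal{A}\mapsto-\mathcal{A}$, $\mathcal{C}\mapsto-\mathcal{C}$. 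The main obstacle I foresee is bookkeeping: consistently determining the sign of $\cos\psi_1$ from the direction of traversal (via Lemma \ref{clock/counter B}), and lifting $\psi(t)$ to the real line so that $\Delta\psi$ comes out positive and equals the geometric arc angle rather than an arbitrary representative modulo $2\pi$.
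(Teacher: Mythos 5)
Your proposal is correct and follows essentially the same route as the paper's proof: both integrate the harmonic equation for $\mathcal{A}$, evaluate the phase at the two axis endpoints of the phase portrait, fix signs via the conservation law $\mathcal{A}^2+\mathcal{B}^2+\mathcal{C}^2\equiv g$ and the traversal direction (Lemma \ref{clock/counter B}), and convert the elapsed phase to an arc angle through $\omega=\sqrt{1+U_{max}^2}$. The only cosmetic differences are that the paper carries general $g$ through the computation (yielding (\ref{alpha1 alpha2}), which it later reuses for the $g>1$ case) and specializes to $g=1$ at the end, and that it works with an $R^-$ segment in the quadrant $\mathcal{A}>0$, $\mathcal{C}>0$, whereas you fix $g=1$ from the outset, work in the quadrant $\mathcal{A}>0$, $\mathcal{C}<0$, and exploit $\mathcal{B}=0$ at $(0,-1)$ to pin down $\lambda_\mathcal{A}$ first.
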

\begin{proof}
    Consider a $R^-$ segment that is completely traversed clockwise in the phase portrait. Let ${t}_1$ denote the time spent on the $R^-$ segment. From Fig. \ref{fig:A-C}, on the $R^-$ segment, 
    \begin{equation}
    \mathcal{A}(0)=0,~  \mathcal{C}(0)=1,~  \mathcal{A}({t}_1)=\frac{1}{U_{max}},~  \mathcal{C}({t}_1)=0. 
    \end{equation}
    
    By (\ref{ABC constant}), $\mathcal{B}(0)=\pm\sqrt{g-1}$ and $\mathcal{B}({t}_1)=\pm\sqrt{g-\frac{1}{U_{max}^2}}$. Since the segment is traversed clockwise, $\frac{d\mathcal{C}}{dt}<0$ from Fig. \ref{fig:A-C}. Furthermore, since $u_g=-U_{max}$ on a $R^-$ segment, $\mathcal{B}<0$ by (\ref{d_switch}). Hence, 
    \begin{equation}
    \mathcal{B}(0)=-\sqrt{g-1}, ~\mathcal{B}({t}_1)=-\sqrt{g-\frac{1}{U_{max}^2}}. 
    \end{equation}

    Therefore, by \eqref{eq:A_B_C_sol} and noting that the specific $\mathbf{M}(\cdot)$ matrix of a $R^-$ segment is given by \eqref{M_R-}, from the above end-point conditions we have
    \begin{equation}
    \begin{aligned}
        \begin{bmatrix}
            \frac{1}{U_{max}} \\[4pt]
            -\sqrt{g-\frac{1}{U_{max}^2}} \\[4pt]
            0
        \end{bmatrix}=\bold{M}^T_{R^-}(r, \phi)
        \begin{bmatrix}
            0 \\[4pt]
            -\sqrt{g-1} \\[4pt]
            1
        \end{bmatrix}.
        \end{aligned}
    \end{equation}

    Expanding the last two rows of the above equation and rearranging yields:
    \begin{align}
        -\sqrt{g-\frac{1}{U_{max}^2}}
&=
-\sqrt{g-1}\,c(\phi)-\sqrt{1-r^2}s(\phi), \\
-\frac{r^2}{\sqrt{1-r^2}}
&=
-\sqrt{g-1}\,s(\phi)+\sqrt{1-r^2}\,c(\phi).
    \end{align}
    Define $\delta$ by $\cos\delta=\frac{\sqrt{g-1}}{\sqrt{g-r^2}}$ and $\sin\delta=\frac{\sqrt{1-r^2}}{\sqrt{g-r^2}}$, and using trigonometric identities, the above two equations respectively gives
    \begin{align}
\sqrt{g-\frac{1}{U_{max}^2}}
&=\sqrt{g-r^2}\,\cos(\phi-\delta),\\[4pt]
\frac{r^2}{\sqrt{1-r^2}}
&=\sqrt{g-r^2}\,\sin(\phi-\delta),\\
\implies \tan(\phi-\delta)
&=
\frac{r^2}{\sqrt{1-r^2}\,\sqrt{g-\dfrac{1}{U_{max}^2}}}.
\end{align}
Substituting the definitions of $r$ and $\delta$, we obtain
\begin{align}
\label{alpha1 alpha2}
\phi
=&
\arctan\left(
\frac{1}{U_{\max}\sqrt{(1+U_{max}^2)(g-\frac{1}{U_{\max}^2})}}
\right) \notag\\
&+
\arctan\left(
\frac{U_{\max}}{\sqrt{(1+U_{max}^2)(g-1)}}
\right).
\end{align}
When $g=1$, $\phi=\arctan(\frac{1}{\sqrt{U_{max}^4-1}}) +\frac{\pi}{2}$.

Using similar results for a $R^-$ segment completely traversed counter-clockwise, and segments $R^+$, $L^-$, and $L^+$, the lemma is proved.
\end{proof}

According to Lemma \ref{clock/counter B}, the sign of $\mathcal{B}$ reveals the evolution direction of an optimal path on the phase portrait. Consequently, in the subsequent lemma, we examine the locations on the phase portrait where $\mathcal{B}$ may reach $0$, indicating where an optimal path can switch its evolution direction on the portrait.

\begin{lem}
\label{g=1 B=0}
    For $g=1$, 
    \begin{enumerate}
        \item For $U_{max}>1$ (or $r<\frac{1}{\sqrt{2}}$), $\mathcal{B}$ can be equal to 0 only when $\mathcal{A}=0$. 
        \item For $U_{max}=1$ (or $r=\frac{1}{\sqrt{2}}$), $\mathcal{B}$ can be equal to 0 only when $\mathcal{A}=0,\pm \frac{1}{U_{max}}$
    \end{enumerate}
\end{lem}
\begin{proof}

Since $\mathcal{A}^2+\mathcal{B}^2+\mathcal{C}^2 = 1$, 
\[\mathcal{B}=0 \iff \mathcal{A}^2+\mathcal{C}^2 = 1.\] 
Within the left half plane of the phase portrait (Fig. \ref{fig:A-C}), from \eqref{A-C relation}, $\mathcal{C}^2 = (1+U_{max}\mathcal{A})^2$; in other words: 
\begin{align} \label{eq: f(A)_definition}
\mathcal{B} = 0 \iff \underbrace{\mathcal{A}^2 + (1+U_{max}\mathcal{A})^2}_{:=f(\mathcal{A})} = 1.
\end{align}
$f(\mathcal{A})$ is convex and its minimum is
attained between $\mathcal{A}=-\frac{1}{U_{max}}$ and $\mathcal{A} = 0.$
Hence, its maximum over $[-1/U_{max}, 0]$ is the maximum over the two endpoints. Note that in the left half plane of the phase portrait (Fig.~\ref{fig:A-C}), $\mathcal{A} \in [-1/U_{max}, 0];$ hence, it suffices to consider this range for $f(\mathcal{A}).$ Since $0\le \mathcal{B}^2 \le 1$ and $0\le f(\mathcal{A}) \le 1$, $\mathcal{A}=0$ corresponds to a maximizer for $f(\mathcal{A})$ since $f(0)=1$, and $\mathcal{B}=0$ correspondingly. 

In the special case of $U_{max}=1$, $\mathcal{A}=-\frac{1}{U_{max}}$ also corresponds to a maximizer since $f(-\frac{1}{U_{max}})=1$, and $\mathcal{B}=0$ correspondingly.

    With a similar analysis for the right half plane of Fig. \ref{fig:A-C}, the lemma is proved. 
\end{proof}

\begin{rem}
\label{rem3}
    By Lemma~\ref{clock/counter B}, a path in the $\mathcal{A}$--$\mathcal{C}$ portrait evolves clockwise if $\mathcal{B}<0$, and counterclockwise if $\mathcal{B}>0$. Since $\mathcal{B}$ is continuous, Lemma~\ref{g=1 B=0} implies that for $g=1$ and $U_{\max}>1$, the path can only reverse its direction of evolution (i.e., switch between clockwise and counterclockwise motion) at the points $(0, \pm 1)$. At these locations---which correspond to $G$ segments or inflection points---we have $\mathcal{B}=0$, so both
\[
\frac{d\mathcal{A}}{dt}=v\mathcal{B}=0 \qquad \text{and} \qquad \frac{d\mathcal{C}}{dt}=-u_g\mathcal{B}=0.
\]
Therefore, the path may remain stationary at these points in the portrait and enter a $G$ segment.

Similarly, in the special case where $U_{\max}=1$, the path may switch direction at $(0, \pm 1)$ or $(\pm \frac{1}{U_{\max}}, 0)$, and may enter either $G$ or $T$ segments at these points.
\end{rem}

\begin{rem}
\label{T U=1}
    Lemma \ref{g=1 B=0} and Remark \ref{rem3} demonstrate that, for $g=1$, a $T$ segment can appear in an optimal path solely when $U_{max}=1$. This observation simplifies the characterization of optimal path types by dividing them into two groups: those lacking $T$ segments and those containing them. This will be elaborated upon later in this section.
\end{rem}

\begin{rem}
    Please note that, when we focus on paths not containing $T$ segments, all such paths adhere to $|v|=1$, so comparing their lengths is effectively the same as comparing their times.
\end{rem}

The subsequent lemma will be employed frequently throughout this paper.

\begin{lem}
\label{GC|C|C non optimal}
    For $\alpha>0$, a $C_{\pi+\alpha}$ path is not optimal.
\end{lem}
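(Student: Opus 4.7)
The plan is to exhibit, for any $C_{\pi+\alpha}$ path with $\alpha>0$, a strictly shorter $C$-type path reaching the same terminal configuration, thereby contradicting optimality. Since the $C$ label comprises four variants ($L^+$, $L^-$, $R^+$, $R^-$) related by the symmetric identities (\ref{M_L-})--(\ref{M_R-}), it suffices to treat one case, say $L^+_{\pi+\alpha}$; the other three follow by the same argument.

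The central step I would establish is the identity $\bold{M}_{L^+}(r,\pi+\alpha)=\bold{M}_{R^-}(r,\pi-\alpha)$, which expresses the expected geometric fact that $L^+$ and $R^-$ traverse the same tight-turn circle in opposite directions, so that exceeding a half revolution in one direction is equivalent to traversing strictly less than a half revolution in the opposite direction. The identity can be checked directly from (\ref{M_L+}) and (\ref{M_R-}) via two observations: (i) the entries of $\bold{M}_{L^+}(r,\phi)$ depend on $\phi$ only through $c(\phi)$ and $s(\phi)$, hence $\bold{M}_{L^+}(r,2\pi-\phi)=\bold{M}_{L^+}(r,-\phi)$ by $2\pi$-periodicity; and (ii) since $\bold{M}_{L^+}(r,\phi)\in SO(3)$ represents a rotation, $\bold{M}_{L^+}(r,-\phi)=\bold{M}_{L^+}^T(r,\phi)=\bold{M}_{R^-}(r,\phi)$, where the second equality is (\ref{M_R-}). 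Substituting $2\pi-\phi=\pi+\alpha$, i.e., $\phi=\pi-\alpha$, yields the identity.

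Given the identity, if $\bold{R}(0)=\bold{R}_0$, both paths $L^+_{\pi+\alpha}$ and $R^-_{\pi-\alpha}$ terminate at $\bold{R}_0\,\bold{M}_{L^+}(r,\pi+\alpha)$. Each is a $C$ segment with $|v|=1$ and angular frequency $\omega=\sqrt{1+U_{max}^2}$, so that the traversal times are $(\pi+\alpha)/\omega$ and $(\pi-\alpha)/\omega$, respectively. Since $\alpha>0$, $R^-_{\pi-\alpha}$ strictly beats $L^+_{\pi+\alpha}$, contradicting optimality of the latter. The cases $L^-_{\pi+\alpha}$, $R^+_{\pi+\alpha}$, $R^-_{\pi+\alpha}$ follow verbatim using (\ref{M_L-}) and the analogous identities.

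No substantive obstacle is expected: the proof reduces to the algebraic verification of the matrix identity above, which is the spherical counterpart of the familiar planar observation that a tight-turn arc exceeding a half revolution can be shortcut by the complementary arc traversed in the opposite direction along the same circle. The only mild care needed is to verify the $2\pi$-periodicity from the explicit form (\ref{M_L+}) and to invoke the orthogonality relation $\bold{M}_{L^+}^{-1}=\bold{M}_{L^+}^{T}$, both of which are immediate.
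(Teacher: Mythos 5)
Your approach is the same as the paper's: replace the over-half-revolution tight turn by the complementary arc of the same circle traversed with both the turn direction and the velocity sign reversed ($L^+_{\pi+\alpha}\mapsto R^-_{\pi-\alpha}$ in your labeling, $L^-_{\pi+\alpha}\mapsto R^+_{\pi-\alpha}$ in the paper's), certified by a rotation-matrix identity. Your derivation of that identity is actually cleaner than the paper's: the paper substitutes $\sin(\pi\pm\alpha)$, $\cos(\pi\pm\alpha)$ into (\ref{M_L-}) and (\ref{M_R+}) and compares entries, whereas you obtain it structurally from $2\pi$-periodicity together with $\bold{M}_{L^+}(r,-\phi)=\bold{M}^T_{L^+}(r,\phi)=\bold{M}_{R^-}(r,\phi)$, and your identity checks out against the explicit matrices.

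The one genuine gap is the range of $\alpha$. Your comparison ``$R^-_{\pi-\alpha}$ strictly beats $L^+_{\pi+\alpha}$'' presupposes $\pi-\alpha\geq 0$; for $\alpha>\pi$ (the lemma allows any $\alpha>0$) the object $R^-_{\pi-\alpha}$ has negative arc angle, hence is not an admissible path, and the time comparison $(\pi-\alpha)/\omega<(\pi+\alpha)/\omega$ with $\omega=\sqrt{1+U_{max}^2}$ becomes meaningless. The paper disposes of $\alpha\geq\pi$ separately in one sentence: the arc angle is then at least $2\pi$, and a $C$ path of angle exactly $2\pi$ returns to its initial configuration, so such a path revisits a configuration and cannot be time-optimal. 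You need this trivial case split (or an equivalent reduction of the angle modulo $2\pi$) before running your argument; with that one sentence added, your proof is complete.
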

\begin{proof}
   We show that for $\alpha>0$, the terminal configuration produced by $C_{\pi+ \alpha}$ coincides with that of a $C_{\pi- \alpha}$ of opposite direction; since $\pi-\alpha<\pi+\alpha$, the former is non-optimal.


    Consider a $L^-_{\pi+\alpha}$ path, where $\alpha>0$. If $\alpha\geq\pi$, the path is obviously not optimal, as the initial and terminal configurations coincide for a $C$ path with angle of $2\pi$. Otherwise, if $0<\alpha<\pi$, it is claimed that there exists an alternate $R^+_{\pi-\alpha}$ path that is shorter. Since $0<\alpha<\pi$, $\pi-\alpha<\pi+\alpha$, the alternate path is shorter, hence, it suffices to show that $\bold{M}_{L^-}(r,\pi+\alpha)=\bold{M}_{R^+}(r,\pi-\alpha)$. It can be obtained that $\sin(\pi+\alpha)=-\sin(\alpha)$, $\cos(\pi+\alpha)=-\cos(\alpha)$, $\sin(\pi-\alpha)=\sin(\alpha)$, and $\cos(\pi-\alpha)=-\cos(\alpha)$. Substituting these values into (\ref{M_L-}) and (\ref{M_R+}), it is obtained that
    \begin{align}
        \bold{M}_{L^-}(r,\pi+\alpha)&=\bold{M}_{R^+}(r,\pi-\alpha) 
        \notag\\
        &=\begin{pmatrix}
        \xi_1&-r s(\alpha)&\xi_2 \\
        r s(\alpha)&-c(\alpha)&\xi_3 \\
        \xi_2&-\xi_3&\xi_4
        \end{pmatrix},
    \end{align}
    where $\xi_1=(-c(\alpha) - 1)r^2 + 1$, $\xi_2=-r\sqrt{1 - r^2}(c(\alpha) + 1)$, $\xi_3=s(\alpha)\sqrt{1 - r^2}$, $\xi_4=(r^2 - 1)(c(\alpha) + 1) + 1$.

    The non-optimality of $L^-_{\pi+\alpha}$ is visualized in Fig. \ref{fig:L_neg_non} for $\alpha=\frac{\pi}{2}$ and $U_{max}=3$ (or $r=\frac{1}{\sqrt{10}}$).
    \begin{figure}[h]
    \centering
    \setlength{\abovecaptionskip}{0pt}
     \includegraphics[width=0.24\textwidth]{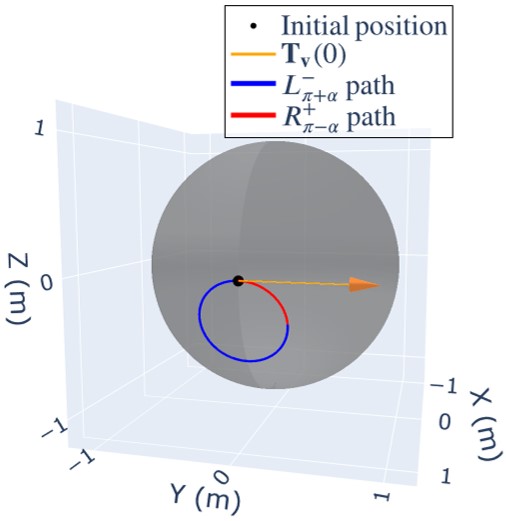}
    \caption{Non-optimality of $L^-_{\pi+\alpha}$}
    \label{fig:L_neg_non}
    \end{figure}

    With similar proofs for path types $L^+_{\pi+\alpha}$, $R^+_{\pi+\alpha}$, and $R^-_{\pi+\alpha}$, the lemma is proved.
\end{proof}

As noted in Remark \ref{T U=1}, when $g=1$, an optimal path can contain $T$ segments only if $U_{max}=1$ (or $r=\frac{1}{\sqrt{2}}$). Therefore, we analyze the case where $g=1$ by distinguishing between paths without $T$ segments and those that contain them.

\subsection{Optimal Paths Without $T$ Segments}

To characterize the optimal paths without $T$ segments, we first utilize Lemmas \ref{lem G/inflection C angle}-\ref{GC|C|C non optimal} to derive the key result Lemma \ref{lem G inflection}.

\begin{lem}
\label{lem G inflection}
    For $g=1$ and $U_{max}\geq1$ (or $r\leq\frac{1}{\sqrt{2}}$), an optimal path without $T$ segments is a concatenation of $|C_\beta GC_\beta|$ and $|C_\beta C_\beta|$ sub-paths, where the start/terminal sub-paths can be truncated.
\end{lem}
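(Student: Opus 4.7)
The plan is to exploit the rhombus structure of the $\mathcal{A}$--$\mathcal{C}$ phase portrait in Fig.~\ref{fig:A-C}: its four edges (one per quadrant, determined by the sign choices of $v$ and $u_g$ via the line equation (\ref{A-C relation})) correspond to the four $C$-segment types $L^\pm, R^\pm$, and its four vertices $(\pm 1/U_{max},0)$ and $(0,\pm 1)$ are the only loci where switching events can occur.

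First, by Lemma~\ref{g=1 B=0} and Remark~\ref{rem3}, for $g=1$ and $U_{max}>1$ the variable $\mathcal{B}$ vanishes on the portrait only at $(0,\pm 1)$; indeed (\ref{ABC constant}) gives $|\mathcal{B}|=\sqrt{1-1/U_{max}^2}>0$ at the $\mathcal{A}$-axis vertices. Hence, by Lemma~\ref{clock/counter B}, the direction of evolution in the portrait can reverse (or $\mathcal{B}$ can vanish on an interval, admitting a $G$ segment) only at the $\mathcal{C}$-axis vertices, while the $\mathcal{A}$-axis vertices must be passed through strictly as cusps (i.e., $v$ flips and $u_g$ is preserved).

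Next, I would show that every middle $C$ segment is a complete edge-to-edge traversal of the rhombus. Since $(v,u_g)$ is constant on a $C$ segment and (\ref{bangbang2}) forces a switch the instant $\mathcal{A}$ or $\mathcal{C}$ crosses zero, any middle $C$ segment begins and ends at rhombus vertices; Lemma~\ref{lem G/inflection C angle} then gives its arc angle as exactly $\beta$. Only the boundary $C$ segments may be partial, since the given initial/terminal pose may correspond to an interior point of an edge; this accounts for the truncations listed in the statement's footnote. Between two consecutive cusps the path traverses exactly two rhombus edges and passes through a single $\mathcal{C}$-axis vertex in between, at which the transition is either an inflection (giving $C_\beta C_\beta$) or a $G$ segment (the only alternative since $T$ is excluded, giving $C_\beta G C_\beta$), producing precisely the two middle sub-path types $|C_\beta C_\beta|$ and $|C_\beta G C_\beta|$. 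For the special case $U_{max}=1$, where $\beta=\pi$ and Lemma~\ref{g=1 B=0} additionally allows $\mathcal{B}=0$ at $(\pm 1/U_{max},0)$, the extra admissible structure is either a $T$ segment (excluded by hypothesis) or ``bouncing'' off an $\mathcal{A}$-axis vertex producing an over-traversed $C$ of angle strictly greater than $\pi$, which is non-optimal by Lemma~\ref{GC|C|C non optimal}; thus the same conclusion holds.

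The main obstacle will be rigorously justifying that a middle $C$ segment cannot overshoot the first vertex it reaches: for $U_{max}>1$ this follows from the closed-form sinusoidal costate solutions (\ref{At})--(\ref{dCt}) together with the bang-bang rule (\ref{bangbang2}), which forces a sign change in $v$ or $u_g$ the moment the trajectory meets a vertex (since $\mathcal{B}\neq 0$ there); for $U_{max}=1$ the argument additionally invokes Lemma~\ref{GC|C|C non optimal} to rule out over-traversals of the same edge, after which the two-cusp sub-path analysis proceeds identically to the $U_{max}>1$ case.
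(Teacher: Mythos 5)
Your overall strategy --- the rhombus phase portrait, complete edge-to-edge traversals of angle $\beta$ for middle $C$ segments, cusps forced at the $\mathcal{A}$-axis vertices because $|\mathcal{B}|=\sqrt{1-1/U_{max}^2}>0$ there, and truncation confined to the boundary segments --- is the same as the paper's, and most of it is sound. However, there is a genuine gap at the $\mathcal{C}$-axis vertices. By Lemma \ref{g=1 B=0}, $\mathcal{B}$ can vanish at $(0,\pm 1)$ for \emph{every} $U_{max}\geq 1$, and when it vanishes there it may change sign, i.e., the trajectory can reverse its evolution direction instantaneously (``bounce'') with no intervening $G$ segment. In that event $\mathcal{A}$ touches zero but never crosses it, so (\ref{bangbang2}) forces no switch of $u_g$, and the result is a \emph{single} middle $C$ segment of angle $2\beta=\pi+2\arctan(\frac{1}{\sqrt{U_{max}^4-1}})$, traversed once up the edge and once back down --- which is not of the form $|C_\beta C_\beta|$ or $|C_\beta G C_\beta|$. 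Your enumeration (``the transition is either an inflection \dots or a $G$ segment'') silently drops this third alternative, and your claim that every middle $C$ segment has arc angle exactly $\beta$ fails for it, since Lemma \ref{lem G/inflection C angle} applies only to monotone (completely traversed) segments. Your closing paragraph does not repair this either: the justification ``since $\mathcal{B}\neq 0$ there'' is valid only at the $\mathcal{A}$-axis vertices; the points $(0,\pm 1)$ are exactly where $\mathcal{B}=0$ remains possible when $U_{max}>1$ (and indeed must be possible, or no $G$ segment could ever occur).

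This missing case is precisely the first and central step of the paper's proof: it shows that any middle $C$ segment with a switched evolution direction is completely traversed twice in opposite directions, hence has angle $2\arctan(\frac{1}{\sqrt{U_{max}^4-1}})+\pi>\pi$ by Lemma \ref{lem G/inflection C angle}, and is therefore non-optimal by Lemma \ref{GC|C|C non optimal}; only after excluding it does the paper enumerate the admissible sub-paths between cusps. Note that this exclusion is an \emph{optimality} argument, not a phase-portrait constraint --- the bounce is perfectly consistent with PMP. You in fact deploy exactly this argument, but only for bounces at the $\mathcal{A}$-axis vertices in the special case $U_{max}=1$; the identical argument must be applied to bounces at $(0,\pm 1)$ for all $U_{max}\geq 1$. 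Once that case is ruled out, your enumeration of transitions at the $\mathcal{C}$-axis vertices (inflection, or $G$ segment followed by exit in either direction) becomes exhaustive, and the remainder of your proof goes through as in the paper.
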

\begin{proof}
    The proof first shows that an optimal path cannot contain a middle $C$ segment that reverses its evolution direction (clockwise/counter-clockwise) in the phase portrait. This is because such a reversal forces a double traversal of the same quadrant and yields a $C$ segment with arc angle larger than $\pi$ by Lemma \ref{lem G/inflection C angle}, which is non-optimal by Lemma \ref{GC|C|C non optimal}. 
Having ruled out direction reversals, we can enumerate all possible middle sub-path types and conclude that they are of the form $|C_\beta GC_\beta|$ or $|C_\beta C_\beta|$.


    We now formalize the above argument. As discussed in Remark \ref{rem3}, a path may switch its evolution direction (i.e., clockwise/counter-clockwise) at points $(0,\pm 1)$ and $(\pm \frac{1}{U_{max}},0)$. We first show that for $g=1$ and $U_{max}\geq1$, any path containing a middle $C$ segment with a switched evolution direction (that is, with the sign of $\mathcal{B}$ changed and the portrait switches direction) is not optimal.

      Consider a middle $L^-$ segment with a switched evolution direction, as shown in Fig.~\ref{fig:g=1 twice traversed}. 
The $L^-$ segment can occur in one of the following two cases: 
(1) it is entered at $(0,1)$ from a $G^-$ or $R^-$ segment, traverses the second quadrant counterclockwise, switches evolution direction at $(-\tfrac{1}{U_{\max}},0)$, and then traverses the second quadrant again clockwise before exiting at $(0,1)$ into a $G^-$ or $R^-$ segment; 
or (2) it is entered at $(-\tfrac{1}{U_{\max}},0)$ from a $L^+$ segment, traverses the second quadrant clockwise, switches evolution direction at $(0,1)$, and then traverses the second quadrant again counterclockwise before exiting at $(-\tfrac{1}{U_{\max}},0)$ into a $L^+$ segment. Notice that in either case, the path remains within the second quadrant in the portrait and represents a $L^-$ segment since the signs of $\mathcal{A}$ and $\mathcal{C}$ do not change. It is clear that the $L^-$ segment is completely traversed twice from opposite directions, therefore its angle is $2\arctan(\frac{1}{\sqrt{U_{max}^4-1}})+\pi$ according to Lemma \ref{lem G/inflection C angle}. Such a $L^-$ segment is visualized in Fig. \ref{fig:L- twice traversed} for $U_{max}=1.1$ (or $r=\sqrt{2.21}$). However, it is not optimal by Lemma \ref{GC|C|C non optimal}. The same conclusion holds for middle $L^+$ and $R$ segments.
    \begin{figure}[h]
    \centering
    \setlength{\abovecaptionskip}{0pt}
    \includegraphics[width=0.22\textwidth]{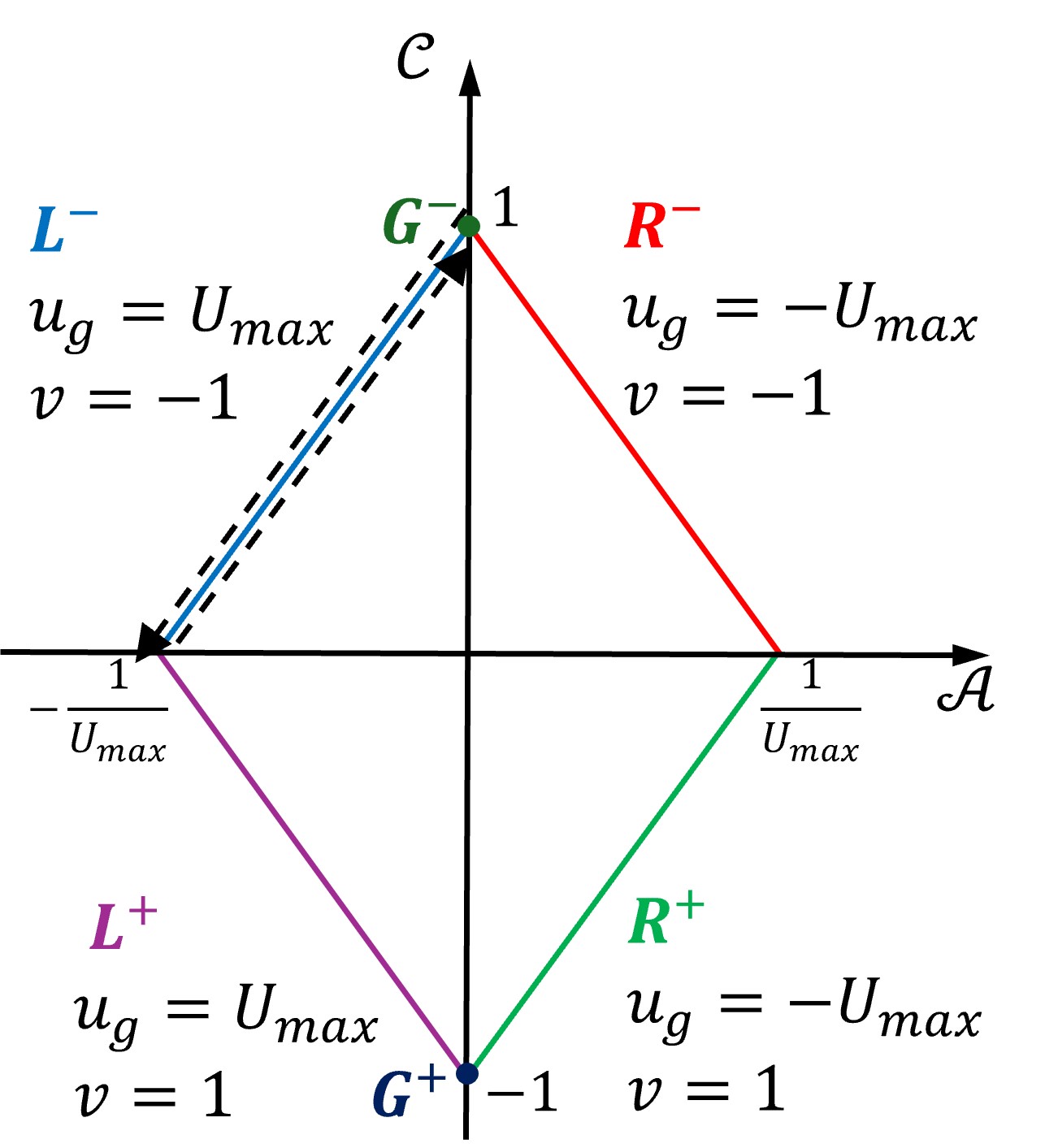}
    \caption{Illustration of a middle $L^-$ segment with switched evolution direction. The dashed arrows indicate the evolution directions: the segment is entered at $(0,1)$ (or $(-\tfrac{1}{U_{\max}},0)$), traverses the second quadrant to $(-\tfrac{1}{U_{\max}},0)$ (or $(0,1)$), switches evolution direction there, and then traverses the second quadrant again in the reverse direction before exiting into the adjacent segment.}
    \label{fig:g=1 twice traversed}
    \end{figure}

    \begin{figure}[h]
    \centering
    \setlength{\abovecaptionskip}{0pt}
    \includegraphics[width=0.24\textwidth]{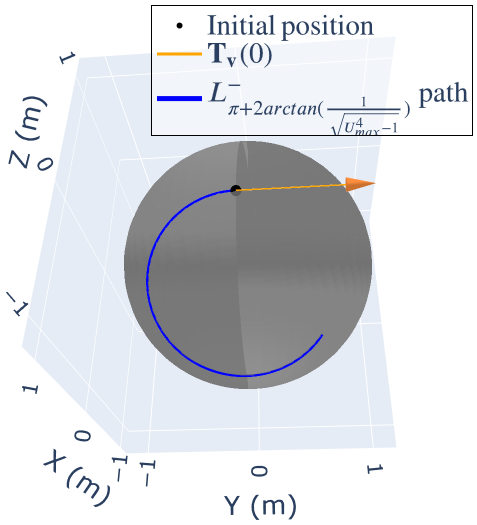}
    \caption{A middle $L^-$ segment with switched evolution directions on a sphere, for $U_{max}=1.1$ (or $r=\frac{1}{\sqrt{2.21}}$)}
    \label{fig:L- twice traversed}
    \end{figure}

    Now consider a middle sub-path that is entered clockwise at the point $(-\frac{1}{U_{max}},0)$ from a cusp. Since it has been shown that the middle $C$ segments must not switch their evolution direction on an optimal path, the middle sub-path can only evolve clockwise in the second quadrant, followed by 3 possible cases: (1) entering a $G^-$ segment and then evolving into the right half plane; (2) directly evolving into the right half plane; (3) entering a $G^-$ segment and then evolving back into the left half plane. In any of the cases, the sub-path again completely traverses a $C$ segment until entering a new sub-path at a cusp. The 3 cases are illustrated in Fig. \ref{fig:g=1 cases}, where case (2) does not enter the $G^-$ segment in Fig. \ref{fig:g=1 cases}(a). It is clear that cases (1)-(3) correspond to middle sub-paths of type $|L^-G^-R^-|$, $|L^-R^-|$, and $|L^-G^-L^-|$, respectively. The angle of the $C$ segments is $\beta=\arctan(\frac{1}{\sqrt{U_{max}^4-1}})+\frac{\pi}{2}$ by Lemma \ref{lem G/inflection C angle}.

    \begin{figure}[h]
    \centering
    \subfigure[Case (1) and case (2)]{\includegraphics[width=0.22\textwidth]{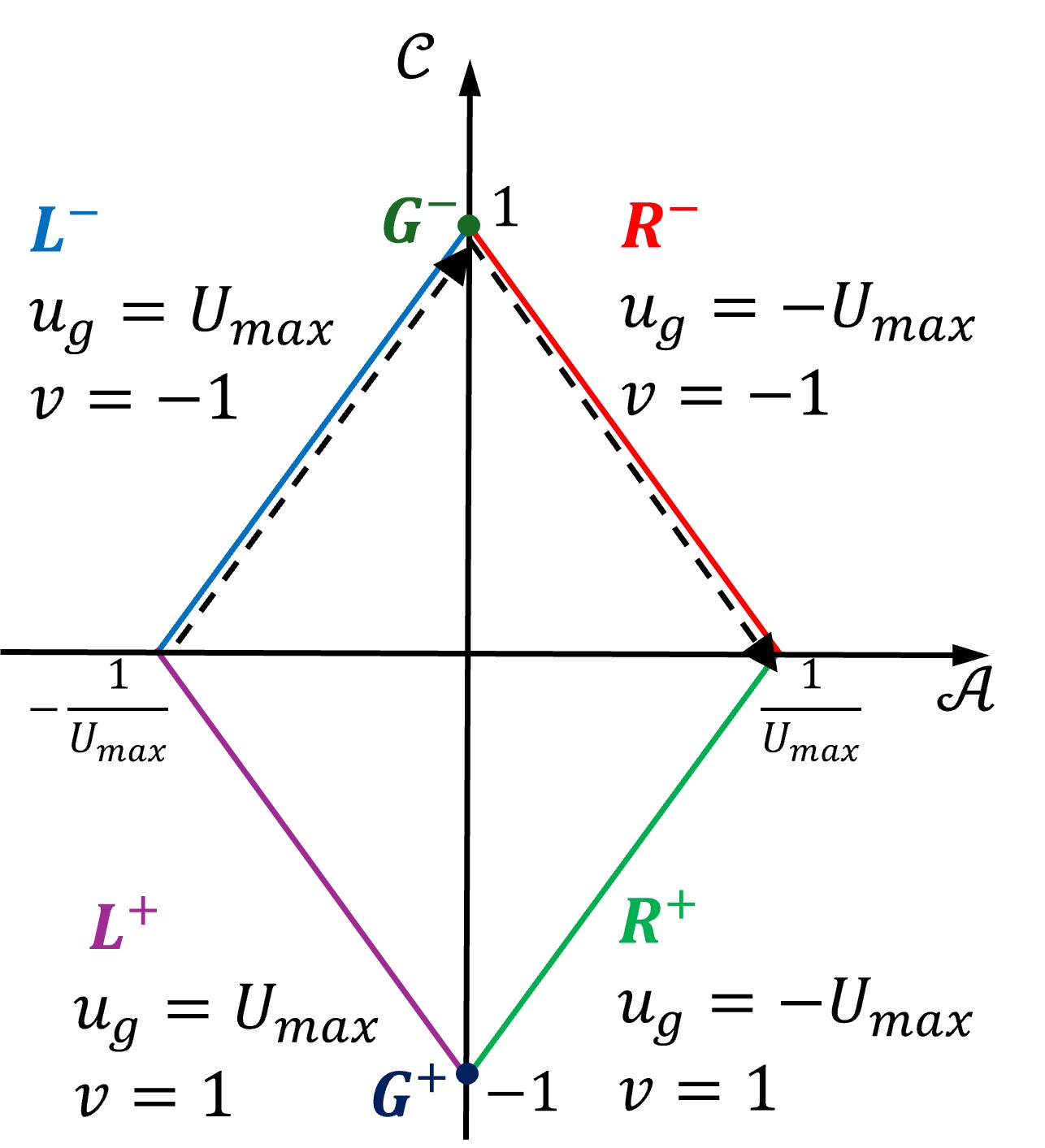}}
    \subfigure[Case (3)]{\includegraphics[width=0.22\textwidth]{Figure/g=1_Case2_3.jpg}}
    \caption{Illustrative evolution of $\mathcal{A}$--$\mathcal{C}$ for $g=1$. The dashed arrows indicate the evolution directions. The sub-path enters an $L^-$ segment at $(-\tfrac{1}{U_{\max}},0)$ and traverses the second quadrant clockwise. In case (1), it enters a $G^-$ segment at $(0,1)$ before entering an $R^-$ segment in the first quadrant; in case (2), it enters an $R^-$ segment directly from $(0,1)$; in case (3), it enters a $G^-$ segment at $(0,1)$ before entering another $L^-$ segment in the second quadrant.}
    \label{fig:g=1 cases}
    \end{figure}
    
    Using similar proofs for the middle sub-path entered at $(-\frac{1}{U_{max}},0)$ counter-clockwise, and the sub-paths entered at $(\frac{1}{U_{max}},0)$ in both clockwise and counter-clockwise directions, it can be proved that the middle sub-paths are of type $|C_\beta GC_\beta|$ or $|C_\beta C_\beta|$. For starting/terminal sub-paths that do not start/terminate at cusps, the starting sub-paths may be truncated to $CGC_\beta|$, $GC_\beta|$, $C_\beta|$, or $CC_\beta|$, with corresponding symmetric forms for the terminal sub-paths. 
\end{proof}


Observing that the key result stated in Lemma \ref{lem G inflection} leads to an infinite list of potentially optimal path types, we begin by demonstrating the non-optimality and redundancy of certain paths fulfilling these conditions. This allows us to limit the optimal path types to a finite list of sufficient ones.

The following lemma states that $L_\beta|L_\beta$ and $R_\beta|R_\beta$ paths can replace each other, that is, they can connect the same initial and terminal configurations.
\begin{lem}
    \label{lem full cusp equivalence}
    A $L^+_\beta|L^-_\beta$ path can be replaced by a $R^+_\beta|R^-_\beta$ path, and a $L^-_\beta|L^+_\beta$ path can be replaced by a $R^-_\beta|R^+_\beta$ path, and vice versa.
\end{lem}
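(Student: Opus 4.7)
The plan is to verify the lemma by direct matrix multiplication using the explicit segment transformation matrices in (\ref{M_L+})--(\ref{M_R-}). Two paths replace each other precisely when they induce the same net configuration change, so the lemma reduces to proving $\bold{M}_{L^+}(r,\beta)\bold{M}_{L^-}(r,\beta)=\bold{M}_{R^+}(r,\beta)\bold{M}_{R^-}(r,\beta)$ for the first assertion, and the analogous identity with the factors swapped for the second.

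First I would exploit the transpose identities $\bold{M}_{L^-}(r,\beta)=\bold{M}_{R^+}^T(r,\beta)$ and $\bold{M}_{R^-}(r,\beta)=\bold{M}_{L^+}^T(r,\beta)$ from (\ref{M_L-})--(\ref{M_R-}) to rewrite the first claim as $\bold{M}_{L^+}(r,\beta)\bold{M}_{R^+}^T(r,\beta)=\bold{M}_{R^+}(r,\beta)\bold{M}_{L^+}^T(r,\beta)$. The right-hand side is the transpose of the left-hand side, so the claim is equivalent to showing that the single product $\bold{M}_{L^+}(r,\beta)\bold{M}_{R^+}^T(r,\beta)$ is symmetric, i.e., it is a rotation by $0$ or $\pi$ in $SO(3)$. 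This reformulation is the main structural insight, turning a two-sided equality into a one-sided symmetry check.

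Next I would substitute the specific trigonometric values implied by $\beta=\arctan(1/\sqrt{U_{max}^4-1})+\pi/2$, namely $\cos\beta=-1/U_{max}^2$ and $\sin\beta=\sqrt{U_{max}^4-1}/U_{max}^2$, together with $r=1/\sqrt{1+U_{max}^2}$, into the formulas for $\eta_{11},\eta_{13},\eta_{23},\eta_{33}$. The critical observation is that $\eta_{33}(\beta)=\cos\beta+(1-\cos\beta)r^2=0$ at this specific angle. Combined with the fact that $\bold{M}_{L^+}$ and $\bold{M}_{R^+}$ differ only in the signs of the entries involving $\eta_{13}$ and $\eta_{23}$, this forces the off-diagonal entries in the third row and third column of $\bold{M}_{L^+}(r,\beta)\bold{M}_{R^+}^T(r,\beta)$ to cancel, leaving a block-diagonal matrix of the form $\mathrm{diag}(A_{2\times 2},-1)$ whose $2\times 2$ block is manifestly symmetric. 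This establishes the symmetry and hence the first assertion. The second assertion follows by an entirely parallel argument applied to the reversed product, reducing to the symmetry of $\bold{M}_{R^+}^T(r,\beta)\bold{M}_{L^+}(r,\beta)$, with the same cancellations driven by $\eta_{33}(\beta)=0$.

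The main obstacle I anticipate is the matrix bookkeeping itself --- nine entries in total per product --- but the reformulation via transposition collapses the verification to a single symmetry check, and the vanishing of $\eta_{33}$ at $\beta$ makes the crucial off-diagonal cancellations in the third row/column completely transparent, so the remaining algebra is short.
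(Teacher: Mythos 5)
Your proposal is correct, and its skeleton coincides with the paper's own proof: both reduce the replacement claim, via the segment solution formula (\ref{R sol}), to the matrix identity $\bold{M}_{L^+}(r,\beta)\bold{M}_{L^-}(r,\beta)=\bold{M}_{R^+}(r,\beta)\bold{M}_{R^-}(r,\beta)$ (and its reversed counterpart), and both substitute $\cos\beta=-\frac{1}{U_{max}^2}$, $\sin\beta=\frac{\sqrt{U_{max}^4-1}}{U_{max}^2}$, $r=\frac{1}{\sqrt{1+U_{max}^2}}$. Where you diverge is in how the identity is verified. The paper multiplies out both sides and exhibits the common value explicitly (a symmetric matrix with $(3,3)$ entry $-1$ and vanishing off-diagonal entries in the third row and column). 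You instead use $\bold{M}_{L^-}=\bold{M}_{R^+}^T$ and $\bold{M}_{R^-}=\bold{M}_{L^+}^T$ to observe that the two sides are transposes of one another, so the identity is equivalent to symmetry of the single product $\bold{M}_{L^+}(r,\beta)\bold{M}_{R^+}^T(r,\beta)$, and you attribute that symmetry to $\eta_{33}(\beta)=0$. This is sound, and in fact sharper than you state: for arbitrary $\phi$ the upper-left $2\times2$ block of $\bold{M}_{L^+}(r,\phi)\bold{M}_{R^+}^T(r,\phi)$ is symmetric identically, while the potentially asymmetric entries work out to $\pm 2\eta_{13}\eta_{33}$ in positions $(1,3)/(3,1)$ and $\mp 2\eta_{23}\eta_{33}$ in positions $(2,3)/(3,2)$, so symmetry holds precisely when $\eta_{33}=0$ --- which, with $r=\frac{1}{\sqrt{1+U_{max}^2}}$, is exactly the defining property of the paper's angle $\beta$. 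Your route halves the bookkeeping and, more importantly, explains why $\beta$ is the distinguished angle: it is the arc angle at which a tight turn rotates the lateral vector $\bold{N_v}$ to be orthogonal to its initial value (since $\eta_{33}$ is precisely their inner product). What the paper's brute-force computation buys instead is the closed-form product matrix itself, the same style of explicit data it reuses in the companion computation of Lemma \ref{lem GC|CG replace}.
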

\begin{proof}
    See Appendix \ref{appnd B} in the supplementary materials.
\end{proof}

Using Lemma \ref{lem full cusp equivalence}, we now prove the non-optimality of a $C_\beta C_\beta|C_\beta$ (or $C_\beta|C_\beta C_\beta$) path.

\begin{lem}
    \label{CC|C non optimal}
    A $C_\beta C_\beta|C_\beta$ (or $C_\beta|C_\beta C_\beta$) path is not optimal.
\end{lem}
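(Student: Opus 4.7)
The plan is to exhibit a strictly shorter $C|C$ path realizing the same endpoint configuration. Without loss of generality consider the representative $L^+_\beta R^+_\beta | R^-_\beta$: the inflection between the first two segments forces opposite turning-rate signs with common $v$, while the cusp forces opposite $v$ signs with common turning-rate sign. The other three sign patterns for $C_\beta C_\beta | C_\beta$ follow identically by reflecting the roles of $L/R$ and $+/-$, and the symmetric form $C_\beta | C_\beta C_\beta$ is handled by reversing time.

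First, I would apply Lemma \ref{lem full cusp equivalence} to the cusp sub-path $R^+_\beta | R^-_\beta$, replacing it by the equivalent $L^+_\beta | L^-_\beta$. The whole path becomes $L^+_\beta L^+_\beta | L^-_\beta$. Since the first two segments now share controls $(u_g, v) = (U_{max}, 1)$ and no switching point separates them, they merge into a single $L^+_{2\beta}$ segment, yielding $L^+_{2\beta} | L^-_\beta$, which has the same total arc length $3\beta$ as the original.

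Next, since $\beta = \arctan(\frac{1}{\sqrt{U_{max}^4 - 1}}) + \frac{\pi}{2} \geq \frac{\pi}{2}$, with strict inequality whenever $U_{max} > 1$, we have $2\beta \geq \pi$. Writing $2\beta = \pi + (2\beta - \pi)$ and invoking Lemma \ref{GC|C|C non optimal}, $\bold{M}_{L^+}(r, 2\beta) = \bold{M}_{R^-}(r, 2\pi - 2\beta)$, so $L^+_{2\beta}$ can be swapped for $R^-_{2\pi - 2\beta}$ without altering the configuration at which the subsequent $|L^-_\beta$ begins. The resulting $R^-_{2\pi - 2\beta} | L^-_\beta$ is a $C|C$ path of total length $2\pi - \beta$.

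Finally, $2\pi - \beta < 3\beta$ reduces to $\beta > \frac{\pi}{2}$, which holds strictly for $U_{max} > 1$; in the boundary case $U_{max} = 1$ we have $\beta = \pi$, so $L^+_{2\pi}$ is a full closed loop that can be deleted outright, leaving the single segment $L^-_\pi$, which is certainly shorter than $3\pi$. The main technical point to verify carefully will be that substituting one factor in the product representation (\ref{R sol}) by a matrix-equal one preserves the configurations at all later switching points; this is immediate because configurations evolve by left multiplication in $SO(3)$, so equal factors yield equal partial products from that point onward. I expect the only real obstacle is bookkeeping through the eight sign combinations for $C_\beta C_\beta|C_\beta$ and its mirror, but the argument is completely symmetric.
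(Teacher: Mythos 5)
Your proof is correct and takes essentially the same route as the paper's: replace the cusp sub-path $R^+_\beta|R^-_\beta$ by $L^+_\beta|L^-_\beta$ via Lemma \ref{lem full cusp equivalence}, merge the first two segments into $C_{2\beta}$, and then appeal to the non-optimality of a $C_{\pi+\alpha}$ arc (Lemma \ref{GC|C|C non optimal}); your only deviation is that you inline that lemma's matrix identity $\bold{M}_{L^+}(r,2\beta)=\bold{M}_{R^-}(r,2\pi-2\beta)$ to exhibit the shorter path explicitly, whereas the paper simply cites the lemma's conclusion. One cosmetic slip: in your replacement path $R^-_{2\pi-2\beta}L^-_\beta$ both segments have $v=-1$, so the junction is an inflection point and the path is of type $CC$, not $C|C$ (this does not affect the argument, since the path is feasible and strictly shorter either way).
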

\begin{proof}
    Consider a $L^-_\beta R^-_\beta|R^+_\beta$ path, where $\beta=\arctan(\frac{1}{\sqrt{U_{max}^4-1}})+\frac{\pi}{2}$. By Lemma \ref{lem full cusp equivalence}, the $R^-_\beta|R^+_\beta$ sub-path can be replaced by $L^-_\beta|L^+_\beta$. Hence, an alternative $L^-_{2\beta}|L^+_\beta$ path of the same length as the original $L^-_\beta R^-_\beta|R^+_\beta$ path is constructed. By Lemma \ref{GC|C|C non optimal}, the alternate path is not optimal; hence, the $L^-_\beta R^-_\beta|R^+_\beta$ path is not optimal.

    The non-optimality of $L^-_\beta R^-_\beta|R^+_\beta$ is visualized in Fig. \ref{fig:L_neg_R_neg_R_pos_non} for $U_{max}=3$ (or $r=\frac{1}{\sqrt{10}}$).

    \begin{figure}[h]
    \centering
    \setlength{\abovecaptionskip}{0pt}
     \includegraphics[width=0.24\textwidth]{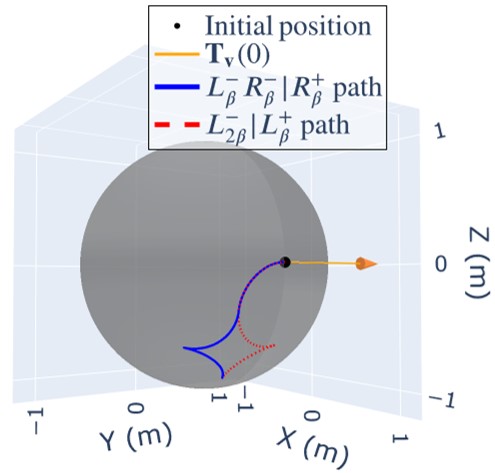}
    \caption{Non-optimality of $L^-_\beta R^-_\beta|R^+_\beta$}
    \label{fig:L_neg_R_neg_R_pos_non}
    \end{figure}

    Using similar proofs for paths $L^+_\beta R^+_\beta|R^-_\beta$, $R^-_\beta L^-_\beta|L^+_\beta$, and $R^+_\beta L^+_\beta|L^-_\beta$, the lemma is proved.
\end{proof}

The following lemma states that with any non-negative $\sigma$, $G_\sigma C_\beta|C_\beta$ and $C_\beta|C_\beta G_\sigma$ paths can replace each other.

\begin{lem}
\label{lem GC|CG replace}
    A path of type $G_\sigma C_\beta|C_\beta $ can be replaced by a path of type $C_\beta|C_\beta G_{\sigma}$, and vice versa.
\end{lem}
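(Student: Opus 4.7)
The plan is to establish the claim by showing that, with appropriate sign choices, the transformation matrix produced by traversing a $G_\sigma C_\beta|C_\beta$ path equals that of an associated $C_\beta|C_\beta G_\sigma$ path. Since the two paths have identical total arc-length $\sigma+2\beta$ (with $|v|=1$ throughout both), equality of the resulting transformations immediately translates into equality of initial and terminal configurations.

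First, I would observe, as in the proof of Lemma \ref{lem full cusp equivalence}, that $\bold{M}_{L^+}(r,\beta)\bold{M}_{L^-}(r,\beta)=\bold{M}_{R^+}(r,\beta)\bold{M}_{R^-}(r,\beta)$ has the block-diagonal form with a symmetric $2\times 2$ upper-left block $M_2$ of determinant $-1$ (a planar reflection) and a $(3,3)$-entry equal to $-1$. A completely analogous computation using $\sin\beta=\frac{\sqrt{U_{max}^4-1}}{U_{max}^2}$, $\cos\beta=-\frac{1}{U_{max}^2}$, and $r^2=\frac{1}{1+U_{max}^2}$ shows that $\bold{M}_{L^-}(r,\beta)\bold{M}_{L^+}(r,\beta)=\bold{M}_{R^-}(r,\beta)\bold{M}_{R^+}(r,\beta)$ has the same block structure with a planar reflection $M_2'$ (differing from $M_2$ only in the sign of its off-diagonal entries). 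In contrast, $\bold{M}_{G^+}(\sigma)$ and $\bold{M}_{G^-}(\sigma)$ are block-diagonal with planar rotations by $+\sigma$ and $-\sigma$, respectively, in the upper-left $2\times 2$ block, and with $+1$ in the bottom-right entry.

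The main algebraic step is the standard identity $\mathrm{Rot}(\sigma)\,R=R\,\mathrm{Rot}(-\sigma)$ valid for any planar reflection $R$ and any angle $\sigma$. Applied block-wise, this gives
\[\bold{M}_{G^+}(\sigma)\,\bold{M}_{L^+}(r,\beta)\bold{M}_{L^-}(r,\beta)=\bold{M}_{L^+}(r,\beta)\bold{M}_{L^-}(r,\beta)\,\bold{M}_{G^-}(\sigma),\]
certifying that $G^+_\sigma L^+_\beta|L^-_\beta$ may be replaced by $L^+_\beta|L^-_\beta G^-_\sigma$. Crucially, the sign of the trailing $G$ matches $v=-1$ of the preceding $L^-_\beta$, so no additional cusp is created. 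The remaining sign combinations---$G^+_\sigma R^+_\beta|R^-_\beta$, $G^-_\sigma L^-_\beta|L^+_\beta$, and $G^-_\sigma R^-_\beta|R^+_\beta$---are handled by exactly the same block-wise identity, producing the replacements $R^+_\beta|R^-_\beta G^-_\sigma$, $L^-_\beta|L^+_\beta G^+_\sigma$, and $R^-_\beta|R^+_\beta G^+_\sigma$, respectively. The ``vice versa'' direction is automatic since each matrix identity is a genuine equality.

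The main obstacle is the sign bookkeeping: I need to verify that $\bold{M}_{L^-}(r,\beta)\bold{M}_{L^+}(r,\beta)$---which is not computed in Lemma \ref{lem full cusp equivalence}---indeed has the claimed reflection-and-$(-1)$ block structure, and then to match carefully, in each of the four cases, the direction of the $G$ segment in the replacement with the $v$-value of the adjacent $C$ segment, so that the replacement is genuinely of the form $C_\beta|C_\beta G_\sigma$ (i.e., containing only the single cusp between the two $C$ segments). The algebra itself is routine once the block-diagonal structure is recognized.
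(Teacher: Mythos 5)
Your proposal is correct and takes essentially the same approach as the paper: both reduce the lemma to a commutation identity between the great-circle rotation and the cusp-pair product---the paper proves $\bold{M}_{L^-}(r,\beta)\bold{M}_{L^+}(r,\beta)\bold{M}_{G^+}(\sigma)=\bold{M}_{G^-}(\sigma)\bold{M}_{L^-}(r,\beta)\bold{M}_{L^+}(r,\beta)$ for one representative sign case (yours is the mirror case, equally valid) using $\sin\beta=\tfrac{\sqrt{U_{max}^4-1}}{U_{max}^2}$, $\cos\beta=-\tfrac{1}{U_{max}^2}$, $r=\tfrac{1}{\sqrt{1+U_{max}^2}}$, and handles the remaining sign combinations as symmetric. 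The only difference is how the identity is verified: the paper multiplies out both sides explicitly, while you deduce it from the block structure $\mathrm{diag}(M_2,-1)$ of the cusp-pair product and the reflection--rotation relation $\mathrm{Rot}(\sigma)\,M_2=M_2\,\mathrm{Rot}(-\sigma)$; your block-structure claim for $\bold{M}_{L^-}(r,\beta)\bold{M}_{L^+}(r,\beta)$ does check out (its upper-left block is the reflection with entries $\pm\tfrac{U_{max}^2-2}{U_{max}^2}$, $-\tfrac{2\sqrt{U_{max}^2-1}}{U_{max}^2}$ and its $(3,3)$ entry is $-1$), so this is simply a cleaner derivation of the same equality.
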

\begin{proof}
   See Appendix \ref{append C} in the supplementary materials.
\end{proof}

Using Lemma \ref{lem GC|CG replace}, we now prove the non-optimality of a $C_\beta G_\sigma C_\beta|C_\beta$ (or $C_\beta|C_\beta G_\sigma C_\beta$) path.

\begin{lem}
\label{lem CGCC non optimal}
    For $\sigma\geq0$, a path of type $C_\beta G_\sigma C_\beta|C_\beta$ (or $C_\beta|C_\beta G_\sigma C_\beta$) is not optimal.
\end{lem}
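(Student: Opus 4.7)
The plan is to reduce $C_\beta G_\sigma C_\beta | C_\beta$ to a path that has already been shown non-optimal, by commuting the $G_\sigma$ segment across the cusp via Lemma \ref{lem GC|CG replace} and then invoking Lemma \ref{CC|C non optimal}.

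First, I would split the $C_\beta G_\sigma C_\beta | C_\beta$ path into its leading $C_\beta$ segment and the suffix $G_\sigma C_\beta | C_\beta$. Applying Lemma \ref{lem GC|CG replace} to this suffix, it can be replaced by $C_\beta | C_\beta G_\sigma$ with the same initial and terminal configurations (for the suffix) and the same total length $\sigma + 2\beta$. Concatenating with the leading $C_\beta$ yields an equivalent-length path of type $C_\beta C_\beta | C_\beta G_\sigma$ connecting the same initial and terminal configurations as the original path.

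Next, the prefix $C_\beta C_\beta | C_\beta$ of this rearranged path is non-optimal by Lemma \ref{CC|C non optimal}; that lemma exhibits a strictly shorter path between the endpoints of the prefix. Substituting this shorter sub-path in place of $C_\beta C_\beta | C_\beta$ and retaining the terminal $G_\sigma$ produces a path between the initial and terminal configurations of the original $C_\beta G_\sigma C_\beta | C_\beta$ path that is strictly shorter; hence $C_\beta G_\sigma C_\beta | C_\beta$ is not optimal. The symmetric form $C_\beta | C_\beta G_\sigma C_\beta$ follows by the same reasoning applied to its leading sub-path $C_\beta | C_\beta G_\sigma$: use Lemma \ref{lem GC|CG replace} to obtain an equivalent $G_\sigma C_\beta | C_\beta C_\beta$, whose suffix $C_\beta | C_\beta C_\beta$ is non-optimal by Lemma \ref{CC|C non optimal}.

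The main obstacle I anticipate is bookkeeping on signs: verifying that the particular sign patterns ($L$ versus $R$ for $u_g$, and $+$ versus $-$ for $v$) arising in any instance of $C_\beta G_\sigma C_\beta | C_\beta$ admissible on the $\mathcal{A}$-$\mathcal{C}$ phase portrait (Fig.~\ref{fig:A-C}) match one of the four cases explicitly treated in Lemma \ref{lem GC|CG replace}. Since the cusp flips the sign of $v$ but not of $u_g$, and the $G_\sigma$ must inherit the sign of $v$ from the preceding $C_\beta$, the admissible sign patterns reduce by the $L\leftrightarrow R$ and $v\mapsto -v$ symmetries to the four cases already verified in Lemma \ref{lem GC|CG replace}. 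Thus the enumeration involves no new computation; it only confirms that each admissible realization of $C_\beta G_\sigma C_\beta | C_\beta$ (and of its reverse) falls in the scope of the two lemmas invoked above.
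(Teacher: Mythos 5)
Your overall strategy---commute $G_\sigma$ across the cusp with Lemma \ref{lem GC|CG replace} and then invoke an already-established non-optimality result---is essentially the paper's own argument, just factored differently (the paper first unifies the turning directions with Lemma \ref{lem full cusp equivalence}, then commutes $G_\sigma$, and finishes with Lemma \ref{GC|C|C non optimal}; your version packages the last two steps inside Lemma \ref{CC|C non optimal}). However, there is a genuine gap in your sign bookkeeping. You assume that after the rearrangement the first two segments always form a $C_\beta C_\beta$ pair in the paper's sense, i.e.\ two tight turns joined at an inflection point. That holds only when the two $C_\beta$ segments flanking $G_\sigma$ have opposite turning directions (e.g.\ $R^-_\beta G^-_\sigma L^-_\beta|L^+_\beta$). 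But the phase portrait also admits the same-direction pattern---case (3) in the proof of Lemma \ref{lem G inflection}, e.g.\ $L^-_\beta G^-_\sigma L^-_\beta|L^+_\beta$, where the path switches evolution direction at the $G$ point and re-enters the same half plane. For that pattern your rearranged path is $L^-_\beta L^-_\beta|L^+_\beta G^+_\sigma$, whose first two segments merge into a single $L^-_{2\beta}$ arc: there is no inflection, the prefix is of type $C_{2\beta}|C_\beta$, and Lemma \ref{CC|C non optimal} (whose proof treats only the inflection patterns $L^-_\beta R^-_\beta|R^+_\beta$, $R^-_\beta L^-_\beta|L^+_\beta$, and their sign flips) does not apply to it. Your closing claim that ``each admissible realization \dots falls in the scope of the two lemmas invoked'' is therefore false for exactly these realizations.

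The repair is short but must be stated: since $\beta=\arctan(\frac{1}{\sqrt{U_{max}^4-1}})+\frac{\pi}{2}>\frac{\pi}{2}$, the merged arc has angle $2\beta>\pi$, so it is non-optimal by Lemma \ref{GC|C|C non optimal} (a $C_{\pi+\alpha}$ arc with $\alpha=2\beta-\pi>0$), and the full path inherits this non-optimality. With this case added---together with the mirror-image and sign-flipped patterns ($RGR|R$, $R^+G^+L^+|L^-$, etc.) and the reversed forms $C_\beta|C_\beta G_\sigma C_\beta$ handled as you describe---your proof is complete, and it then coincides with the paper's in all but the order of the two replacement steps.
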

\begin{proof}
    Consider a $R^-_\beta G^-_\sigma L^-_\beta| L^+_\beta$ path, where $\beta=\arctan(\frac{1}{\sqrt{U_{max}^4-1}})+\frac{\pi}{2}$ and $\sigma\geq0$. By Lemma \ref{lem full cusp equivalence}, the $L^-_\beta| L^+_\beta$ sub-path can be replaced by a $R^-_\beta| R^+_\beta$ sub-path. Therefore, a new path of type $R^-_\beta G^-_\sigma R^-_\beta| R^+_\beta$ can be constructed, which has the same length as the initial path. Furthermore, by Lemma \ref{lem GC|CG replace}, the $G^-_\sigma R^-_\beta| R^+_\beta$ sub-path can be replaced by a $ R^-_\beta| R^+_\beta G^+_\sigma$ sub-path; hence, an alternate $R^-_{2\beta}| R^+_\beta G^+_\sigma$ path with the same length is constructed. By Lemma \ref{GC|C|C non optimal}, the alternate path is not optimal; hence, the initial $R^-G^-L^-|L^+$ path is not optimal.

    The non-optimality of $R^-_\beta G^-_\sigma L^-_\beta| L^+_\beta$ is visualized in Fig. \ref{fig:R_neg_G_neg_L_neg_L_pos_non} for $U_{max}=3$ (or $r=\frac{1}{\sqrt{10}}$) and $\sigma=\frac{\pi}{6}$.

    \begin{figure}[h]
    \centering
    \setlength{\abovecaptionskip}{0pt}
     \includegraphics[width=0.24\textwidth]{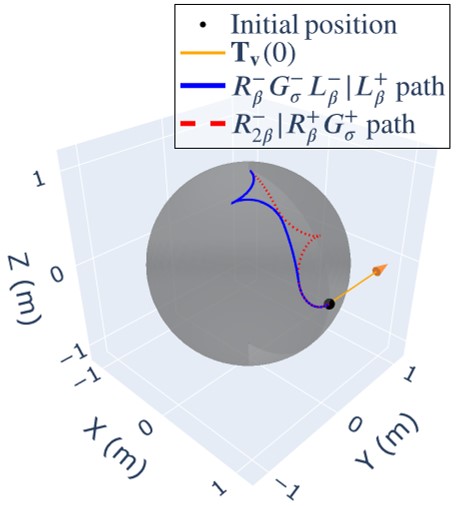}
    \caption{Non-optimality of $R^-_\beta G^-_\sigma L^-_\beta| L^+_\beta$}
    \label{fig:R_neg_G_neg_L_neg_L_pos_non}
    \end{figure}

    Using similar proofs for the path types $R^+G^+L^+| L^-$, $LGL|L$, $RGR| R $, $LGR|R$, and $C|CGC$, the lemma is proved.
\end{proof}

With Lemmas \ref{lem G inflection}, \ref{CC|C non optimal} and \ref{lem CGCC non optimal}, a sufficient list of the optimal path types without $T$ segments for $g=1$ can be obtained. However, it may be further restricted by showing that some of the path types are redundant (i.e., they may be replaced by path types with equivalent lengths but fewer segments).
    
    
    
    

\begin{lem}
\label{lem redundent}

    Paths of type $CC_\beta|C_\beta C$, $CC_\beta|C_\beta G$, 
$CGC_\beta|C_\beta C$, $GC_\beta|C_\beta G$, $CGC_\beta|C_\beta G$, 
and $CGC_\beta|C_\beta GC$ are redundant. In particular, each path can be replaced as follows:
\begin{enumerate}
  \item $CC_\beta|C_\beta C \;\rightarrow\; C|C$
  \item $CC_\beta|C_\beta G \;\rightarrow\; C|C_\beta G$
  \item $CGC_\beta|C_\beta C \;\rightarrow\; CGC_\beta|C$
  \item $GC_\beta|C_\beta G \;\rightarrow\; C|C_\beta G$
  \item $CGC_\beta|C_\beta G \;\rightarrow\; C|C_\beta G$
  \item $CGC_\beta|C_\beta GC \;\rightarrow\; CGC_\beta|C$
\end{enumerate}
\end{lem}
\begin{proof}
    The replacements of 1)-3) follow directly from Lemma \ref{lem full cusp equivalence}; 4) follows directly from Lemma \ref{lem GC|CG replace}, and 5) and 6) follow directly from Lemmas \ref{lem full cusp equivalence} and \ref{lem GC|CG replace}.
\end{proof}

\subsection{Optimal Paths Containing $T$ Segments}

As shown in the phase portrait in Fig. \ref{fig:A-C}, on an optimal path, an $L^0$ (or $R^0$) segment must be preceded and followed by an $L$ (or $R$) segment. By Lemma \ref{g=1 B=0}, these $L$ (or $R$) segments must be completely traversed if they are middle segments, with arc angles of $\beta$, by Lemma \ref{lem G/inflection C angle}. Therefore, the optimal path types containing $T$ segments can be easily restricted to a finite sufficient list by showing a path consisting of a $T$ segment and a completely traversed $C$ segment is non-optimal.

As highlighted in Remark \ref{T U=1}, for $g=1$, an optimal path may include $T$ segments solely in the special case of $U_{max}=1$ (or $r=\frac{1}{\sqrt{2}}$). Hence, $\beta=\arctan(\frac{1}{\sqrt{1-1}})+\frac{\pi}{2}=\pi$, we have the following lemma.

\begin{lem}
    \label{g=1 TC non opt}
    For $\rho\geq0$ and $U_{max}=1$ (or $r=\frac{1}{\sqrt{2}}$), a $T_\rho C_\pi$ (or $C_\pi T_\rho$) path is not optimal.
\end{lem}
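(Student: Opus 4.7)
The plan is to establish the lemma by exhibiting, for each $\rho \geq 0$, an admissible path reaching the same terminal configuration as $T_\rho C_\pi$ but in strictly less time. By the $L \leftrightarrow R$ and $+ \leftrightarrow -$ sign symmetries of the state equations, together with time-reversal to pass from $T_\rho C_\pi$ to $C_\pi T_\rho$, it suffices to treat the representative case $L^0_\rho R^-_\pi$; the other variants follow by relabeling and reversing the order of segments.

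First I would compute the target rotation. Using (\ref{M_L0}) and (\ref{M_R-}) with $r = 1/\sqrt{2}$ and the simplifications $\sin\pi = 0$, $\cos\pi = -1$, the matrix $\bold{M}_{R^-}(1/\sqrt{2},\pi)$ collapses to a 180-degree rotation; multiplying on the left by $\bold{M}_{L^0}(\rho)$ yields
\begin{equation*}
\bold{M}_1 := \bold{M}_{L^0}(\rho)\,\bold{M}_{R^-}\!\left(\tfrac{1}{\sqrt{2}},\pi\right) = \begin{pmatrix} 0 & 0 & 1 \\ -\sin\rho & -\cos\rho & 0 \\ \cos\rho & -\sin\rho & 0 \end{pmatrix}.
\end{equation*}
The total time of the original path is $\rho + \pi/\sqrt{2}$.

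Next I propose the three-segment admissible alternative $L^0_{\rho-\pi/2}\,G^-_{\pi/2}\,R^0_{\pi/2}$. A direct matrix multiplication using (\ref{M_L0}), (\ref{M_G-}), and (\ref{M_R0}), together with the elementary identities $\sin(\rho - \pi/2) = -\cos\rho$ and $\cos(\rho - \pi/2) = \sin\rho$, verifies that $\bold{M}_{L^0}(\rho-\pi/2)\,\bold{M}_{G^-}(\pi/2)\,\bold{M}_{R^0}(\pi/2) = \bold{M}_1$. Its total time is $(\rho-\pi/2)+\pi/2+\pi/2 = \rho + \pi/2$, which is strictly less than $\rho + \pi/\sqrt{2}$ because $\pi/2 < \pi/\sqrt{2}$. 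This settles the regime $\rho \geq \pi/2$, and a supporting sphere illustration analogous to Fig.~\ref{fig:L_neg_R_neg_R_pos_non} can accompany the construction.

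The main obstacle is the complementary range $\rho \in [0, \pi/2)$, where the first segment of the above alternative would have non-positive arc angle. I expect to address this by either re-parameterizing $L^0_{\rho-\pi/2}$ as $R^0_{\pi/2-\rho}$ (which still beats $\rho + \pi/\sqrt{2}$ provided $\rho$ exceeds the threshold $(3\pi/2 - \pi/\sqrt{2})/2$), or constructing a tailored shorter alternative for smaller $\rho$, such as a $CGC$- or $CC$-type path whose three arc angles are solved from the matrix equation $\bold{M}_{\mathrm{alt}} = \bold{M}_1$ and whose total time is then verified to be strictly less than $\rho + \pi/\sqrt{2}$ via a short trigonometric inequality. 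The symmetric statement for $C_\pi T_\rho$ follows from the mirrored construction $R^0_{\pi/2}\,G^+_{\pi/2}\,L^0_{\rho-\pi/2}$, and the remaining sign variants follow by the stated relabeling.
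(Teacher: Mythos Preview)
Your symmetry reduction is incorrect. The $L\leftrightarrow R$ and $+\leftrightarrow -$ symmetries partition the eight $T_\rho C_\pi$ strings into two orbits, $\{L^0 L^\pm,\,R^0 R^\pm\}$ and $\{L^0 R^\pm,\,R^0 L^\pm\}$, and your representative $L^0_\rho R^-_\pi$ lies in the second orbit only. But the first orbit is precisely the one that matters: from the phase portrait in Fig.~\ref{fig:A-C}, an $L^0$ segment can only be adjacent to $L^\pm$, never to $R^\pm$, so the extremals under study are of the form $L^0 L^\pm$ and $R^0 R^\pm$. Your construction $L^0_{\rho-\pi/2}G^-_{\pi/2}R^0_{\pi/2}$ does not adapt to $L^0_\rho L^-_\pi$ (the target matrix differs in sign pattern), so the argument, even where it is complete, does not touch the relevant cases. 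You also leave the range $0\le\rho<\pi/2$ genuinely open; the suggested re-parameterization $R^0_{\pi/2-\rho}G^-_{\pi/2}R^0_{\pi/2}$ only wins when $\rho>3\pi/4-\pi/(2\sqrt{2})\approx 1.245$, and the ``tailored shorter alternative'' for smaller $\rho$ is not constructed.

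The paper avoids both problems with a different device. For $L^0_\rho L^-_\pi$ it exhibits $R^+_\pi G^-_\rho$, verifies
\[
\bold{M}_{L^0}(\rho)\,\bold{M}_{L^-}\!\bigl(\tfrac{1}{\sqrt{2}},\pi\bigr)=\bold{M}_{R^+}\!\bigl(\tfrac{1}{\sqrt{2}},\pi\bigr)\,\bold{M}_{G^-}(\rho),
\]
and notes that when $U_{max}=1$ the two paths have \emph{equal} time $\rho+\pi/\sqrt{2}$. The alternative is not claimed to be shorter; rather, it violates the phase-portrait necessary conditions (an $R^+$ segment may be followed only by $G^+$, $R^0$, $L^+$, or $R^-$, never by $G^-$), hence is non-optimal. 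Since it has the same endpoints and the same time as $L^0_\rho L^-_\pi$, the latter is non-optimal as well. This works uniformly for all $\rho\ge 0$ with no case split, and the three companion cases $L^0_\rho L^+_\pi$, $R^0_\rho R^\pm_\pi$ follow by the stated symmetries.
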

\begin{proof}
    Consider a $L^0_\rho L^-_\pi$ path, where $\rho\geq 0$. It is claimed that when $U_{max}=1$, the $L^0_\rho L^-_\pi$ path can be replaced by an alternate $R^+_\pi G^-_\rho$ path of equivalent time. From the phase portrait in Fig. \ref{fig:A-C}, an $R^+$ segment can only be followed by a $G^+$, $R^0$, $L^+$, or $R^-$ segment on an optimal path. Hence, the alternate $R^+_\pi G^-_\rho$ path is not optimal, indicating the non-optimality of the $L^0_\rho L^-_\pi$ path.
    
    As one of the paths considered contains a $T$ segment, we must now compare the paths based on time rather than length. We first show that the two paths are of equivalent time. Recall that the time spent on a segment equals $\frac{\phi}{\omega}$, where $\phi$ and $\omega=\sqrt{v^2+u_g^2}$ denote the angle and angular frequency of the segment, respectively. Therefore, the time duration spent for a $L^0_\rho L^-_\pi$ path and a $R^+_\pi G^-_\rho$ path equal $\frac{\rho}{\sqrt{U_{max}^2}}+\frac{\pi}{\sqrt{1+U_{max}^2}}$ and $\frac{\pi}{\sqrt{1+U_{max}^2}}+\rho$, respectively. These time durations are equivalent when $U_{max}=1$.

    With the equivalence of time shown, it suffices to show that $\bold{M}_{L^0}(\rho)\bold{M}_{L^-}(\frac{1}{\sqrt{2}},\pi)=\bold{M}_{R^+}(\frac{1}{\sqrt{2}},\pi)\bold{M}_{G^-}(\rho)$. By (\ref{M_R+})-(\ref{M_L-}), we obtain
    \begin{align}
    \bold{M}_{L^0}(\rho)\bold{M}_{L^-}(\frac{1}{\sqrt{2}},\pi)&=\bold{M}_{R^+}(\frac{1}{\sqrt{2}},\pi)\bold{M}_{G^-}(\rho) \notag \\
    &=\begin{pmatrix}
0 & 0 &  -1 \\
s(\rho) & -c(\rho)  & 0 \\
-c(\rho) & -s(\rho)  & 0
\end{pmatrix}.
    \end{align}

    The non-optimality of $L^0_\rho L^-_\pi$ is visualized in Fig. \ref{fig:L_zero_L_neg_non} for $\rho=\frac{\pi}{4}$.

    \begin{figure}[h]
    \centering
    \setlength{\abovecaptionskip}{0pt}
     \includegraphics[width=0.24\textwidth]{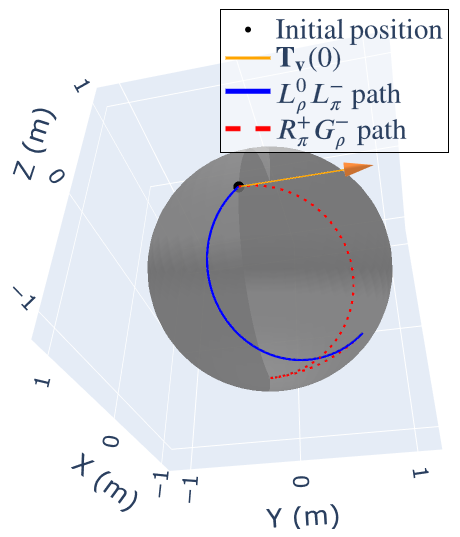}
    \caption{Non-optimality of $L^0_\rho L^-_\pi$}
    \label{fig:L_zero_L_neg_non}
    \end{figure}

    Using similar proofs for paths $L^0_\rho L^+_\pi$, $R^0_\rho R^-_\pi$, and $R^0_\rho R^+_\pi$, the lemma is proved.
\end{proof}

Since for $g=1$ and $U_{max}\geq1$, an optimal path may include $T$ segments solely when $U_{max}=1$ (by Remark \ref{T U=1}), and a $C_\pi$ segment corresponds to a middle $C$ segment when $U_{max}=1$, the following corollary directly follows from Lemma \ref{g=1 TC non opt}. 

\begin{cor}
\label{cor TC}
    For $g=1$ and $U_{max}\geq1$ (or $r\leq\frac{1}{\sqrt{2}}$), a path containing a $T$ segment and a middle $C$ segment is not optimal.
\end{cor}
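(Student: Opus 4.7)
The plan is to reduce the statement to Lemma \ref{g=1 TC non opt} by exhibiting, within any candidate path, a sub-path of type $T_\rho C_\pi$ or $C_\pi T_\rho$.

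First, I would invoke Remark \ref{T U=1}: since the path contains a $T$ segment and $g=1$ with $U_{max} \geq 1$, we must have $U_{max}=1$ (equivalently $r=\frac{1}{\sqrt{2}}$). Substituting $U_{max}=1$ into $\beta=\arctan\!\bigl(\tfrac{1}{\sqrt{U_{max}^4-1}}\bigr)+\tfrac{\pi}{2}$ gives $\arctan(+\infty)+\tfrac{\pi}{2}=\pi$, so Lemma \ref{lem G/inflection C angle} tells us every completely traversed $C$ segment has angle exactly $\pi$.

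Next, I would observe that a middle $C$ segment on any extremal path is necessarily completely traversed. This is a direct reading of Fig.~\ref{fig:A-C}: a middle $C$ begins and ends at switching points on the $\mathcal{A}$ or $\mathcal{C}$ axis (cusps, inflection points, or $T$-junctions), which forces $|\mathcal{C}|$ to move monotonically between $0$ and $1$ along the segment. Therefore any middle $C$ segment in our path has angle $\beta=\pi$.

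The last, slightly combinatorial step is to show that whenever a path contains both a $T$ segment and some middle $C$ segment, there exists a middle $C$ segment \emph{adjacent} to the $T$. By the phase portrait, every $T$ segment is immediately preceded and followed by an $L$ or $R$ segment. If neither of these neighbors were a middle segment, then the entire path would have to be one of $T$, $TC$, $CT$, or $CTC$, and in each of these cases the only possible middle segment is $T$ itself — ruling out the existence of any middle $C$ by assumption. Consequently, at least one of the $C$ neighbors of $T$ is a middle segment, hence completely traversed with angle $\pi$. This yields a $T_\rho C_\pi$ (or $C_\pi T_\rho$) sub-path, which is non-optimal by Lemma \ref{g=1 TC non opt}, and therefore the full path is non-optimal. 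The main obstacle, though mild, is precisely this enumeration of degenerate short-path cases needed to guarantee adjacency of the middle $C$ to the $T$; the rest of the argument is immediate from the earlier results.
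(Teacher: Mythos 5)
Your overall route is the same as the paper's: use Remark \ref{T U=1} to force $U_{max}=1$ (so $\beta=\pi$ and middle $C$ segments become $C_\pi$ segments), then reduce to Lemma \ref{g=1 TC non opt} by producing a $T_\rho C_\pi$ or $C_\pi T_\rho$ sub-path. Your final combinatorial step --- showing that if no neighbor of the $T$ were a middle segment, the whole path would be one of $T$, $TC$, $CT$, $CTC$ and hence could contain no middle $C$ at all --- is correct, and is in fact spelled out more carefully than in the paper, which leaves this adjacency reduction implicit.

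However, your second step contains a claim that is false as stated: that a middle $C$ segment begins and ends at switching points and that this \emph{forces} $|\mathcal{C}|$ to move monotonically between $0$ and $1$, i.e., that every middle $C$ segment is completely traversed. For $g=1$ and $U_{max}=1$, Lemma \ref{g=1 B=0} permits $\mathcal{B}=0$ at all four points $(0,\pm 1)$ and $(\pm 1,0)$, so a middle $C$ segment adjacent to the $T$ can leave $(-1,0)$, reach $(0,1)$ where $\mathcal{B}=0$, reverse its evolution direction there, and return to $(-1,0)$ while remaining in the same quadrant. Since the signs of $\mathcal{A}$ and $\mathcal{C}$ never change, this is a \emph{single} $C$ segment traversed twice, of angle $2\beta=2\pi$ --- precisely the situation analyzed in the proof of Lemma \ref{lem G inflection} and depicted in Fig. \ref{fig:g=1 twice traversed}. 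For such a neighbor there is no $T_\rho C_\pi$ sub-path, so Lemma \ref{g=1 TC non opt} does not apply and your argument, as written, misses this case. The patch is immediate: a direction-switched middle $C$ segment has angle $2\pi=\pi+\pi$, so the path is non-optimal by Lemma \ref{GC|C|C non optimal}. With that extra case handled, your proof is complete and coincides with the paper's argument.
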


Up to this point, all the key results (that is, Lemmas \ref{lem G inflection}, \ref{CC|C non optimal}, \ref{lem CGCC non optimal}, \ref{lem redundent}, and Corollary \ref{cor TC}) are proved, along with detailed proofs of the supporting lemmas.

\section{Optimal Paths for $g >1$ \label{sec3.2}}


For $g > 1$, as illustrated in Fig. \ref{fig:g=1 diagram_all}, Lemma \ref{lem g>1 type angle} is the key result that determines the optimal paths as a concatenation. Meanwhile, Lemma \ref{lem g>1 CC|C nonoptimal} serves as the key result that confines the concatenation to a finite list. Below, we enumerate these key results, and their detailed proofs will be provided later in this section.

\begin{defn}
   Throughout this paper, we define the angle $0<\mu<\arctan(\frac{1}{\sqrt{U_{max}^4-1}})+\frac{\pi}{2}$, which will be utilized extensively.
\end{defn}

\noindent\textbf{Lemma \ref{lem g>1 type angle}.}
    \textit{For $g>1$ and $U_{max}\geq1$ (or $r\leq\frac{1}{\sqrt{2}}$), the optimal path is a concatenation of $|C_\mu C_\mu|$ sub-paths where the start/terminal sub-paths can be truncated.}

\vspace{4pt}
\noindent\textbf{Lemma \ref{lem g>1 CC|C nonoptimal}.}
    \textit{For $U_{max}\geq1$ (or $r\leq\frac{1}{\sqrt{2}}$), a path of type $C_\mu C_\mu| C_\mu C_\mu$ is not optimal.}
    \vspace{4pt}
    
Given the above key results, we now prove the main result for $g>1$, as stated in Proposition \ref{prop3}.
\begin{prop}
\label{prop3}
    For $g>1$ and $U_{max}\geq1$ (or $r\leq\frac{1}{\sqrt{2}}$), the optimal path may be restricted to the following types, together with their symmetric forms: 
    
    $C$,\; $CC$,\; $C|C$,\; 
    $C|C_\mu C$,\; $C|C_\mu C_\mu|C$,\;  $CC_\mu |C_\mu C$, 
    
    $C|C_\mu C_\mu |C_\mu C$,\; $CC_\mu|C_\mu C_\mu|C_\mu C$  
    
    where $0<\mu<\arctan(\frac{1}{\sqrt{U_{max}^4-1}})+\frac{\pi}{2}$.
\end{prop}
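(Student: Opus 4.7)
The plan is to mirror the structure used in the proof of Proposition \ref{prop2}, substituting the two key results for $g>1$: Lemma \ref{lem g>1 type angle} and Lemma \ref{lem g>1 CC|C nonoptimal}. By the former, any optimal path is a concatenation of middle $|C_\mu C_\mu|$ sub-paths with possibly truncated start and terminal sub-paths $P_i$ and $P_f$. Neither $G$ nor $T$ segments can appear, because for $g>1$ the identity $\mathcal{A}^2+\mathcal{B}^2+\mathcal{C}^2\equiv g$ forces $\mathcal{B}^2>0$ at $(0,\pm 1)$ and $(\pm\tfrac{1}{U_{max}},0)$, so the extremal cannot linger at those points to insert a $G$ or $T$ segment.

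Next, I would invoke Lemma \ref{lem g>1 CC|C nonoptimal} to rule out paths containing two or more consecutive middle $|C_\mu C_\mu|$ sub-paths: any such concatenation produces the pattern $C_\mu C_\mu|C_\mu C_\mu$ between the outermost cusps of the two sub-paths, which is non-optimal. Hence, an optimal path contains at most one middle $|C_\mu C_\mu|$ sub-path, and the analysis splits according to the number $k\in\{0,1,2\}$ of cusps, where $k=2$ corresponds to exactly one middle sub-path.

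The enumeration then proceeds case by case. For $k=0$, there are no cusps and hence no completeness requirement on the boundary segments, yielding paths of type $C$ and $CC$. For $k=1$, the two segments flanking the cusp must be completely traversed with angle $\mu$ by Lemma \ref{lem g>1 type angle}, so $P_i\in\{C_\mu,CC_\mu\}$ and $P_f\in\{C_\mu,C_\mu C\}$; the four pairings collapse under symmetry to $C|C$, $C|C_\mu C$, and $CC_\mu|C_\mu C$. For $k=2$, the single middle sub-path contributes $C_\mu C_\mu$ between the two cusps, and with the same boundary options for $P_i$ and $P_f$ one obtains $C|C_\mu C_\mu|C$, $C|C_\mu C_\mu|C_\mu C$, and $CC_\mu|C_\mu C_\mu|C_\mu C$ after absorbing one symmetric pair.

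The main obstacle will be the careful bookkeeping of truncations: one must justify that any boundary sub-path longer than $CC_\mu$ (or $C_\mu C$) would necessarily introduce an additional cusp, thereby reducing the situation to a higher-$k$ case already covered, and that the completeness of the middle $C_\mu$ segments indeed forces their angles to equal $\mu$ so that the enumeration is exhaustive. Once this accounting is done, collecting the forms across $k=0,1,2$ and adjoining their symmetric reversals recovers exactly the list stated in Proposition \ref{prop3}.
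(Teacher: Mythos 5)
Your proposal follows essentially the same route as the paper's proof: reduce any optimal path to a concatenation of $|C_\mu C_\mu|$ sub-paths with truncated ends via Lemma \ref{lem g>1 type angle}, cap the number of contiguous middle sub-paths at one via Lemma \ref{lem g>1 CC|C nonoptimal}, and then enumerate. Your bookkeeping by cusp count $k\in\{0,1,2\}$ is equivalent to the paper's split into cusp-free paths $P_s\in\{C,CC\}$, paths $P_i|P_f$, and paths $P_i|C_\mu C_\mu|P_f$ with $P_i\in\{C,CC_\mu\}$, $P_f\in\{C,C_\mu C\}$; your observation that $G$ and $T$ segments are excluded because $\mathcal{B}^2>0$ is exactly the content of the paper's supporting Lemma \ref{lem g>1 B!=0}, already folded into Lemma \ref{lem g>1 type angle}.

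One statement in your $k=1$ case is wrong as written, however: you claim that ``the two segments flanking the cusp must be completely traversed with angle $\mu$,'' giving $P_i\in\{C_\mu,CC_\mu\}$ and $P_f\in\{C_\mu,C_\mu C\}$. Completeness is forced only on \emph{middle} segments, i.e., segments bounded on both sides by switching points of the phase portrait. A segment adjacent to a cusp that is also the first (or last) segment of the whole path begins (or ends) at an arbitrary interior point of a quadrant and is therefore truncated: its angle is at most $\mu$ but otherwise free. Taken literally, your claim would yield a list of the form $C_\mu|C_\mu$, $C_\mu|C_\mu C$, etc., which is too restrictive to be a \emph{sufficient} list---it would exclude optimal paths whose boundary segments are partial, and sufficiency is precisely what Proposition \ref{prop3} asserts. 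Your final enumeration ($C|C$, $C|C_\mu C$, $CC_\mu|C_\mu C$, and the $k=2$ analogues) silently drops the spurious constraint and is the correct one; the fix is to state that only a cusp-flanking segment whose other endpoint is an inflection point (equivalently, any segment that is not the first or last of the path) must be complete with angle $\mu$, which is also what justifies the $C_\mu$ subscripts appearing only in the interior positions of the listed types.
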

\begin{proof}
    For $g>1$, Lemma \ref{lem g>1 type angle} dictates that the optimal paths can only be a concatenation of $|C_\mu C_\mu|$ with the start and terminal sub-paths, say $P_i$ and $P_f$, potentially being truncated. Lemma \ref{lem g>1 CC|C nonoptimal} rules out the optimality of paths containing two or more contiguous middle sub-paths of the form $|C_\mu C_\mu|$. Therefore, the sufficient list of optimal path types consists of the following forms: (a) paths with one full middle sub-path and two truncated start/terminal sub-paths, that is, $P_i|C_\mu C_\mu|P_f$, where $P_i \in \{C, CC_\mu\}$ and $P_f \in \{C, C_\mu C\}$; (b) paths with two truncated start/terminal sub-paths, that is, $P_i|P_f$; and (c) paths with one truncated sub-path, that is, $P_s\in\{C,CC\}$. The union of forms (a), (b), and (c) aligns with the sufficient list provided in Proposition \ref{prop3}.
\end{proof}

In the remainder of this section, we present comprehensive proofs of the key results utilized above, along with the lemmas that underpin them. The following two lemmas (\ref{lem g>1 angle} and \ref{lem g>1 B!=0}) build the foundation for the proof of the key result in Lemma \ref{lem g>1 type angle}.
\begin{lem}
\label{lem g>1 angle}
    For $g>1$, the angle of a $C$ segment that is completely traversed clockwise/counter-clockwise is less than $\arctan(\frac{1}{\sqrt{U_{max}^4-1}})+\frac{\pi}{2}$ radians.
\end{lem}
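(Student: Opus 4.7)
The plan is to re-use the derivation from Lemma \ref{lem G/inflection C angle}, noting that its closed-form expression (\ref{alpha1 alpha2}) for the arc angle $\alpha_1$ of a completely traversed $C$ segment was in fact derived without ever invoking $g=1$: the boundary conditions $\mathcal{A}(0)=0,~\mathcal{C}(0)=1$ and $\mathcal{A}(t_1)=\tfrac{1}{U_{max}},~\mathcal{C}(t_1)=0$ together with (\ref{ABC constant}) give $\mathcal{B}(0)=-\sqrt{g-1}$ and $\mathcal{B}(t_1)=-\sqrt{g-\tfrac{1}{U_{max}^2}}$ for a clockwise-traversed $R^-$ segment and any $g\ge 1$. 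Hence formula (\ref{alpha1 alpha2}) holds for every admissible $g$, and the task reduces to showing that it is strictly decreasing in $g$ on $(1,\infty)$.

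First I would verify monotonicity by inspecting the two arctangent terms separately. The first term, $\arctan\!\bigl(\tfrac{1}{U_{max}\sqrt{(g-\frac{1}{U_{max}^2})(1+U_{max}^2)}}\bigr)$, has a strictly positive argument that decreases in $g$, so the term itself is strictly decreasing. The second term, after using $-\arctan(-x)=\arctan(x)$, becomes $\arctan\!\bigl(\tfrac{U_{max}}{\sqrt{(g-1)(1+U_{max}^2)}}\bigr)$, whose argument drops from $+\infty$ (as $g\to 1^+$) to $0$; the $\arctan$ therefore drops strictly from $\tfrac{\pi}{2}$ down toward $0$. The sum $\alpha_1(g)$ is strictly decreasing on $[1,\infty)$, and since Lemma \ref{lem G/inflection C angle} already gives $\alpha_1(1)=\arctan(\tfrac{1}{\sqrt{U_{max}^4-1}})+\tfrac{\pi}{2}$, this yields $\alpha_1(g)<\arctan(\tfrac{1}{\sqrt{U_{max}^4-1}})+\tfrac{\pi}{2}$ for every $g>1$. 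The remaining cases (the $R^-$ segment traversed counter-clockwise, and the segments $R^+,L^-,L^+$ in both directions) follow by the same symmetry argument used at the end of the proof of Lemma \ref{lem G/inflection C angle}.

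A useful geometric sanity check, which could also be folded into the proof, is that the $\mathcal{A}$--$\mathcal{C}$ phase-portrait trajectory is determined solely by (\ref{A-C relation}) and is therefore identical for all values of $g$; what changes is the speed of traversal $\tfrac{d\mathcal{A}}{dt}=v\mathcal{B}$, which is faster for larger $g$ because $|\mathcal{B}|=\sqrt{g-\mathcal{A}^2-\mathcal{C}^2}$ grows with $g$. Thus the same planar arc is traversed in less time, and since $\alpha_1=\sqrt{1+U_{max}^2}\,t_1$, in a smaller arc angle. I do not expect any real obstacle: the key expression (\ref{alpha1 alpha2}) is already in hand, and the only mild care required is handling the sign of the second arctan argument when passing from $g=1$ (where, under the paper's arctan convention in Section \ref{sec 3.1}, it hits the boundary value $-\tfrac{\pi}{2}$) to $g>1$ (where it is a finite negative number strictly greater than $-\tfrac{\pi}{2}$), which is exactly what makes the inequality strict.
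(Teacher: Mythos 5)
Your proposal is correct and follows essentially the same route as the paper: the paper's proof likewise observes that the general-$g$ formula (\ref{alpha1 alpha2}) from the proof of Lemma \ref{lem G/inflection C angle} still applies, invokes monotonic decrease of the arc angle in $g$, and concludes that the angle for $g>1$ is below its value $\arctan(\frac{1}{\sqrt{U_{max}^4-1}})+\frac{\pi}{2}$ at $g=1$. The only difference is that you verify the monotonicity explicitly term-by-term (and note the boundary behavior of the second arctangent under the paper's convention), whereas the paper simply asserts it.
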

\begin{proof}
    Similarly to the proof of Lemma \ref{lem G/inflection C angle}, (\ref{alpha1 alpha2}) can be obtained, which shows that the angle of a completely traversed $C$ segment monotonically decreases as $g$ increases given a fixed $U_{max}$. Therefore, the angle when $g>1$ is less than the one when $g=1$, hence, less than $\arctan(\frac{1}{\sqrt{U_{max}^4-1}})+\frac{\pi}{2}$.
\end{proof}

\begin{lem}
\label{lem g>1 B!=0}
    For $g>1$ and $U_{max}\geq1$ (or $r\leq\frac{1}{\sqrt{2}}$), $\mathcal{B}\neq0$.
\end{lem}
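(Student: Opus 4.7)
The plan is to extend the computation used in the proof of Lemma \ref{g=1 B=0} from the special case $g=1$ to the general case $g>1$. The key quantity to bound is $\mathcal{A}^2(t)+\mathcal{C}^2(t)$, since from the conservation law (\ref{ABC constant}) we have $\mathcal{B}^2(t)=g-\mathcal{A}^2(t)-\mathcal{C}^2(t)$, so showing $\mathcal{A}^2+\mathcal{C}^2\le 1$ globally would immediately give $\mathcal{B}^2\ge g-1>0$.

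First, I would restrict attention to the left half-plane of the $\mathcal{A}\text{-}\mathcal{C}$ portrait, where $u_g=U_{max}$ and $\mathcal{A}\in[-\tfrac{1}{U_{max}},0]$. Using the relation (\ref{A-C relation}), exactly as in Lemma \ref{g=1 B=0}, I would obtain
\begin{equation*}
f(\mathcal{A}):=\mathcal{A}^2+\mathcal{C}^2=(1+U_{max}^2)\Bigl(\mathcal{A}+\tfrac{U_{max}}{1+U_{max}^2}\Bigr)^{\!2}+\tfrac{1}{1+U_{max}^2}.
\end{equation*}
This is a quadratic in $\mathcal{A}$ with positive leading coefficient and minimizer at $\mathcal{A}=-\tfrac{U_{max}}{1+U_{max}^2}$, so its maximum on the admissible interval $[-\tfrac{1}{U_{max}},0]$ is attained at one of the endpoints. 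The hypothesis $U_{max}\ge 1$ places the minimizer closer to $-\tfrac{1}{U_{max}}$ than to $0$ (with equality precisely at $U_{max}=1$), so $\max f=f(0)=1$. By the symmetric analysis in the right half-plane, $\mathcal{A}^2+\mathcal{C}^2\le 1$ holds pointwise along any extremal, uniformly in $t$.

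Combining this bound with (\ref{ABC constant}), I would conclude
\begin{equation*}
\mathcal{B}^2(t)=g-\bigl(\mathcal{A}^2(t)+\mathcal{C}^2(t)\bigr)\ge g-1>0,
\end{equation*}
so $\mathcal{B}(t)\ne 0$ for every $t$. I expect this to be essentially all that is needed; the only minor obstacle is confirming that the geometric restriction $\mathcal{A}\in[-\tfrac{1}{U_{max}},0]$ (respectively $\mathcal{A}\in[0,\tfrac{1}{U_{max}}]$ in the right half-plane) remains valid in the $g>1$ regime, which follows from (\ref{A-C relation}) together with the bang-bang structure (\ref{bangbang2}) exactly as before, since $|\mathcal{C}|=1-U_{max}|\mathcal{A}|\ge 0$ forces $|\mathcal{A}|\le\tfrac{1}{U_{max}}$ along the path.
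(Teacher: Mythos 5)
Your proposal is correct and follows essentially the same route as the paper: the paper's proof likewise invokes the quadratic expression \eqref{A^2+C^2 left plane} from Lemma~\ref{g=1 B=0} to conclude that $\mathcal{A}^2+\mathcal{C}^2\leq 1$, and then uses $\mathcal{A}^2+\mathcal{B}^2+\mathcal{C}^2\equiv g>1$ to get $\mathcal{B}^2\geq g-1>0$. The only difference is that you spell out the endpoint comparison and the validity of $|\mathcal{A}|\leq\frac{1}{U_{max}}$ explicitly, which the paper leaves implicit by reference to the $g=1$ case.
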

\begin{proof}
   Similarly to the proof of Lemma \ref{g=1 B=0}, $f(\mathcal{A})$ can be defined as in (\ref{eq: f(A)_definition}), which by definition equals $\mathcal{A}^2 + \mathcal{C}^2$. Using the same argument as in Lemma \ref{g=1 B=0}, we can show that $f(\mathcal{A})$ has a maximum value of 1. Since $\mathcal{A}^2+\mathcal{B}^2+\mathcal{C}^2\equiv g>1$, $\mathcal{B}^2$ is always greater than zero, hence, $\mathcal{B}\neq0$.
\end{proof}

Using Lemmas \ref{lem g>1 angle} and \ref{lem g>1 B!=0}, we proceed to prove the key result of Lemma \ref{lem g>1 type angle}.

\begin{lem}
    \label{lem g>1 type angle}
    For $g>1$ and $U_{max}\geq1$ (or $r\leq\frac{1}{\sqrt{2}}$), the optimal path is a concatenation of $|C_\mu C_\mu|$ sub-paths where the start/terminal sub-paths can be truncated.
\end{lem}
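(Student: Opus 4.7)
The plan is to use the constraint $g > 1$ to rigidify the evolution of the $(\mathcal{A}, \mathcal{C})$ phase portrait along any optimal trajectory, and then read off the segment structure directly from the portrait. First I would invoke Lemma \ref{lem g>1 B!=0} to conclude that $\mathcal{B}(t) \neq 0$ on the entire trajectory; since $\mathcal{B}$ is continuous, it retains a fixed sign, and Lemma \ref{clock/counter B} then forces the trajectory to move monotonically in a single rotational direction (say clockwise, without loss of generality) around the closed curve defined by (\ref{A-C relation}).

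Next, I would argue that no $G$ or $T$ segments can appear on the path. By Remark \ref{rem3}, the trajectory can linger at $(0, \pm 1)$ (yielding a $G$ segment) or at $(\pm \frac{1}{U_{max}}, 0)$ (yielding a $T$ segment) only when $\mathcal{B} = 0$, which is impossible here. Consequently, every segment is a $C$ segment, and transitions between consecutive segments occur only as the phase-portrait trajectory crosses an axis: crossings of the $\mathcal{A}$-axis at $(\pm \frac{1}{U_{max}}, 0)$ are cusps (the sign of $v$ flips while the magnitude of $u_g$ is preserved), while crossings of the $\mathcal{C}$-axis at $(0, \pm 1)$ are inflection points (the sign of $u_g$ flips while the magnitude of $v$ is preserved).

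Because the path traverses these four axis-crossing points in strictly monotone rotational order, cusps and inflection points alternate. In particular, between any two consecutive cusps there is exactly one inflection point, and the path between those cusps consists of two consecutive $C$ segments joined at that inflection point. Each such segment is \emph{completely traversed} (from one axis to the adjacent axis, in the sense of the footnote to Lemma \ref{lem G/inflection C angle}), so Lemma \ref{lem g>1 angle} applies: every completely traversed $C$ segment has the common arc angle $\mu$ determined by the conserved value of $g$ through formula (\ref{alpha1 alpha2}) and by $U_{max}$, and satisfies $0 < \mu < \arctan(\frac{1}{\sqrt{U_{max}^4-1}}) + \frac{\pi}{2}$. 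Hence every full middle sub-path delimited by two cusps is of the form $|C_\mu C_\mu|$.

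Finally, the starting and terminal sub-paths need not begin or end at cusps, so they may be truncated versions of a $|C_\mu C_\mu|$ block, such as a single partial $C$, or a full $C_\mu$ followed by a partial $C$ joined through an inflection point. The main obstacle I anticipate is not algebraic but careful bookkeeping: justifying the strict alternation of cusps and inflection points under monotone evolution, and enumerating the boundary truncations cleanly so that the resulting catalogue matches the path types eventually assembled in Proposition \ref{prop3}.
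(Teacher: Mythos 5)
Your proposal is correct and follows essentially the same route as the paper's proof: Lemma \ref{lem g>1 B!=0} fixes the sign of $\mathcal{B}$, Lemma \ref{clock/counter B} then forces monotone rotation around the $\mathcal{A}$--$\mathcal{C}$ portrait with no $G$ or $T$ segments possible, and Lemma \ref{lem g>1 angle} (with $g$ conserved along the trajectory) gives the common angle $\mu$ of every completely traversed $C$ segment. If anything, your explicit bookkeeping of the alternation between cusps (crossings at $(\pm\frac{1}{U_{max}},0)$) and inflection points (crossings at $(0,\pm1)$) spells out what the paper leaves to inspection of Fig. \ref{fig:g>1 cases}.
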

\begin{proof}
    By Lemma \ref{lem g>1 B!=0}, either $\mathcal{B}>0$ or $\mathcal{B}<0$ for an optimal path; consequently, $\frac{d\mathcal{A}}{dt} \ne 0$ when $v \ne 0$ and $\frac{d\mathcal{C}}{dt} \ne 0$ when $u_g \ne 0$. Therefore the path evolves either clockwise or counter-clockwise without entering a $G$ or $T$ segment on the $\mathcal{A}-\mathcal{C}$ portrait by Lemma \ref{clock/counter B}, as shown in Fig. \ref{fig:g>1 cases}. From Fig. \ref{fig:g>1 cases}, it is clear that the optimal path is a concatenation of $|CC|$ sub-paths; for start/terminal sub-paths that do not start/terminate at cusps, they are truncated but the evolution directions remain the same. Furthermore, it is clear that all middle $C$ segments are completely traversed once, either clockwise or counter-clockwise; hence, they all have an angle of $\mu<\arctan(\frac{1}{\sqrt{U_{max}^4-1}})+\frac{\pi}{2}$ according to Lemma \ref{lem g>1 angle}.
    \begin{figure}[h]
    \centering
    \subfigure[Clockwise]{\includegraphics[width=0.24\textwidth]{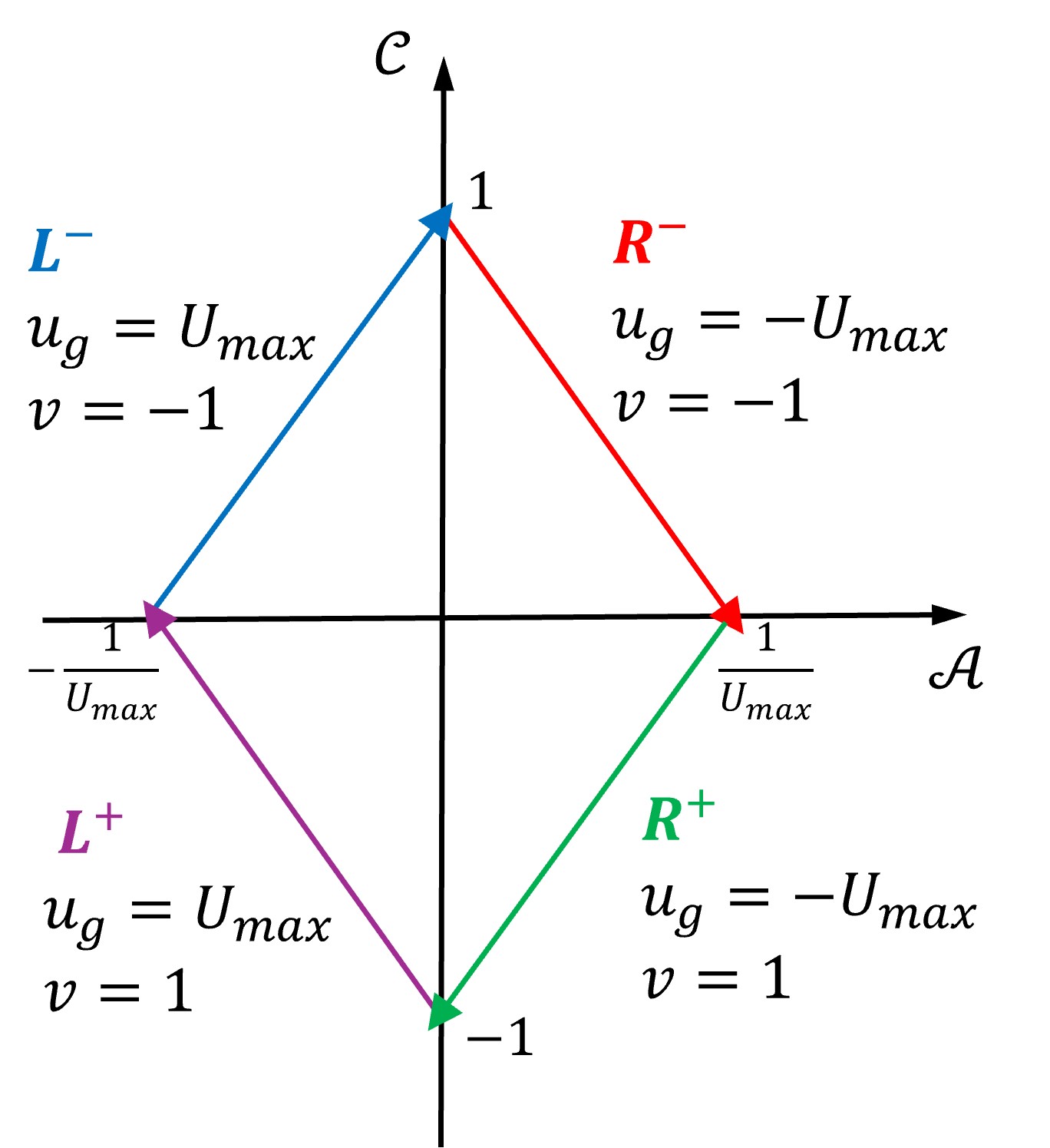}}
    \subfigure[Counter-clockwise]{\includegraphics[width=0.234\textwidth]{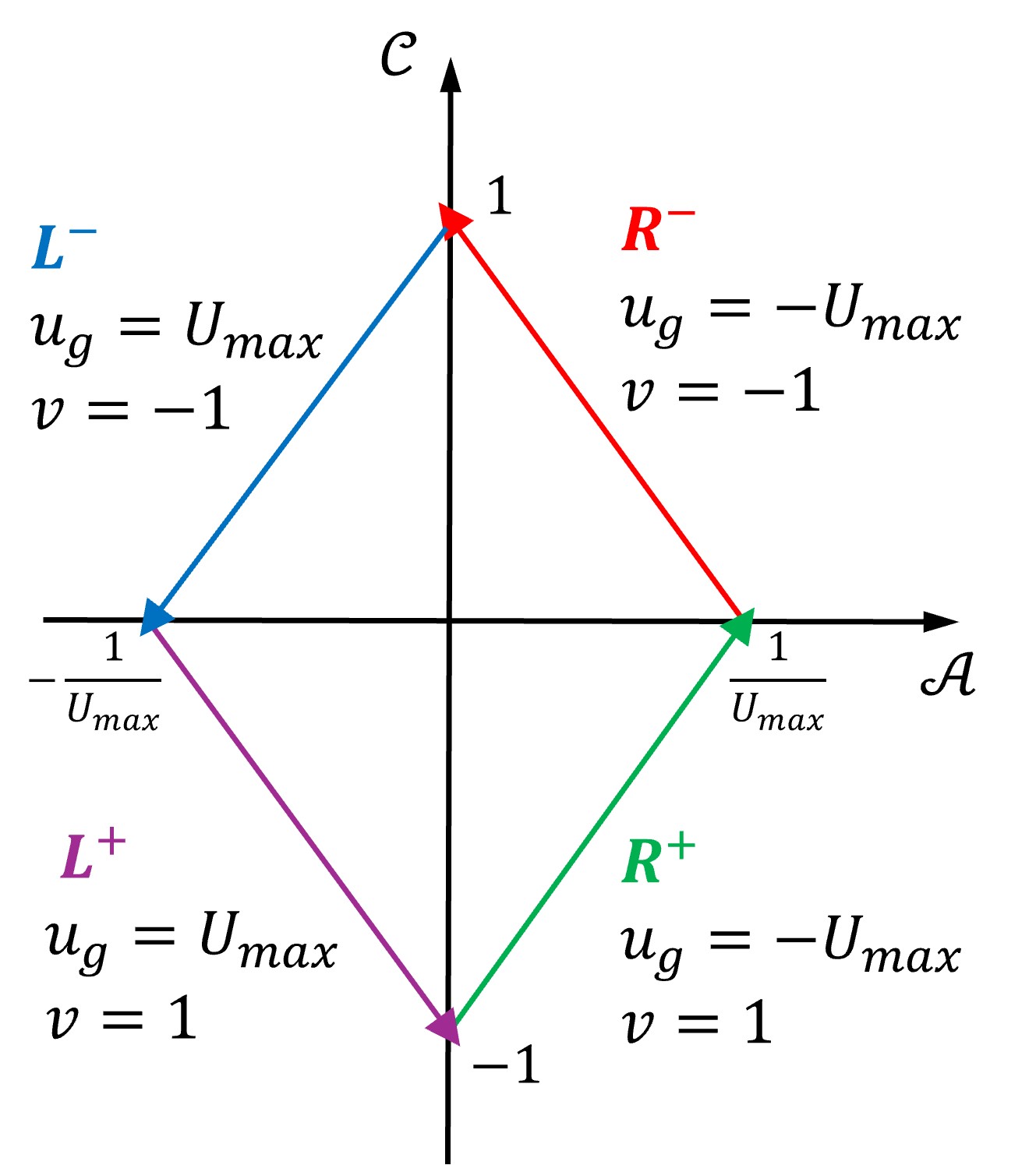}}
    \caption{Evolution of $\mathcal{A}-\mathcal{C}$ for $g>1$}
    \label{fig:g>1 cases}
    \end{figure}
\end{proof}

In the following Lemma \ref{lem CCC2G}, we demonstrate that for $0<\mu<\arctan(\frac{1}{\sqrt{U_{max}^4-1}})+\frac{\pi}{2}$, there is always an $\epsilon$ satisfying $0<\epsilon<2\mu$, which allows the construction of a $C_\mu C_\epsilon C_\mu$ path connecting the same start and terminal configurations as some $G$ path. This result will become useful for proving the key result Lemma \ref{lem g>1 CC|C nonoptimal}.

\begin{lem}
\label{lem CCC2G}
    For $U_{max}\geq1$ (or $r\leq\frac{1}{\sqrt{2}}$), there exists an $\epsilon$ such that $0<\epsilon<2\mu$ and a $C^+_\mu C^+_\epsilon C^+_\mu$ (or $C^-_\mu C^-_\epsilon C^-_\mu$) path can be replaced by a $G^+_\theta$ (or $G^-_\theta$) path.
\end{lem}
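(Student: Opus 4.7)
The plan is to verify, in $SO(3)$, the matrix identity
\[ P(\epsilon) := \bold{M}_{L^+}(r,\mu)\,\bold{M}_{R^+}(r,\epsilon)\,\bold{M}_{L^+}(r,\mu) = \bold{M}_{G^+}(\theta), \]
which by (\ref{R sol}) is equivalent to the two paths connecting the same initial and terminal configurations. Since $\bold{M}_{G^+}(\theta)$ is precisely the set of rotations fixing $e_{3}=(0,0,1)^{T}$, the condition $P(\epsilon)\,e_{3} = e_{3}$ suffices; rearranging using $\bold{M}_{L^+}(r,\mu)^{-1} = \bold{M}_{L^+}(r,\mu)^{T}$ yields the equivalent vector equation
\[ \bold{M}_{R^+}(r,\epsilon)\,\bold{u} = \bold{v}, \quad \bold{u} := \bold{M}_{L^+}(r,\mu)\,e_{3}, \quad \bold{v} := \bold{M}_{L^+}(r,\mu)^{T}\,e_{3}, \]
with $\theta$ then extracted from $\mathrm{tr}(P(\epsilon^{*})) = 1 + 2\cos\theta$. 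The $C^-$-variant will follow by transposition, using $\bold{M}_{L^+}^{T} = \bold{M}_{R^-}$ and $\bold{M}_{G^+}^{T} = \bold{M}_{G^-}$.

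From (\ref{M_L+}) one reads off $\bold{u} = (\eta_{13},-\eta_{23},\eta_{33})^{T}$ and $\bold{v} = (\eta_{13},\eta_{23},\eta_{33})^{T}$, so these two unit vectors differ only in the sign of their second coordinate. Using $U_{max}\,r = \sqrt{1-r^{2}}$, a short computation shows they share the same projection $rA$ onto the unit rotation axis $\bold{a}_{R} = (-U_{max}\,r,\,0,\,r)^{T}$ of $\bold{M}_{R^+}$, where
\[ A := 2\cos\mu - 1 + 2(1-\cos\mu)\,r^{2}. \]
Thus $\bold{u}$ and $\bold{v}$ lie on a common small circle of radius $\sqrt{1 - r^{2}A^{2}}$ about $\bold{a}_{R}$, and a unique $\epsilon^{*} \in (0,2\pi)$ rotates $\bold{u}$ to $\bold{v}$ under $\bold{M}_{R^+}$; equating chord lengths then gives
\[ \sin^{2}(\epsilon^{*}/2) \;=\; \frac{\sin^{2}\mu\,(1-r^{2})}{1 - r^{2}A^{2}}. \]

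To conclude $0 < \epsilon^{*} < 2\mu$, it suffices to show $|A| < 1$, for then $\sin(\epsilon^{*}/2) < \sin\mu$, and the branch $\epsilon^{*}/2 < \mu$ is selected by continuity from $\mu \to 0^{+}$ (where $\epsilon^{*} \to 0$). The upper bound $A < 1$ is immediate from $\partial_{\cos\mu} A = 2(1-r^{2}) > 0$ and $A = 1$ at $\cos\mu = 1$, combined with $\cos\mu < 1$. The lower bound $A > -1$ hinges on a pleasant identity: $A = -1$ iff $\cos\mu = -r^{2}/(1-r^{2}) = -1/U_{max}^{2}$, and this value equals $\cos\beta$ with $\beta = \arctan(1/\sqrt{U_{max}^{4}-1}) + \pi/2$; since $\mu < \beta$ by Lemma \ref{lem g>1 angle}, we get $\cos\mu > \cos\beta$ and hence $A > -1$.

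The main obstacle is the branch-selection when $\mu > \pi/2$: the chord equation determines $\sin^{2}(\epsilon^{*}/2)$ but not $\epsilon^{*}/2$ itself, and the inequality $\sin(\epsilon^{*}/2) < \sin\mu$ then admits a second solution $\epsilon^{*}/2 \in (\pi - \mu,\,\pi)$. One must argue, via continuity from $\mu = 0$ together with the signed orientation of the $R^{+}$-rotation about $\bold{a}_{R}$, that the physically correct branch is $\epsilon^{*}/2 \in (0,\mu)$; this is routine but needs to be stated explicitly in the final write-up.
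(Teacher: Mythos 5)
Your computations are correct and your route is genuinely different from the paper's, so let me first compare, then address the one real gap. The paper proves the same identity (\ref{Eq:LRL2G}) algebraically: it sandwiches the matrix equation between axial vectors to extract two scalar necessary conditions, sums them into (\ref{eq 54}), solves that equation in closed form as $\epsilon=2\delta$ with $\tan\delta=s(\mu)/\bigl(c(\mu)(1-2r^2)+2r^2\bigr)$, proves $0<\delta<\mu$ via the identity $s(\mu-\delta)\propto 2r^2s(\mu)(1-c(\mu))>0$, and then must substitute $\epsilon=2\delta$ back into the full matrix equation to establish sufficiency, because two scalar projections do not imply the matrix identity. Your observation that $\bold{M}_{G^+}(\cdot)$ is exactly the stabilizer of $e_3$ in $SO(3)$ is a genuine economy: it converts the matrix identity into the single vector equation $\bold{M}_{R^+}(r,\epsilon)\bold{u}=\bold{v}$, which is necessary \emph{and} sufficient, so the existence of $\theta$ is automatic and no final verification step is needed. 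Your chord-length relation and the bound $|A|<1$ for $0<\mu<\beta$ (via $c(\beta)=-1/U_{max}^2$) are correct, and they are consistent with the paper's solution, since $\bigl(c(\mu)(1-2r^2)+2r^2\bigr)^2+s^2(\mu)=(1-r^2A^2)/(1-r^2)$.

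The gap is the branch selection, and it is less routine than you suggest: it cannot be closed by continuity alone. The chord equation determines only $\sin^2(\epsilon^*/2)$, leaving $\epsilon^*\in\{2x,\,2\pi-2x\}$ with $x\in(0,\pi/2)$, and for $\mu\le\pi/2$ the second branch genuinely violates $\epsilon^*<2\mu$. Continuity from $\mu\to0^+$ cannot get started, because in that limit $\sin(\epsilon^*/2)\to0$ is equally consistent with $\epsilon^*\to0$ and with $\epsilon^*\to2\pi$; a signed orientation computation is unavoidable somewhere, and once you do it, continuity becomes superfluous. Concretely: check (by matching (\ref{M_R+}) against the Euler--Rodrigues formula) that $\bold{M}_{R^+}(r,\epsilon)$ is the right-handed rotation by $\epsilon$ about $\bold{a}_R$, so that $\epsilon^*\in(0,\pi)$ if and only if $\bold{a}_R\cdot(\bold{u}\times\bold{v})>0$. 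Using $\sqrt{1-r^2}=U_{max}r$, a direct computation gives
\begin{equation*}
\bold{a}_R\cdot(\bold{u}\times\bold{v})=2\,s(\mu)\sqrt{1-r^2}\;U_{max}\,r\,\bigl(c(\mu)(1-2r^2)+2r^2\bigr),
\end{equation*}
and for $0<\mu<\beta$ one has $s(\mu)>0$ and $c(\mu)>c(\beta)=-1/U_{max}^2$, whence $c(\mu)(1-2r^2)+2r^2\geq 1/U_{max}^2>0$. Hence $\epsilon^*\in(0,\pi)$, forcing $\epsilon^*=2x$, and $\epsilon^*<2\mu$ follows at once: for $\mu\geq\pi/2$ from $\epsilon^*<\pi\leq2\mu$, and for $\mu<\pi/2$ from monotonicity of $\sin$ on $(0,\pi/2)$. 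Note that the quantity whose positivity closes your gap is exactly the one the paper normalizes to define $\cos\delta$; that is where the orientation information is hiding in their argument. A final small correction: transposing your $L^+_\mu R^+_\epsilon L^+_\mu$ identity yields the $R^-_\mu L^-_\epsilon R^-_\mu$ case (since $\bold{M}_{L^+}^T=\bold{M}_{R^-}$), not $L^-R^-L^-$; the two remaining mirror cases still require the $L\leftrightarrow R$ symmetry, i.e., the same ``similar proofs'' step the paper itself invokes.
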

\begin{proof}
    See Appendix \ref{appnd D} in the supplementary materials.
\end{proof}

Now, utilizing Lemma \ref{lem CCC2G}, the key result Lemma \ref{lem g>1 CC|C nonoptimal} can be proved.

\begin{lem}
    \label{lem g>1 CC|C nonoptimal}
    For $U_{max}\geq1$ (or $r\leq\frac{1}{\sqrt{2}}$), a path of type $C_\mu C_\mu| C_\mu C_\mu$ is not optimal.
\end{lem}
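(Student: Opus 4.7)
The plan is to show that $C_\mu C_\mu|C_\mu C_\mu$ is non-optimal by exhibiting an alternative path connecting the same endpoints in strictly smaller time. By the symmetries $L\leftrightarrow R$ and $v\to -v$, it suffices to treat a single representative, say $L^-_\mu R^-_\mu|R^+_\mu L^+_\mu$, of total time $4\mu$.

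First, I will write the cumulative rotation as $\bold{M}=\bold{M}_{L^-}(r,\mu)\bold{M}_{R^-}(r,\mu)\bold{M}_{R^+}(r,\mu)\bold{M}_{L^+}(r,\mu)$. Using the transpose identities $\bold{M}_{L^-}=\bold{M}_{R^+}^T$ and $\bold{M}_{R^-}=\bold{M}_{L^+}^T$ from (\ref{M_L-})--(\ref{M_R-}), this reshapes into the commutator-like form $(\bold{M}_{L^+}(r,\mu)\bold{M}_{R^+}(r,\mu))^T\,\bold{M}_{R^+}(r,\mu)\bold{M}_{L^+}(r,\mu)$. I will then apply the axial-vector technique introduced in the proof of Lemma \ref{lem CCC2G}, pre- and post-multiplying $\bold{M}$ by the invariant axes $\bold{u}_{L^+}, \bold{u}_{R^+}$ from (\ref{Eq:uLp_u_Rp}) to extract scalar trigonometric identities. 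From these, the angles of a shorter alternative path---most plausibly a two-segment $C|C$ or $CC$ path, or a three-segment $C^\pm_{\mu'}C^\pm_{\epsilon'}C^\pm_{\mu'}$ triple which collapses to a $G^\pm_\theta$ via Lemma \ref{lem CCC2G}---can be solved for, establishing the required matrix equivalence with a path of time strictly less than $4\mu$.

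A first-order tangent-space check supports feasibility: the four Lie-algebra generators $\Omega_{L^-},\Omega_{R^-},\Omega_{R^+},\Omega_{L^+}$ of the segments sum to zero (direct verification in the skew-symmetric $\Omega(t)$ of (\ref{RS model})), so $\bold{M}=I+O(\mu^2)$ as $\mu\to 0$, meaning the rotation attained by the four-segment path in time $4\mu$ already has axis angle of order $\mu^2$, well below $4\mu$; an alternative of duration $O(\mu^2)$ therefore exists in that regime. The main obstacle is extending this strict time inequality uniformly to all $\mu\in(0,\arctan(1/\sqrt{U_{max}^4-1})+\pi/2)$ and $U_{max}\geq 1$. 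This requires carrying through the axial-vector manipulations, which parallel those of Lemma \ref{lem CCC2G} but are algebraically more involved due to the mixing of $v=+1$ and $v=-1$ segments across the central cusp, and then verifying the comparison $\theta<4\mu$ (or the analogous bound for whatever shorter template the algebra produces) on the full admissible parameter range.
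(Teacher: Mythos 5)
You have the right broad strategy (reduce by symmetry to one representative and exhibit a strictly shorter path joining the same endpoints), and you correctly sense that Lemma \ref{lem CCC2G} is the key ingredient, but your proposal stops exactly where the proof has to begin, and the template you predict for the replacement path is wrong. The paper does not apply the axial-vector projections to the four-segment product at all. Instead it invokes Lemma \ref{lem CCC2G} \emph{twice} with the same $\mu$, $\epsilon$, $\theta$ --- once for the forward triple, $\bold{M}_{L^+}(r,\mu)\bold{M}_{R^+}(r,\epsilon)\bold{M}_{L^+}(r,\mu)=\bold{M}_{G^+}(\theta)$, and once for the backward triple, $\bold{M}_{L^-}(r,\mu)\bold{M}_{R^-}(r,\epsilon)\bold{M}_{L^-}(r,\mu)=\bold{M}_{G^-}(\theta)$ --- and multiplies the two identities so that the great-circle factors cancel, since $\bold{M}_{G^+}(\theta)\bold{M}_{G^-}(\theta)=\bold{I_3}$. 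After pre-/post-multiplication by suitable segment matrices this yields the exact identity (\ref{LRRL4}): the rotation of $R^+_\mu L^+_\mu|L^-_\mu R^-_\mu$ equals that of $L^-_{\epsilon-\mu}R^-_{\mu}|R^+_\mu L^+_{\epsilon-\mu}$ when $\epsilon\geq\mu$, or, rewriting the negative angles as in (\ref{LRRL5}), that of $R^+_{\mu-\epsilon}|R^-_{\mu}|R^+_\mu|R^-_{\mu-\epsilon}$ when $\epsilon<\mu$. The time saving is then automatic: all segments involved are $C$ segments with the same angular frequency, and the alternatives have total angle $2\epsilon$ and $4\mu-2\epsilon$ respectively, both strictly less than $4\mu$ because Lemma \ref{lem CCC2G} already guarantees $0<\epsilon<2\mu$. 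No additional inequality such as your $\theta<4\mu$ ever needs to be verified. Note also that the replacement is in general a \emph{four}-segment path (degenerating to $R^-_\mu|R^+_\mu$ only when $\epsilon=\mu$), not the two- or three-segment path you conjecture.

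The concrete gaps are these. First, your first-order Lie-algebra check ($\bold{M}=\bold{I_3}+O(\mu^2)$ because the four generators sum to zero) only covers the asymptotic regime $\mu\to 0$; for $\mu$ of order one the four-segment rotation is far from the identity and the argument says nothing, as you yourself concede. Second, projecting the four-segment product onto the invariant axes $\bold{u}_{L^+}$, $\bold{u}_{R^+}$ produces scalar equations, but you give no mechanism by which those equations are solved by a path of strictly smaller duration, uniformly in $\mu$ and $U_{max}\geq 1$ --- and that uniform strict inequality is precisely the content of the lemma. The paper's cancellation trick (pairing the $v=+1$ and $v=-1$ applications of Lemma \ref{lem CCC2G} so that the unknown angle $\theta$ disappears entirely, leaving only $\epsilon$ and $\mu$) is the missing idea; without it, your plan is a plausible research direction rather than a proof.
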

\begin{proof}
    Consider a $R^+_\mu L^+_\mu|L^-_\mu R^-_\mu$ path where $0<\mu<\arctan(\frac{1}{\sqrt{U_{max}^4-1}})+\frac{\pi}{2}$. It is claimed that there exists an alternate $L^-R^-|R^+ L^+$ or $R^+|R^-|R^+| R^-$ path that is shorter.

    By Lemma \ref{lem CCC2G},
    \begin{equation}
    \label{LRRL1}
    \bold{M}_{L^+}(r,\mu)\bold{M}_{R^+}(r,\epsilon)\bold{M}_{L^+}(r,\mu)=\bold{M}_{G^+}(\theta),
    \end{equation}
    \begin{equation}
    \label{LRRL2}
    \bold{M}_{L^-}(r,\mu)\bold{M}_{R^-}(r,\epsilon)\bold{M}_{L^-}(r,\mu)=\bold{M}_{G^-}(\theta).
    \end{equation}

    From (\ref{M_G+})-(\ref{M_R-}), it is clear that each of them is an orthogonal matrix, and it can be checked that $\bold{M}_{G^+}(\phi)=\bold{M}^{-1}_{G^-}(\phi)$, $\bold{M}_{L^+}(r,\phi)=\bold{M}^{-1}_{R^-}(r,\phi)$, and $\bold{M}_{R^+}(r,\phi)=\bold{M}^{-1}_{L^-}(r,\phi)$, where $\phi$ denotes the angle argument in the rotation matrices. By post-multiplying the two sides of (\ref{LRRL1}) with the two sides of (\ref{LRRL2}), and using the fact that $\bold{M}_{G^+}(\theta)=\bold{M}^{-1}_{G^-}(\theta)$, we obtain: 
    \begin{align}
    &\bold{M}_{L^+}(r,\mu)\bold{M}_{R^+}(r,\epsilon)\bold{M}_{L^+}(r,\mu) \bold{M}_{L^-}(r,\mu)\bold{M}_{R^-}(r,\epsilon) \notag\\ 
    &\cdot\bold{M}_{L^-}(r,\mu)=\bold{M}_{G^+}(\theta)\bold{M}_{G^-}(\theta)=\bold{I_3},
    \end{align}
    where $\bold{I_3}$ is the identity matrix. Further pre-multiplying $\bold{M}_{R^-}(r,\mu)$ and post-multiplying $\bold{M}_{R^+}(r,\mu)\bold{M}_{L^+}(r,\epsilon)$ on both sides:
    \begin{align}
    \label{LRRL3}
    &\bold{M}_{R^+}(r,\epsilon)\bold{M}_{L^+}(r,\mu)\bold{M}_{L^-}(r,\mu) \notag \\
    &=\bold{M}_{R^-}(r,\mu)\bold{M}_{R^+}(r,\mu)\bold{M}_{L^+}(r,\epsilon),
    \end{align}
    this can be rewritten as:
    \begin{align}
    &\bold{M}_{R^+}(r,\epsilon-\mu+\mu)\bold{M}_{L^+}(r,\mu)\bold{M}_{L^-}(r,\mu)
    \notag \\
    &=\bold{M}_{R^-}(r,\mu)\bold{M}_{R^+}(r,\mu)\bold{M}_{L^+}(r,\epsilon-\mu+\mu), 
    \end{align}
    pre-multiplying $\bold{M}_{L^-}(r,\epsilon-\mu)$ and post-multiplying $\bold{M}_{R^-}(r,\mu)$ on both sides of the above equation:
    \begin{align}
    \label{LRRL4}
    &\bold{M}_{R^+}(r,\mu)\bold{M}_{L^+}(r,\mu)\bold{M}_{L^-}(r,\mu)\bold{M}_{R^-}(r,\mu)
    \notag \\
    &=\bold{M}_{L^-}(r,\epsilon-\mu)\bold{M}_{R^-}(r,\mu)\bold{M}_{R^+}(r,\mu)\bold{M}_{L^+}(r,\epsilon-\mu). 
    \end{align}

    From Lemma \ref{lem CCC2G}, $0<\epsilon<2\mu$. If $\epsilon\geq\mu$, then $\mu>\epsilon-\mu\geq0$. Hence, by (\ref{LRRL4}), there exists an alternate $L^-_{\epsilon-\mu}R^-_{\mu}|R^+_\mu L^+_{\epsilon-\mu}$ (or $R^-_{\mu}|R^+_\mu$ if $\epsilon=\mu$) path that is shorter than the $R^+_\mu L^+_\mu|L^-_\mu R^-_\mu$ path.

    If $\epsilon<\mu$, then $0>\epsilon-\mu>-\mu$. Noting that $\bold{M}_{L^-}(r,\epsilon-\mu)=\bold{M}_{R^+}(r,\mu-\epsilon)$ and $\bold{M}_{L^+}(r,\epsilon-\mu)=\bold{M}_{R^-}(r,\mu-\epsilon)$, (\ref{LRRL4}) can be rewritten as:
    \begin{align}
    \label{LRRL5}
    &\bold{M}_{R^+}(r,\mu)\bold{M}_{L^+}(r,\mu)\bold{M}_{L^-}(r,\mu)\bold{M}_{R^-}(r,\mu)
     \notag \\
    &=
    \bold{M}_{R^+}(r,\mu-\epsilon)\bold{M}_{R^-}(r,\mu)\bold{M}_{R^+}(r,\mu)\bold{M}_{R^-}(r,\mu-\epsilon),
    \end{align}
    where $\mu>\mu-\epsilon>0$. Hence, there exists an alternate $R^+_{\mu-\epsilon}|R^-_{\mu}|R^+_\mu| R^-_{\mu-\epsilon}$ path that is shorter than the $R^+_\mu L^+_\mu|L^-_\mu R^-_\mu$ path.

    The non-optimality of $R^+_\mu L^+_\mu|L^-_\mu R^-_\mu$ is visualized in Fig. \ref{fig:R_pos_L_pos_L_neg_R_neg} for two cases: (a) $\mu=\frac{\pi}{2}$ and $U_{max}=3$ (or $r=\frac{1}{\sqrt{10}}$); (b) $\mu=\frac{3}{4}\pi$ and $U_{max}=1.1$ (or $r=\frac{1}{\sqrt{2.21}}$). In these cases, the alternate shorter paths are $L^-_{\epsilon-\mu}R^-_{\mu}|R^+_\mu L^+_{\epsilon-\mu}$ and $R^+_{\mu-\epsilon}|R^-_{\mu}|R^+_\mu| R^-_{\mu-\epsilon}$, respectively.

    \begin{figure}[h]
    \centering
    \subfigure[$\mu=\frac{\pi}{2}$ and $U_{max}=3$]{\includegraphics[width=0.24\textwidth]{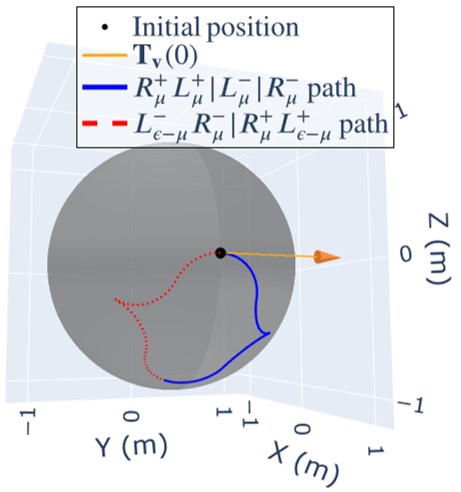}}
    \subfigure[$\mu=\frac{3}{4}\pi$ and $U_{max}=1.1$]{\includegraphics[width=0.24\textwidth]{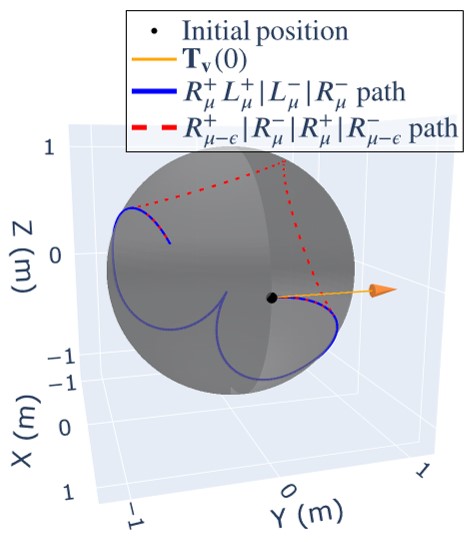}}
    \caption{Non-optimality of $R^+_\mu L^+_\mu|L^-_\mu R^-_\mu$}
    \label{fig:R_pos_L_pos_L_neg_R_neg}
    \end{figure}

    With similar proofs for $R^-_\mu L^-_\mu|L^+_\mu R^+_\mu$, $L^+_\mu R^+_\mu|R^-_\mu L^-_\mu$, and $L^-_\mu R^-_\mu|R^+_\mu L^+_\mu$ paths, the lemma is proved.
\end{proof}



With the key results, Lemmas \ref{lem g>1 type angle} and \ref{lem g>1 CC|C nonoptimal}, the finite sufficient list for $g>1$ listed in Proposition \ref{prop3} can be obtained.

\section{Optimal Paths for $g < 1$ \label{sec3.3}}
Referring to Fig. \ref{fig:g=1 diagram_all}, for $g<1$, Lemma \ref{lem ...C|C|C|C...} alone adequately indicates the main result.

For $\mathcal{A}^2+\mathcal{B}^2+\mathcal{C}^2\equiv g< 1$, an extremal path never reaches $(0,1)$ or $(0,-1)$ in Fig. \ref{fig:A-C}. Hence, it only evolves within the left or right half plane in Fig. \ref{fig:A-C}, and $u_g\equiv\pm U_{max}$ since $\mathcal{A}$ does not change sign. From (\ref{d_switch}), $\frac{d\mathcal{C}}{dt}=-U_{max}\mathcal{B}$. By squaring both sides and utilizing (\ref{ABC constant}), we obtain $\left(\frac{d\mathcal{C}}{dt}\right)^2=U_{max}^2(g^2-\mathcal{A}^2-\mathcal{C}^2)$. Substituting $\mathcal{A}^2$ by means of (\ref{A-C relation}), wherein $\mathcal{A}=\frac{1-|\mathcal{C}|}{\pm U_{max}}$, results in $\left(\frac{d\mathcal{C}}{dt}\right)^2=U_{max}^2\left(g^2-\frac{1+|\mathcal{C}|^2-2|\mathcal{C}|}{U_{max}^2}-\mathcal{C}^2\right)$. By reorganizing the equation, the relationship between $\mathcal{C}(t)$ and $\frac{d\mathcal{C}(t)}{dt}$ is established as:
\begin{equation}
\label{pahse portrait C-dC}
    \left(|\mathcal{C}(t)|-\frac{1}{1+U_{max}^2}\right)^2+\frac{1}{1+U_{max}^2}\left(\frac{d\mathcal{C}(t)}{dt}\right)^2=\lambda_\mathcal{C}^2,
\end{equation}
where $\lambda_\mathcal{C}^2=\frac{U_{max}^2g^2-1}{1+U_{max}^2}+\frac{1}{(1+U_{max}^2)^2}$. Using the above equation and (\ref{bangbang2}), the phase portrait of $\mathcal{C}(t)$ is shown in Fig. \ref{fig:C-dC}. Note that $\lambda_\mathcal{C}\geq 0$, and since an extremal path never reaches $(0,1)$ or $(0,-1)$ in Fig. \ref{fig:A-C} for $g<1$, we have $\mathcal{C}(t)<1$ in Fig. \ref{fig:C-dC}; hence, $\lambda_\mathcal{C}<\sqrt{\left(1-\frac{1}{1+U_{max}^2}\right)^2+0}=\frac{U_{max}^2}{1+U_{max}^2}$. Furthermore, for the phase portrait to intersect with the $\frac{d\mathcal{C}}{dt}$ axis, $\lambda_\mathcal{C}\geq\sqrt{\left(0-\frac{1}{1+U_{max}^2}\right)^2+0}=\frac{1}{1+U_{max}^2}$.
\begin{figure}[h]
    \centering
    \setlength{\abovecaptionskip}{0pt}
    \includegraphics[width=0.35\textwidth]{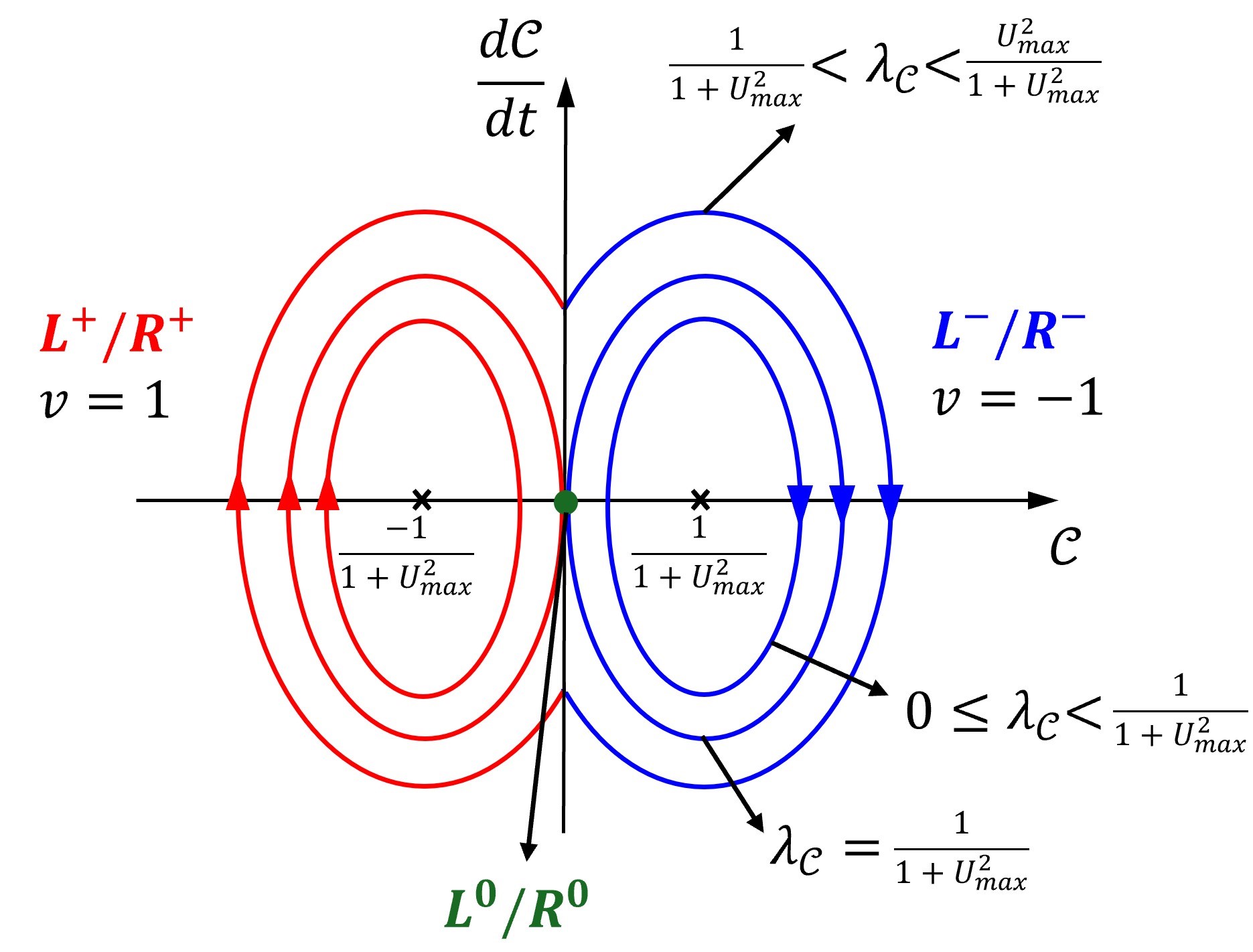}
    \caption{Phase portrait of $\mathcal{C}$-$\frac{d\mathcal{C}}{dt}$ for $g<1$}
    \label{fig:C-dC}
\end{figure}

The key result Lemma \ref{lem ...C|C|C|C...} can be obtained by directly utilizing the phase portrait of $\mathcal{C}(t)$.

\begin{lem}
\label{lem ...C|C|C|C...}
    For $g<1$ and $U_{max}\geq1$ (or $r\leq\frac{1}{\sqrt{2}}$), the optimal path is of type $C$, $T$, $TC$ (or $CT)$, or a concatenation of $C$ segments joined by cusps or joined by $T$ segments, and any path containing a middle $C$ segment is not optimal.
\end{lem}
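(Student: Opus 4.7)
The plan is to combine the structural consequences of the two phase portraits already introduced with a local-replacement argument. First, from the discussion immediately preceding the lemma, for $g<1$ the $\mathcal{A}$-$\mathcal{C}$ trajectory never reaches $(0,\pm 1)$, so $\mathcal{A}$ keeps a constant sign and no $G$ segment can appear. Any transition between consecutive $C$ segments must therefore be triggered by $\mathcal{C}$ vanishing, and (\ref{pahse portrait C-dC}) shows this happens in only two ways: off the $\mathcal{C}$-axis with $\frac{d\mathcal{C}}{dt}\neq 0$, i.e.\ a cusp (which requires $\lambda_\mathcal{C}>\tfrac{1}{1+U_{max}^2}$); or exactly at the origin of the $\mathcal{C}$-$\frac{d\mathcal{C}}{dt}$ portrait, where Lemma \ref{Lem axis point} forces a $T$ segment (which requires $\lambda_\mathcal{C}=\tfrac{1}{1+U_{max}^2}$). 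This yields the structural claim directly: the path is either a single $C$ or $T$, a $CT$ or $TC$, or a concatenation of $C$ segments joined by cusps or by $T$ segments.

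Next, I would show that every middle $C$ segment has arc angle strictly greater than $\pi$. Using (\ref{Ct}), on a $C^+$ segment the equation $\mathcal{C}=0$ becomes $\sin\theta=\tfrac{1}{\lambda_\mathcal{C}(1+U_{max}^2)}$ with $\theta=\sqrt{1+U_{max}^2}\,t-\phi_\mathcal{C}$, whose solutions modulo $2\pi$ are $\theta_a=\arcsin\!\bigl(\tfrac{1}{\lambda_\mathcal{C}(1+U_{max}^2)}\bigr)$ and $\theta_b=\pi-\theta_a$. Since $v=+1$ requires $\mathcal{C}<0$, the segment occupies the complementary arc $[\theta_b,\,2\pi+\theta_a]$ of length $\pi+2\theta_a$, and because arc angle equals $\omega\Delta t=\Delta\theta$, a middle $C$ bounded by two cusps has arc angle $\pi+2\theta_a>\pi$. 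The only remaining case, a middle $C$ bounded by two $T$ segments at $\lambda_\mathcal{C}=\tfrac{1}{1+U_{max}^2}$, traces a full loop of the degenerate ellipse and has arc angle $2\pi$; a mixed bounding by one cusp and one $T$ is impossible, because the two events demand incompatible values of $\lambda_\mathcal{C}$.

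Finally, I would invoke the identity from Lemma \ref{GC|C|C non optimal}, e.g.\ $\bold{M}_{L^-}(r,\pi+\alpha)=\bold{M}_{R^+}(r,\pi-\alpha)$, to swap the offending $C_{\pi+\alpha}$ middle segment for an opposite-handed, opposite-speed $C_{\pi-\alpha}$ segment realizing the same change in pose $\bold{R}$; the surrounding cusps, $T$ segments and neighboring $C$ segments are untouched because the replacement is an identity on endpoint poses, yet the total time drops by $2\alpha>0$. This rules out any candidate with a middle $C$ segment and collapses the sufficient list to $C$, $T$, $TC$, $CT$, $C|C$, and $CTC$. The main obstacle is extracting the clean arc-angle value $\pi+2\theta_a$ from the periodic representation of $\mathcal{C}(t)$ while correctly splitting the argument by the admissible values of $\lambda_\mathcal{C}$; once that is in hand, the replacement step is a direct import of the proof of Lemma \ref{GC|C|C non optimal}.
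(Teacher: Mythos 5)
Your proposal follows essentially the same route as the paper: exclude $G$ segments via the $\mathcal{A}$-$\mathcal{C}$ portrait, read off the concatenation structure from the $\mathcal{C}$-$\frac{d\mathcal{C}}{dt}$ portrait in (\ref{pahse portrait C-dC}), show every middle $C$ segment has arc angle exceeding $\pi$ (your $\pi+2\theta_a$ with $\theta_a=\arcsin\bigl(\tfrac{1}{\lambda_\mathcal{C}(1+U_{max}^2)}\bigr)$ is exactly the paper's $\alpha_2=\pi+2\arctan\bigl(\tfrac{1}{\sqrt{(1+U_{max}^2)^2\lambda_\mathcal{C}^2-1}}\bigr)$, the paper deriving it from the sinusoidal solution (\ref{Ct})--(\ref{dCt}) rather than from the circle geometry), and then kill such segments via Lemma \ref{GC|C|C non optimal}. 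One assertion in your write-up is incorrect, though harmless: cusps do \emph{not} require $\lambda_\mathcal{C}>\tfrac{1}{1+U_{max}^2}$, and mixed bounding of a middle $C$ by one cusp and one $T$ is \emph{not} impossible --- at the critical value $\lambda_\mathcal{C}=\tfrac{1}{1+U_{max}^2}$ the trajectory may either pass through the origin of the portrait instantaneously (a cusp, with $v$ switching sign) or dwell there (a $T$ segment), so both joint types coexist at the same $\lambda_\mathcal{C}$; this is precisely how the paper phrases it. The omission does not damage your conclusion, because in that mixed case the middle $C$ segment still runs from the origin back to the origin and hence has arc angle $2\pi>\pi$, exactly as in your $T$-$T$ case, so the same replacement argument disposes of it.
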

\begin{proof}
    From the phase portrait shown by Fig. \ref{fig:C-dC}, it is clear that for $0\leq\lambda_\mathcal{C}<\frac{1}{1+U_{max}^2}$, a nontrivial extremal path is a $C$ path. 
    
    For $\frac{1}{1+U_{max}^2}<\lambda_\mathcal{C}<\frac{U_{max}^2}{1+U_{max}^2}$, it is a concatenation of $C$ segments joined by cusps, with each middle $C$ segment traversing the left or right half plane exactly once. 
    
    For $\lambda_\mathcal{C}=\frac{1}{1+U_{max}^2}$, it is either a $T$ path, a $TC$ (or $CT$) path, or a concatenation of $C$ segments. Between two consecutive $C$ segments, if a path remains at the origin in Fig. \ref{fig:C-dC} for some duration, the $C$ segments are joined by a $T$ segment; otherwise, they are joined by a cusp. Furthermore, the middle $C$ segments traverse the left or right half plane at least once (if a middle $C$ segment passes through the origin instantly without transitioning into another half plane, it traverses the same half plane at least twice).

    Now we prove that any path containing a middle $C$ segment is not optimal. Given that a middle $C$ segment traverses the left or right half plane at least once, as established in the above discussion, it suffices to demonstrate the non-optimality of a middle $C$ segment that traverses the left or right half plane just once. Consider such a segment of type $R^-$ and let $t_2$ denote the time spent on it. From Fig. \ref{fig:C-dC}, on the $R^-$ segment, 
    \begin{equation}
    \mathcal{C}(0)=\mathcal{C}(t_2)=0. 
    \end{equation}
By \eqref{A-C relation}, we have $\mathcal{A}\equiv\frac{-1+\mathcal{C}}{-U_{max}}$, therefore,
    \begin{equation}
    \mathcal{A}(0)=\mathcal{A}(t_2)=\frac{1}{U_{max}}. 
    \end{equation}
Furthermore, by (\ref{pahse portrait C-dC}), $\frac{d\mathcal{C}}{dt}(0)=\sqrt{\frac{(1+U_{max}^2)^2\lambda_\mathcal{C}^2-1}{1+U_{max}^2}}=-\frac{d\mathcal{C}}{dt}(t_2)$, and by (\ref{d_switch}), $\frac{d\mathcal{C}}{dt}=-u_g\mathcal{B}$, we have
\begin{equation}
    \mathcal{B}(0)=\sqrt{\frac{(1+U_{max}^2)^2\lambda_\mathcal{C}^2-1}{U_{max}^2+U_{max}^4}}=-\mathcal{B}(t_2).
\end{equation}

Therefore, by \eqref{eq:A_B_C_sol} and noting that the specific $\mathbf{M}(\cdot)$ matrix of a $R^-$ segment is given by \eqref{M_R-}, from the above end-point conditions we have
    \begin{equation}
    \begin{aligned}
        \begin{bmatrix}
            \frac{1}{U_{max}} \\[4pt]
            -\sqrt{\frac{(1+U_{max}^2)^2\lambda_\mathcal{C}^2-1}{U_{max}^2+U_{max}^4}} \\[4pt]
            0
        \end{bmatrix}=\bold{M}^T_{R^-}(r, \phi)
        \begin{bmatrix}
            \frac{1}{U_{max}} \\[4pt]
            \sqrt{\frac{(1+U_{max}^2)^2\lambda_\mathcal{C}^2-1}{U_{max}^2+U_{max}^4}} \\[4pt]
            0
        \end{bmatrix}.
        \end{aligned}
    \end{equation}

    Expanding the last row of the above equation and rearranging yields:
    \begin{equation}
0
=
(1-c(\phi))\,r\,\frac{1}{U_{max}}
+
s(\phi)\sqrt{\frac{(1+U_{max}^2)^2\lambda_c^2-1}{U_{\max}^2+U_{max}^4}}.
\end{equation}
Note that $r>0$, $U_{\max}>0$. Moreover, $1-c(\phi)> 0$, note that equality does not hold since it corresponds to $\phi= 2k\pi$ for $k=1,2,\cdots$, which is non-optimal by Lemma \ref{GC|C|C non optimal}. Hence, the above identity implies $s(\phi)<0$. 
Because $\phi$ denotes the positive arc angle, $s(\phi)<0$ yields $\phi\in(\pi+2k\pi,2\pi+2k\pi)$ for $k=0,1,\cdots$. Therefore, $\phi>\pi$, which is non-optimal by Lemma~\ref{GC|C|C non optimal}.

With similar proofs for middle segments of types $L^-$ and $C^+$, the lemma is proved.
\end{proof}

With Lemma \ref{lem ...C|C|C|C...}, we can directly prove the main result for $g<1$, as listed in Proposition \ref{prop4} below.

\begin{prop}
\label{prop4}
    For $g<1$ and $U_{max}\geq1$ (or $r\leq\frac{1}{\sqrt{2}}$), the optimal path may be restricted to the following types, together with their symmetric forms: $C$,\; $T$,\; $C|C$,\; $TC$,\; $CTC$. 
\end{prop}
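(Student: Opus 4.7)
The plan is to derive Proposition~\ref{prop4} as a direct corollary of Lemma~\ref{lem ...C|C|C|C...}, which already does the heavy lifting by classifying extremals for $g<1$ into four families and by declaring any path with a middle $C$ segment non-optimal. What remains is a purely combinatorial enumeration over those families.

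First I would note that Lemma~\ref{lem ...C|C|C|C...} says every optimal path is (i) a single $C$, (ii) a single $T$, (iii) $TC$ or its symmetric form $CT$, or (iv) a concatenation of $C$ segments joined by cusps or by $T$ segments. Cases (i)--(iii) appear verbatim in the statement of Proposition~\ref{prop4}, so only (iv) needs analysis. Write a generic path in family (iv) as $C\,\square_1\,C\,\square_2\,\cdots\,\square_{n-1}\,C$, where each joiner $\square_j$ is either a cusp $|$ or a turn-in-place $T$. If $n\ge 3$, then the second $C$ segment (and in general any non-boundary $C$) is a middle segment, and Lemma~\ref{lem ...C|C|C|C...} rules this out. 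Hence $n\le 2$, which gives exactly the two candidates $C|C$ (joiner is a cusp) and $CTC$ (joiner is a $T$).

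Finally I would dispose of possible augmentations mixing families. The only way to produce a candidate not yet on the list is to append $T$'s or $C$'s to one of the allowed skeletons. Any concatenation of the form $TC_1 \square C_2$, $C_1 \square C_2 T$, $TC_1 T$, or longer (e.g.\ $CTC|C$, $TCTC$, $C|CTC$) necessarily places a $C$ segment strictly between two other segments, hence creates a middle $C$ segment, and is therefore ruled out again by Lemma~\ref{lem ...C|C|C|C...}. Collecting the surviving types gives exactly $\{C,\,T,\,C|C,\,TC,\,CTC\}$ together with the symmetric form $CT$ of $TC$.

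There is essentially no analytic obstacle here: all the geometric work---non-optimality of middle $C$ segments via the $\mathcal{C}$--$\tfrac{d\mathcal{C}}{dt}$ phase portrait together with the angle bound from Lemma~\ref{GC|C|C non optimal}---was already carried out in the proof of Lemma~\ref{lem ...C|C|C|C...}. The only care needed is to make the bookkeeping above rigorous, in particular to observe that (a) a $T$ segment in the interior of the path does not ``shield'' a neighboring $C$ from being classified as a middle segment, and (b) any candidate not listed in Proposition~\ref{prop4} has at least three segments and thus possesses a middle $C$. Both are immediate from the definition of middle segment recalled in the paper.
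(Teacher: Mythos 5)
Your proof is correct and takes essentially the same route as the paper: Proposition~\ref{prop4} is deduced directly from Lemma~\ref{lem ...C|C|C|C...}, with the exclusion of middle $C$ segments doing all the work and only a combinatorial enumeration of the surviving concatenations remaining. Your additional bookkeeping (explicitly ruling out mixed forms such as $TC|C$, $TCT$, or $CTC|C$ because any such path has at least three segments and hence a middle $C$) simply makes explicit what the paper's brief proof leaves implicit.
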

\begin{proof}
    According to Lemma \ref{lem ...C|C|C|C...}, for $g<1$, no path that contains a middle $C$ segment is optimal, thus ruling out paths that contain more than two $C$ segments. Therefore, within the possible forms by Lemma \ref{lem ...C|C|C|C...}, the only paths left are $C$, $T$, $TC$ (or $CT$), and the paths with two $C$ segments joined by a cusp or a $T$ segment, namely, $C|C$ and $CTC$.
\end{proof}
With Propositions \ref{prop2}-\ref{prop4}, the full sufficient list is characterized:
\begin{thm}
\label{thm1}
    For $U_{max}\geq1$ (or $r\leq\frac{1}{\sqrt{2}}$), the optimal path may be restricted to the following types, together with their symmetric forms:

    
    
    
    
\begin{itemize}
  \item \textbf{1--2 segments:}
  $C$,\; $G$,\; $T$,\; $CC$,\; $GC$,\; $C|C$,\; $TC$.
  
  \item \textbf{3 segments:}
  $CC_\psi|C$,\; $CGC$,\; $C|C_\beta G$,\; $CTC$.
  
  \item \textbf{4 segments:}
  $C|C_\psi C_\psi|C$,\; $CGC_\beta|C$,\; $CC_\mu |C_\mu C$.
  
  \item \textbf{5 segments:}
  $C|C_\beta GC_\beta|C$,\; $C|C_\mu C_\mu |C_\mu C$.
   \item \textbf{6 segments:}
   $CC_\mu|C_\mu C_\mu|C_\mu C$.
\end{itemize}

     where $0<\psi\leq\arctan(\frac{1}{\sqrt{U_{max}^4-1}})+\frac{\pi}{2}$, $\beta=\arctan(\frac{1}{\sqrt{U_{max}^4-1}})+\frac{\pi}{2}$, and $0<\mu<\arctan(\frac{1}{\sqrt{U_{max}^4-1}})+\frac{\pi}{2}$.
\end{thm}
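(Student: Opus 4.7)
The plan is to combine Propositions \ref{prop2}, \ref{prop3}, and \ref{prop4} by the trichotomy on the conserved quantity $g$, and then verify that the union of their three sufficient lists coincides with the list displayed in Theorem \ref{thm1}. Since equation (\ref{ABC constant}) states that $g = \mathcal{A}^2(t) + \mathcal{B}^2(t) + \mathcal{C}^2(t)$ is constant along any extremal trajectory, every nontrivial extremal path satisfies exactly one of $g=1$, $g>1$, or $g<1$. I would open the proof by invoking this trichotomy, applying Proposition \ref{prop2} in the case $g=1$, Proposition \ref{prop3} in the case $g>1$, and Proposition \ref{prop4} in the case $g<1$. Since $U_{max} \geq 1$ is assumed throughout these propositions, each of the three cases is already covered under the theorem's hypothesis.

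The substantive bookkeeping step is to merge the three lists into a single consolidated list. Most types transfer verbatim: $G$, $GC$, $CGC$, $C|C_\beta G$, $CGC_\beta|C$, and $C|C_\beta GC_\beta|C$ come only from $g=1$; $CC_\mu|C_\mu C$, $C|C_\mu C_\mu|C_\mu C$, and $CC_\mu|C_\mu C_\mu|C_\mu C$ come only from $g>1$; while $C$, $CC$, $C|C$ appear in multiple lists, and $T$, $TC$, $CTC$ appear in both the $g=1$ and $g<1$ lists. The only nontrivial collapse involves the types carrying the middle $C$ arc-angle parameter. Proposition \ref{prop2} contributes $CC_\beta|C$ and $C|C_\beta C_\beta|C$ with the fixed angle $\beta = \arctan(1/\sqrt{U_{max}^4-1}) + \pi/2$, while Proposition \ref{prop3} contributes $C|C_\mu C$ (whose symmetric form is $CC_\mu|C$) and $C|C_\mu C_\mu|C$ with $0 < \mu < \beta$. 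Merging these by introducing $\psi$ ranging over the half-closed interval $(0, \beta]$, I obtain the unified families $CC_\psi|C$ and $C|C_\psi C_\psi|C$ that appear in Theorem \ref{thm1}.

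After this consolidation, I would verify entry-by-entry that the union matches the four-line list in the theorem statement, and record the resulting parameter constraints: $\psi \in (0, \beta]$ for the two merged families, $\beta$ fixed for the three $g=1$-only types that retain $\beta$, and $\mu \in (0, \beta)$ for the three $g>1$-only types. The proof then concludes by noting that every candidate optimal path belongs to one of these three cases, so the list is indeed a sufficient list for $U_{max} \geq 1$.

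The main obstacle is not technical but organizational: ensuring that the union is formed without omission and that the angle ranges for $\psi$, $\beta$, and $\mu$ are consistent with the endpoint behavior (in particular that $\psi = \beta$ is attained in the $g=1$ case but $\mu = \beta$ is excluded in the $g>1$ case by Lemma \ref{lem g>1 angle}). No new arguments or estimates are needed beyond what Propositions \ref{prop2}--\ref{prop4} already provide.
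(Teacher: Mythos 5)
Your proposal is correct and follows essentially the same route as the paper, which obtains Theorem \ref{thm1} precisely by taking the union of the sufficient lists in Propositions \ref{prop2}--\ref{prop4} over the exhaustive cases $g=1$, $g>1$, and $g<1$ given by the conserved quantity in (\ref{ABC constant}). Your bookkeeping of the merge --- including folding $CC_\beta|C$, $C|C_\beta C_\beta|C$ (from $g=1$) and $CC_\mu|C$, $C|C_\mu C_\mu|C$ (from $g>1$) into the unified families with $\psi\in(0,\beta]$ --- matches the paper's consolidation exactly.
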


\section{Path Generation and Numerical Experiments}
\label{sec generate}
\subsection{Path Generation and Numerical Example}
With the sufficient list of optimal path types, multiple candidate solutions for each path can be generated through inverse kinematics, given an initial configuration, a desired terminal configuration, and an $U_{max}$ (or $r$). To this end, we make use of the rotation matrices in (\ref{M_G+})-(\ref{M_R0}) and their corresponding axial vectors to derive closed-form expressions for the angles of each path in the sufficient list. Specifically, for each specified path type (e.g., $L^+L^-$), the terminal configuration condition is enforced through forward kinematics propagation using rotation matrices. By pre- and post-multiplying the resulting matrix equations with appropriate axial vectors, we extract trigonometric polynomial equations in the segment angles. These equations admit a small number of closed-form solutions, which are enumerated to generate candidate paths. With the candidate solutions generated, the feasible paths can be readily identified by verifying if the candidates connect the  initial and desired terminal configuration through forward kinematics. Subsequently, the optimal path can be found by comparing the time across all feasible paths. A separate note for the generation of candidate paths, along with the source code for solving the time-optimal spherical CRS problem and visualization, is available at \href{https://github.com/sixuli97/Optimal-Spherical-Convexified-Reeds-Shepp-Paths}{https://github.com/sixuli97/Optimal-Spherical-Convexified-Reeds-Shepp-Paths}.

In the following numerical example, the initial configuration is set to be $\bold{R}(0)=\bold{I_3}$, and $U_{max}=3$  (or $r=\frac{1}{\sqrt{10}}$). The desired terminal configuration is randomly generated as
\begin{equation}
    \bold{R}(T)=\begin{pmatrix}
0.804977 & -0.592216 & 0.035944 \\
-0.569461 & -0.754203 & 0.326943 \\
-0.166512 & -0.283650 & -0.944360.
\end{pmatrix}
\end{equation}

The feasible paths are summarized in Table \ref{tab:1}. By comparing the third column, it is clear that the $R^-R^+G^+L^+$ path is time-optimal. The paths are visualized in Fig. \ref{fig:numerical example}.

\begin{table}[h]
\centering
\scriptsize
\setlength{\abovecaptionskip}{0pt}
\caption{Feasible Paths}
\label{tab:1}
    \centering
    \begin{tabular}[t]{|>{\centering\arraybackslash}m{1.8cm}|>{\centering\arraybackslash}m{4.0cm}|>{\centering\arraybackslash}m{1cm}|}
        \hline
        \textbf{\textit{Path type}} & \textbf{\textit{Angles (rad)}} & 
        \textbf{\textit{Time (s)}} \\
        \hline
        $L^-R^-R^+$ & $[0.1122, 1.4896, 1.6238]$ & 1.0200\\
        \hline
        $L^-L^0L^+$ & $[1.2685, 1.3659, 0.9832]$ & 1.1673\\
        \hline
        $L^-R^-R^+L^+$ & $[2.4701, 0.5045, 0.5045, 2.1848]$ & 1.7911 \\
        \hline
        $R^+L^+L^-R^-$ & $[2.5273, 1.5573, 1.5573, 2.8126]$ & 2.6735\\
        \hline
        $R^-R^+G^+L^+$ & $[1.4008, 1.6821, 0.0160, 0.0864]$ & 1.0182 \\
        \hline
    \end{tabular}
\hspace{0.01\linewidth}
\end{table}

\begin{figure}[h]
    \centering
    \setlength{\abovecaptionskip}{0pt}
    \includegraphics[width=0.49\textwidth]{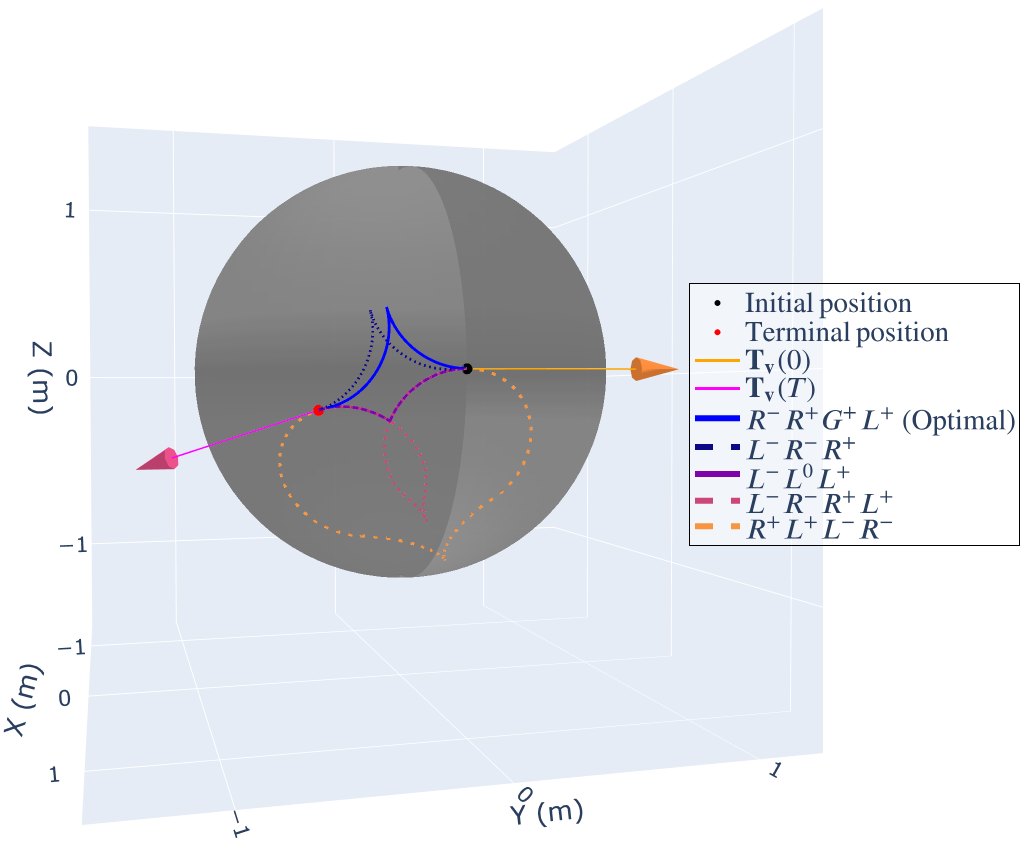}
    \caption{Optimal and feasible paths connecting the initial and desired terminal configurations}
    \label{fig:numerical example}
\end{figure}

\subsection{Numerical Coverage Study via Configuration Sampling}
To assess the practical coverage of the sufficient list over the configuration space, terminal configurations were sampled in a structured and deterministic manner. We sampled $4000$ position vectors on the unit sphere using a Fibonacci lattice \cite{gonzalez2010measurement}, which provides near-uniform coverage, as shown in Fig. \ref{fig:coverage}. For each position, 30 headings were uniformly sampled over $[0,2\pi)$ within the tangent plane, yielding a discrete set of terminal orientations. 

\begin{figure}[h]
    \centering
    \setlength{\abovecaptionskip}{0pt}
    \includegraphics[width=0.15\textwidth]{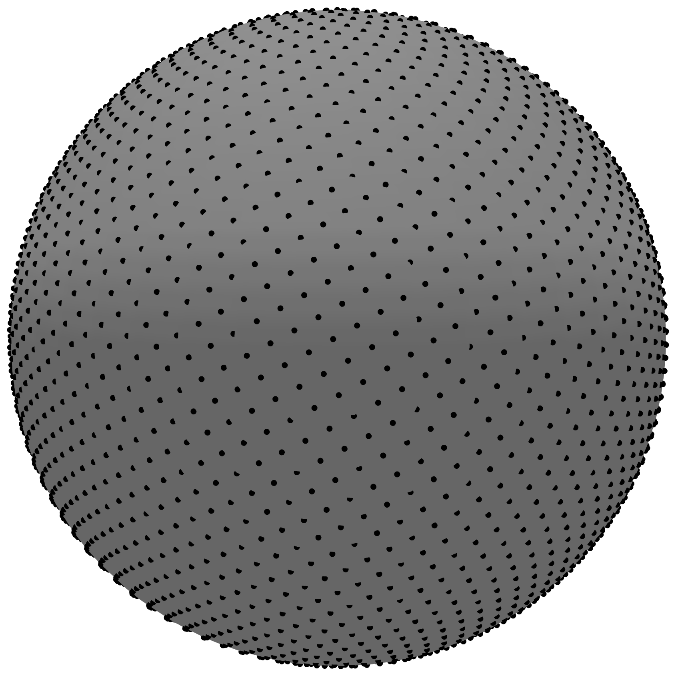}
    \caption{Sampled positions on the sphere using Fibonacci lattice}
    \label{fig:coverage}
\end{figure}

For each sampled terminal configuration, we enumerated the sufficient list to generate candidate paths and selected the time-optimal solution as the feasible candidate with minimum time. Three values of $U_{max}$ (1, 5, and 10) were tested. Across all sampled configurations and all tested values of $U_{max}$, at least one feasible path was found within the sufficient list, providing empirical support for the coverage of the list. In addition, Table~\ref{tab:optimal count} reports the frequency with which each path type (grouped together with its symmetric form) is selected as optimal. While this does not provide a guarantee over the continuous configuration space, it can offer guidance on which path types do not appear to be optimal within the samples (e.g., $C|C_\mu C_\mu|C_\mu C$) and may be candidates for further refinement of the sufficient list.

\begin{table}[h]
\centering
\scriptsize
\setlength{\tabcolsep}{2.1pt}
\renewcommand{\arraystretch}{1.15}
\caption{Percentage of Optimal Paths}
\label{tab:optimal count}

\begin{minipage}{0.98\linewidth}
\setlength{\tabcolsep}{3.6pt}
\centering
\begin{tabular}{>{\raggedright\arraybackslash}p{1.4cm} ccc  cccc}
\toprule
& \multicolumn{3}{c}{1 segment} & \multicolumn{4}{c}{2 segments} 
\\
\cmidrule(lr){2-4}\cmidrule(lr){5-8}
& $C$ & $G$ & $T$ & $CC$ & $GC$ & $C|C$ & $TC$  \\
\midrule
$U_{\max}=1$  & 0.001\% & 0.001\% & 0.001\% & 0.015\% & 0.024\% & 0.033\% & 0.002\%  \\
$U_{\max}=5$ & 0 & 0.001\% & 0.001\% & 0.001\% & 0.038\% & 0.003\% & 0  \\
$U_{\max}=10$ & 0 & 0.001\% &0.001\% & 0 & 0.038\% & 0 & 0 \\
\bottomrule
\end{tabular}
\end{minipage}

\vspace{2pt}

\begin{minipage}{0.98\linewidth}
\setlength{\tabcolsep}{2.61pt}
\centering
\begin{tabular}{>{\raggedright\arraybackslash}p{1.4cm} cccc cc}
\toprule
& \multicolumn{4}{c}{3 segments} & \multicolumn{2}{c}{4 segments} \\
\cmidrule(lr){2-5}\cmidrule(lr){6-7}
& $CC_\psi|C$ & $CGC$ & $C|C_\beta G$ & $CTC$ & $C|C_\psi C_\psi|C$ & $CGC_\beta|C$  \\
\midrule
$U_{\max}=1$ &44.878\% & 24.986\% & 0 & 24.945\% & 2.588\% & 0 \\
$U_{\max}=5$  & 1.645\% & 47.423\% & 0.031\% & 1.168\% & 0.662\% & 47.901\%  \\
$U_{\max}=10$ & 0.411\% & 49.122\% & 0.024\% & 0.293\% & 0.177\% & 49.456\%  \\
\bottomrule
\end{tabular}
\end{minipage}

\vspace{2pt}

\begin{minipage}{0.98\linewidth}
\centering
\begin{tabular}{>{\raggedright\arraybackslash}p{1.4cm} c cc c}
\toprule
& \multicolumn{1}{c}{4 segments}  
& \multicolumn{2}{c}{5 segments} & \multicolumn{1}{c}{6 segments} \\
\cmidrule(lr){2-2}\cmidrule(lr){3-4}\cmidrule(lr){5-5}
& $CC_\mu|C_\mu C$& $C|C_\beta GC_\beta|C$ & $C|C_\mu C_\mu|C_\mu C$ & $CC_\mu|C_\mu C_\mu|C_\mu C$ \\
\midrule
$U_{\max}=1$ & 2.511\%  & 0 & 0 & 0.017\% \\
$U_{\max}=5$ & 0.111\% & 1.017\%\ & 0 & 0 \\
$U_{\max}=10$ & 0.028\%& 0.449\% & 0 & 0 \\
\bottomrule
\end{tabular}
\end{minipage}

\end{table}

\section{Conclusion \label{sec5}}
This paper addresses the time-optimal path problem of the CRS vehicle on a unit sphere. The spherical CRS vehicle is modeled as a variant of the Dubins vehicle in the Sabban frame, with considerations for constrained turning rate and speed. The problem can be used for optimal attitude control of underactuated satellites, optimal motion planning for spherical rolling robots, and optimal path planning for mobile robots on spherical surfaces or uneven terrains. 

To solve the proposed problem, the Pontryagin maximum principle is utilized, along with the analysis of phase portraits of the Hamiltonians. Furthermore, detailed proofs of the non-optimality or redundancy of certain path types are established, ultimately leading to the derivation of a sufficient list of optimal path types for $U_{max}\geq1$ (or $r\leq\frac{1}{\sqrt{2}}$). This sufficient list includes 23 path types, each composed of at most 6 segments, ensuring a thorough exploration of possible optimal paths given the start and desired terminal configurations. 

Furthermore, closed-form expressions for the angles
of each path in the sufficient list are derived, and the source code for solving the time-optimal path problem is released publicly online. The complementary case of $U_{max}<1$ can be rewritten and solved with the same results.



\appendix[]
\subsection{Equivalent Formulation when $U_{max}<1$}
\label{append equivalence}

Recall that the original time-optimal spherical CRS problem is defined by \eqref{eq: time cost}-\eqref{terminal conditions} with $v \in[-1\text{,}1]$ and $u_g \in [-U_{max},U_{max}]$. Moreover, the state equations (\ref{RS model1})-(\ref{RS model3}) are equivalent to \eqref{RS model}. In the main text of the paper, we characterized a sufficient list of optimal path types for $U_{max}\geq 1$; here, we show that when $0<U_{max}<1$, the time-optimal CRS problem can be rewritten in an equivalent form by coordinate transformation, time-scaling, and redefined control inputs $\Tilde{v}$ and $\Tilde{u}_{g}$, with $\Tilde{v}\in[-1,1]$ , $\Tilde{u}_{g}\in[-\Tilde{U}_{max},\Tilde{U}_{max}]$, and $\Tilde{U}_{max}>1$. With this equivalent formulation, the case of $U_{max}<1$ can be solved using the results for $\tilde U_{max}\geq1$ developed in this paper. The derivation is as follows.

 Define $\Tilde{\mathbf{R}}:=\mathbf{Q}^{-1}\mathbf{R}\mathbf{Q}$, where $\mathbf{Q}:=\begin{pmatrix}
    0 & 0 & 1 \\ 0 & 1 & 0 \\ -1 & 0 & 0
\end{pmatrix}$. It can be checked that $\mathbf{Q}\in SO(3)$, and hence $\Tilde{\mathbf{R}}\in SO(3)$, since it is a product of matrices in $SO(3)$ \cite{lynch2017modern}. 

Substituting the relation $\mathbf{R}=\mathbf{Q}\Tilde{\mathbf{R}}\mathbf{Q}^{-1}$ into \eqref{terminal conditions}, we obtain
\begin{align}
    &\mathbf{Q}\Tilde{\bold{R}}(0)\mathbf{Q}^{-1} =\bold{I_3}, ~\mathbf{Q}\Tilde{\bold{R}}(T)\mathbf{Q}^{-1} =\bold{R_f} \notag\\
    \label{terminal conditions new}
    \implies & \Tilde{\bold{R}}(0)=\bold{I_3},~\Tilde{\bold{R}}(T) =\mathbf{Q}^{-1}\bold{R_f}\mathbf{Q}:=\Tilde{\mathbf{R}}_\mathbf{f}.
\end{align}
Similarly, substituting $\mathbf{R}=\mathbf{Q}\Tilde{\mathbf{R}}\mathbf{Q}^{-1}$ into \eqref{RS model}, we obtain
\begin{align}
    \mathbf{Q}\frac{d\Tilde{\bold{R}}(t)}{dt} &\mathbf{Q}^{-1}
    =\mathbf{Q}\Tilde{\mathbf{R}}(t)\mathbf{Q}^{-1} \begin{pmatrix}
0 & -v(t) & 0 \\
v(t) & 0 & -u_g(t) \\
0 & u_g(t) & 0 \\
\end{pmatrix} \notag\\
\implies  \frac{d\Tilde{\bold{R}}(t)}{dt} 
    &=\Tilde{\mathbf{R}}(t)\mathbf{Q}^{-1} \begin{pmatrix}
0 & -v(t) & 0 \\
v(t) & 0 & -u_g(t) \\
0 & u_g(t) & 0 \\
\end{pmatrix} \mathbf{Q} \notag\\
\label{state equation new}
  &=\Tilde{\mathbf{R}}(t) \begin{pmatrix}
0 & -u_g(t) & 0 \\
u_g(t) & 0 & v(t) \\
0 & -v(t) & 0 \\
\end{pmatrix}.
\end{align}

With the scaled time $\tau=U_{max}t$, \eqref{terminal conditions new} becomes
\begin{equation}
\label{terminal constraints scaled}
    \Tilde{\bold{R}}(0)=\bold{I_3},~\Tilde{\bold{R}}(\Tilde{T}) =\Tilde{\mathbf{R}}_\mathbf{f},
\end{equation}
where $\Tilde{T}=U_{max}T$. Similarly, \eqref{state equation new} becomes
\begin{align}
\label{RS model new}
    \frac{d\Tilde{\bold{R}}}{d\tau} =\frac{d\tilde{\mathbf{R}}}{dt}\frac{dt}{d\tau}=\frac{1}{U_{max}}\frac{d\Tilde{\bold{R}}}{dt}=\Tilde{\mathbf{R}}\begin{pmatrix}
0 & -\Tilde{v} & 0 \\
\Tilde{v} & 0 & -\Tilde{u}_g \\
0 & \Tilde{u}_g & 0 \\
\end{pmatrix},
\end{align}
where we define $\Tilde{v}:=\frac{u_g}{U_{max}}$ and $\Tilde{u}_g:=-\frac{v}{U_{max}}$. Note that control channels are exchanged and rescaled. In particular, for the optimal control actions defined in \eqref{bangbang2}, $v=-1,1,0$ correspond to $\Tilde{u}_g=\tilde U_{max},-\tilde U_{max},0$, respectively, where $\tilde U_{max}:=\frac{1}{U_{max}}$; whereas $u_g=-U_{max},U_{max},0$ correspond to $\Tilde{v}=-1,1,0$, respectively. The induced correspondence between segments in the original and transformed formulations is summarized in Table~\ref{tab:segment_mapping_compact}. 
\begin{table}[h]
\centering
\small
\setlength{\tabcolsep}{6pt}
\renewcommand{\arraystretch}{1.15}
\caption{Mapping of primitive segments under $\tilde v=u_g/U_{max}$ and $\tilde u_g=-v/U_{max}$ (with $\tilde U_{max}=1/U_{max}$).}
\label{tab:segment_mapping_compact}
\begin{tabular}{cc}
\toprule
Original segment & Transformed segment \\
\midrule
$L^{+}$ ($v=1$, $u_g=U_{max}$) & $R^{+}$ ($\tilde v=1, \tilde u_g=-\tilde U_{max}$) \\
$R^{+}$ ($v=1$, $u_g=-U_{max}$) & $R^{-}$ ($\tilde v=-1, \tilde u_g=-\tilde U_{max}$) \\
$L^{-}$ ($v=-1$, $u_g=U_{max}$) & $L^{+}$ ($\tilde v=1, \tilde u_g=\tilde U_{max}$) \\
$R^{-}$ ($v=-1$, $u_g=-U_{max}$) & $L^{-}$ ($\tilde v=-1, \tilde u_g=\tilde U_{max}$) \\
$G^{+}$ ($v=1$, $u_g=0$) & $R^{0}$ ($\tilde v=0, \tilde u_g=-\tilde U_{max}$) \\
$G^{-}$ ($v=-1$, $u_g=0$) & $L^{0}$ ($\tilde v=0, \tilde u_g=\tilde U_{max}$) \\
$L^{0}$ ($v=0$, $u_g=U_{max}$) & $G^{+}$ ($\tilde v=1, \tilde u_g=0$) \\
$R^{0}$ ($v=0$, $u_g=-U_{max}$) & $G^{-}$ ($\tilde v=-1, \tilde u_g=0$) \\
\bottomrule
\end{tabular}
\end{table}

In summary, the spherical CRS problem in \eqref{eq: time cost}-\eqref{terminal conditions} can be reformulated as:
\begin{equation}
\label{cost new}
    \tilde J^* =\min\int_0^{\Tilde{T}} 1 \; d\tau 
\end{equation}
\begin{equation}
\label{constraints new}
\text{subject to } \eqref{terminal constraints scaled}, \eqref{RS model new},
\end{equation}
\noindent with $\Tilde{v}\in[-1,1],$ $\Tilde{u}_g\in[-\Tilde{U}_{max}, \Tilde{U}_{max}]$, and $\Tilde{U}_{max}=\frac{1}{U_{max}}$.

We now formally show the equivalence between the above formulation and the original problem.

We first show that the map between the feasible sets of the two problems is a bijection (one-to-one correspondance).
Let $\mathcal{F}$ denote the set of feasible solutions of the original problem, i.e., tuples $(\mathbf{R}(\cdot),v(\cdot),u_g(\cdot),T)$ satisfying \eqref{RS model}, the terminal constraints \eqref{terminal conditions}, and the bounds $v\in[-1,1]$, $u_g\in[-U_{max},U_{max}]$.
Let $\tilde{\mathcal{F}}$ denote the feasible set of \eqref{cost new}--\eqref{constraints new}, i.e., tuples $(\Tilde{\mathbf{R}}(\cdot),\Tilde{v}(\cdot),\Tilde{u}_g(\cdot),\Tilde{T})$ satisfying \eqref{RS model new}, \eqref{terminal constraints scaled}, and the bounds $\Tilde{v}\in[-1,1]$, $\Tilde{u}_g\in[-\Tilde{U}_{max},\Tilde{U}_{max}]$.

Define the mapping $\Phi:\mathcal{F}\to\tilde{\mathcal{F}}$ by
\begin{equation*}
\tau=U_{max}t,\quad \Tilde{T}=U_{max}T,\quad 
\Tilde{\mathbf{R}}(\tau)=\mathbf{Q}^{-1}\mathbf{R}(t)\mathbf{Q},
\end{equation*}
\begin{equation}
\label{eq:Phi_map}
\Tilde{v}(\tau)=\frac{u_g(t)}{U_{max}},\quad
\Tilde{u}_g(\tau)=-\frac{v(t)}{U_{max}},
\end{equation}
with $t=\tau/U_{max}$.
Conversely, define $\Psi:\tilde{\mathcal{F}}\to\mathcal{F}$ by
\begin{equation*}
t=\frac{\tau}{U_{max}},\quad T=\frac{\Tilde{T}}{U_{max}},\quad
\mathbf{R}(t)=\mathbf{Q}\Tilde{\mathbf{R}}(\tau)\mathbf{Q}^{-1},
\end{equation*}
\begin{equation}
\label{eq:Psi_map}
u_g(t)=U_{max}\Tilde{v}(\tau),\quad
v(t)=-U_{max}\Tilde{u}_g(\tau),
\end{equation}
with $\tau=U_{max}t$.
It is straightforward to verify that, for $U_{max}>0$, $\Psi(\Phi(x))=x$ for all $x\in\mathcal{F}$ and $\Phi(\Psi(y))=y$ for all $y\in\tilde{\mathcal{F}}$; hence $\Phi$ is a bijection between $\mathcal{F}$ and $\tilde{\mathcal{F}}$.

Next, we show that the optimizers of the two problems are equivalent.
For any feasible $x:=(\mathbf{R}(\cdot),v(\cdot),u_g(\cdot),T)\in\mathcal{F}$ and its image
$\Phi(x)=(\tilde{\mathbf{R}}(\cdot),\tilde v(\cdot),\tilde u_g(\cdot),\tilde T)\in\tilde{\mathcal{F}}$,
the time scaling $\tau=U_{max}t$ yields
\begin{equation}\label{eq:cost_scaling}
J(x)=\int_{0}^{T}1\,dt=\frac{1}{U_{max}}\int_{0}^{\tilde T}1\,d\tau=\frac{1}{U_{max}}\tilde J(\Phi(x)).
\end{equation}
Since $U_{max}>0$, \eqref{eq:cost_scaling} implies the strict order-preserving relation
\begin{equation}\label{eq:order_preserve}
J(x_1)<J(x_2) \Longleftrightarrow \tilde J(\Phi(x_1))<\tilde J(\Phi(x_2)),\quad \forall x_1,x_2\in\mathcal{F}.
\end{equation}
Together with the fact that $\Phi$ is a bijection between $\mathcal{F}$ and $\tilde{\mathcal{F}}$, it follows that $\Phi(x^*)\in\tilde{\mathcal{F}}$ is an optimizer of \eqref{cost new}-\eqref{constraints new} if and only if $x^*\in\mathcal{F}$ is an optimizer of \eqref{eq: time cost}-\eqref{terminal conditions}.

Note that the problem \eqref{cost new}–\eqref{constraints new} has the same structure as \eqref{eq: time cost}, \eqref{RS model}, \eqref{terminal conditions}, and the new bound $\Tilde{U}_{max}>1$ if $U_{max}<1$. Hence, when $U_{max}<1$, \eqref{cost new}-\eqref{constraints new} can be directly solved with the results developed in this paper for $\tilde U_{max}>1$. Therefore, when $U_{max}<1$, the spherical CRS problem \eqref{eq: time cost}-\eqref{terminal conditions} can be solved with the following steps:

(i) Calculate the transformed desired terminal configuration $\Tilde{\mathbf{R}}_{\mathbf{f}}=\mathbf{Q}^{-1}\bold{R_f}\mathbf{Q}$ and the new bound $\Tilde{U}_{max}=\frac{1}{U_{max}}$;

(ii) Solve the problem \eqref{cost new}-\eqref{constraints new} using the results developed in this paper for $\Tilde{U}_{max}>1$;

(iii) Recover the optimal solution, denoted by superscript $^*$, with 
\begin{equation}
\label{eq:control recover}
u_g^*(t)=U_{max}\Tilde{v}^*({U_{max}}{t}), ~~v^*(t)=-U_{max}\Tilde{u}_g^*(U_{max}{t}).
\end{equation}

\emph{Numerical example:} Consider the case where $U_{max}=0.25<1$, and the desired terminal configuration 
\begin{align*}
    \mathbf{R_f}=
\begin{pmatrix}
-0.944360 & -0.283650 & \phantom{-}0.166512\\
\phantom{-}0.326943 & -0.754203 & \phantom{-}0.569461\\
-0.035944 & \phantom{-}0.592216 & \phantom{-}0.804977
\end{pmatrix},
\end{align*}
as shown in Fig. \ref{fig:config_transform}(a).
\begin{figure}[h]
    \centering
    \subfigure[$\mathbf{R_f}$]{\includegraphics[width=0.4\textwidth]{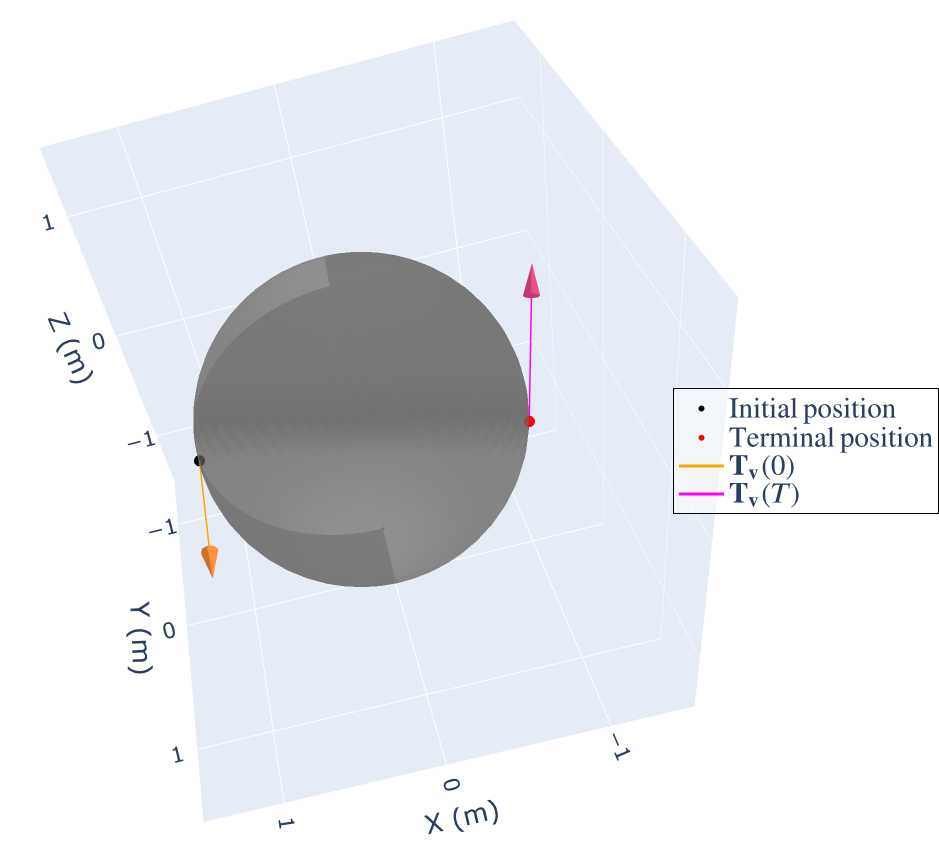}}
    \subfigure[$\mathbf{\tilde R_f}$]{\includegraphics[width=0.4\textwidth]{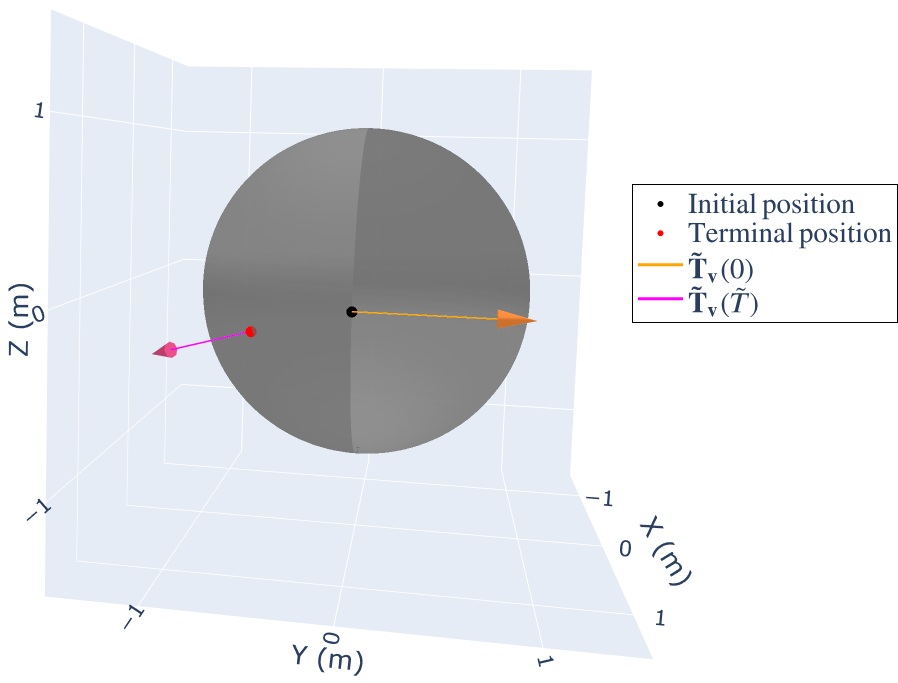}}
    \caption{Terminal configuration before and after the transformation}
    \label{fig:config_transform}
    \end{figure}

This results in the transformed new bound $\tilde U_{max}=\frac{1}{U_{max}}=4>1$, and the transformed terminal configuration
\begin{align*}
    \mathbf{\tilde R_f}&=\mathbf{Q}^{-1}\mathbf{R_fQ}\\&=
\begin{pmatrix}
\phantom{-}0.804977 &  -0.592216 &  \phantom{-}0.035944 \\
 -0.569461 & -0.754203 &  \phantom{-}0.326943 \\
 -0.166512 & -0.283650 &  -0.944360
\end{pmatrix},
\end{align*}
as shown in Fig. \ref{fig:config_transform}(b).

We then compute the feasible and time-optimal paths for the transformed problem (with $\tilde U_{max}>1$) using the results developed in this paper; see Fig.~\ref{fig:sol_transform}(a).
Finally, we recover the time-optimal path for the original problem by mapping the controls back via~\eqref{eq:control recover} (the feasible paths are recovered with a similar map as well); see Fig.~\ref{fig:sol_transform}(b). The segment labels are recovered via Table~\ref{tab:segment_mapping_compact}.
    \begin{figure}[h]
    \centering\subfigure[Solution of the transformed problem]{\includegraphics[width=0.4\textwidth]{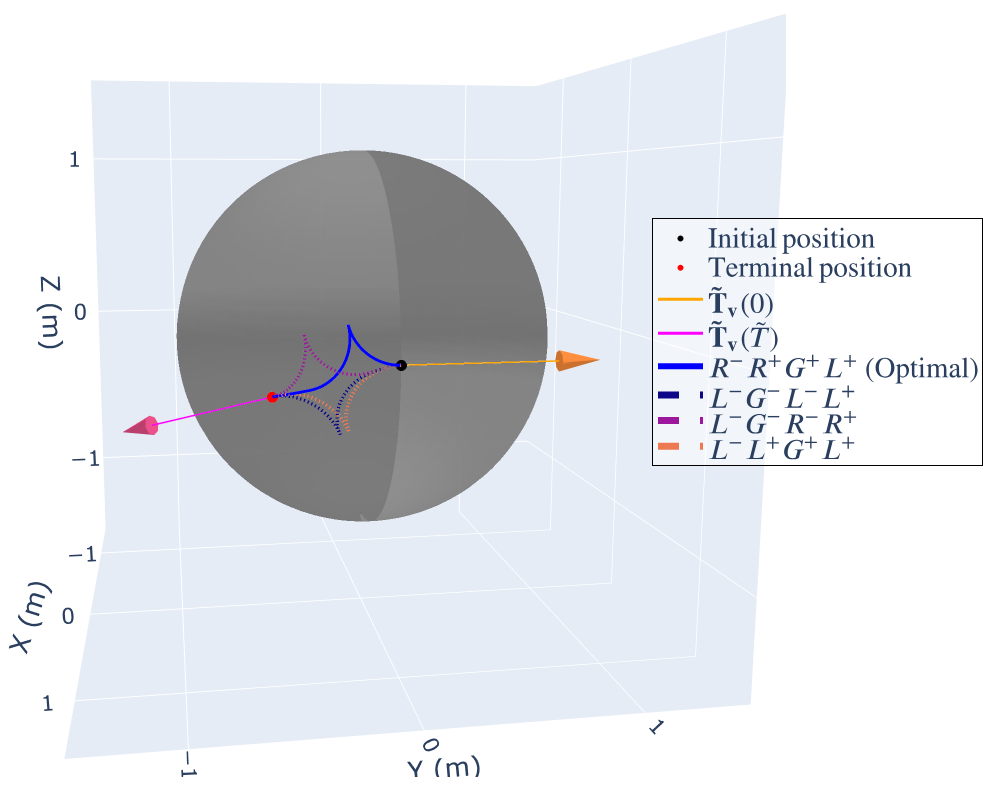}}
 \subfigure[Recovered solution of the original problem]{\includegraphics[width=0.4\textwidth]{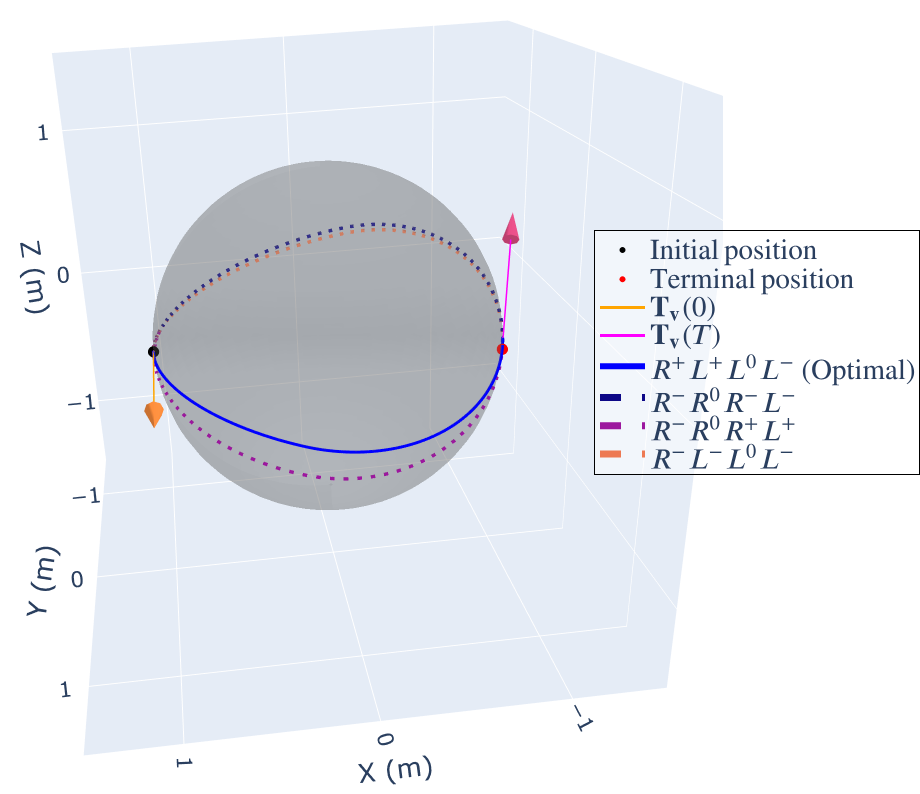}}
    \caption{Solving the transformed problem and recovering the solution of the original problem}
\label{fig:sol_transform}
    \end{figure}

\subsection{PMP derivations}
\label{appd PMP}
Equation (\ref{RS model}) can be rewritten in terms of left-invariant vector-fields $\overrightarrow{l}_1 \left(\bold{R} (t) \right)$ and $\overrightarrow{L}_{12} \left(\bold{R} (t) \right)$, defined as
\begin{align}
    \overrightarrow{l}_1 \left(\bold{R} (t) \right) &= \bold{R} (t) l_1 = \bold{R} (t) \begin{pmatrix}
        0 & -1 & 0 \\
        1 & 0 & 0 \\
        0 & 0 & 0
    \end{pmatrix}, \\
    \overrightarrow{L}_{12} \left(\bold{R} (t) \right) &= \bold{R} (t) L_{12} = \bold{R} (t) \begin{pmatrix}
        0 & 0 & 0 \\
        0 & 0 & 1 \\
        0 & -1 & 0
    \end{pmatrix}.
\end{align}
Here, $l_1$ and $L_{12}$ lie in $so (3),$ the Lie-algebra of the Lie group $SO (3).$ Equation (\ref{RS model}) can therefore be rewritten as
\begin{align}
    \frac{d\bold{R}(t)}{dt} = v (t) \overrightarrow{l}_1 (\bold{R} (t)) - u_g (t) \overrightarrow{L}_{12} (\bold{R} (t)).
\end{align}

Similar to \cite{monroy1998non}, PMP is applied for the symplectic formalism \cite{jurdjevic1997geometric}, wherein PMP is applied on the cotangent bundle of the Lie group, $T^*SO(3)$. Hence, the Hamiltonian is given by
\begin{align}
\begin{split}
\label{Halmil0_}
    &H_{u_g (t), v (t)} \left(\zeta_0, \zeta \right) \\
    &= \zeta_0 (1) + \zeta \left(v (t) \overrightarrow{l}_1 (\bold{R} (t)) - u_g (t) \overrightarrow{L}_{12} (\bold{R} (t)) \right) \\
    &= \zeta_0 + v (t) \zeta \left(\overrightarrow{l}_1(\bold{R} (t)) \right) -  u_g (t) \zeta\left(\overrightarrow{L}_{12} (\bold{R} (t)) \right) \\
    &= \zeta_0 + v (t) h_1 (\zeta (t)) -  u_g (t) H_{12} (\zeta (t)),
\end{split}
\end{align}
where $h_1$ and $H_{12}$ are smooth functions in $T^* SO (3),$ the cotangent bundle of the Lie group $SO (3),$ and the integral curve $\zeta \in T^* SO (3)$.

$\textbf{The Pontryagin maximum principle (PMP)}$ \cite{chang2011simple,monroy1998non}: If ($\bold{R(t)},v(t),u_g(t)$) is an optimal trajectory for the system in (\ref{RS model}) on an interval $[0, \Bar{t}]$, then it is the projection of the integral curve
$\zeta (t)$ corresponding to the Hamiltonian vector field associated with the
left-invariant vector field using which the system evolves, so that

1) $\zeta_0\leq0$ and is constant in $t$.

2) If $\zeta_0=0$, then $\zeta(t)$ is not identically zero in $[0,\Bar{t}]$.

3) $H_{u_g (t), v (t)} \left(\zeta_0, \zeta(t) \right)\geq H_{\Bar{u_g}, \Bar{v}} \left(\zeta_0, \zeta(t) \right)$ for all $\Bar{v}\in[-1,1],\Bar{u_g}\in[-U_{max},U_{max}]$, and almost all $t\in[0,\Bar{t}]$.

4) $H_{u_g (t), v (t)} \left(\zeta_0, \zeta(t) \right)$ is zero for almost all $t\in[0,\Bar{t}]$.
\vspace{0.5em}

Furthermore, $h_1, h_2,$ and $H_{12}$ are Hamiltonians corresponding to the left-invariant vector fields $\overrightarrow{l}_1, \overrightarrow{l}_2,$ and $\overrightarrow{L_{12}}$ (page 363 of \cite{jurdjevic1997geometric}), and the Poisson bracket of the Hamiltonians satisfies the same relations as the Lie bracket of the vector fields. That is,
\begin{align}
    [L_{12}, l_1] = l_2, \quad \implies \{H_{12}, h_1 \} = h_2, \\
    [l_1, l_2] = L_{12}, \quad \implies \{h_1, h_2 \} = H_{12}, \\
    [l_2, L_{12}] = l_1, \quad \implies \{h_2, H_{12} \} = h_1.
\end{align}
The evolution of the Hamiltonians from PMP are given by\footnote{Note that the Poisson bracket with a constant (in this case, $\zeta_0$), is zero.} \cite{monroy1998non}
\begin{align}
\label{Ham1_}
    \frac{d{h}_1 (\zeta (t))}{dt} &= \{h_1, H_{u_g, v} \} = \{h_1, v h_1 - u_g H_{12} \} \notag\\&= u_g h_2, 
    \end{align}
    \begin{align}
    \label{Ham2_}
    \frac{d{h}_2 (\zeta (t))}{dt} &= \{h_2, H_{u_g, v} \} = \{h_2, v h_1 - u_g H_{12} \} \notag\\&= v \{h_2, h_1 \} - u_g \{h_2, H_{12} \} \notag\\
    &= -v H_{12} - u_g h_1, \end{align}
    \begin{align}
    \label{Ham3_}
    \frac{d{H}_{12} (\zeta (t))}{dt} &= \{H_{12}, H_{u_g, v} \} = \{H_{12}, v h_1 - u_g H_{12} \}\notag\\& = v \{H_{12}, h_1 \} = v h_2.
\end{align}

\subsection{Proof of Lemma \ref{lem1}}
\label{appd lem1}

By Condition 4) of PMP, $H_{u_g,v}\equiv0$. Condition 3) implies $vh_1\geq0$,
$u_gH_{12}\leq0$, with $u_g\neq0$ if $H_{12}\neq0$ and $v\neq0$ if $h_1\neq0$.
If $\zeta_0=0$, then $H_{u_g,v}=vh_1-u_gH_{12}\equiv0$, which is only possible
when $h_1\equiv0$ and $H_{12}\equiv0$. Furthermore, by (\ref{Ham1_}) and (\ref{Ham3_}),
$\frac{dH_{12}}{dt}=vh_2\equiv0$ and $\frac{dh_1}{dt}=u_gh_2\equiv0$, implying either
$h_2\equiv0$ or $v=u_g\equiv0$. The latter yields $\frac{d\bold{R}(t)}{dt}\equiv0$
from (\ref{RS model}), corresponding to a stationary path, which is
non-optimal for a minimum-time problem. The former gives
$h_1=h_2=H_{12}\equiv0$, violating Condition 2) of PMP \cite{monroy1998non}.

\subsection{Proof of Lemma \ref{lem full cusp equivalence}}
\label{appnd B}
Consider a $L^+_\beta|L^-_\beta$ path and a $R^+_\beta|R^-_\beta$ path with $\beta=\arctan(\frac{1}{\sqrt{U_{max}^4-1}})+\frac{\pi}{2}$. By (\ref{R sol}), it suffices to show that $\bold{M}_{L^+}(r,\beta)\bold{M}_{L^-}(r,\beta)=\bold{M}_{R^+}(r,\beta)\bold{M}_{R^-}(r,\beta)$. It can be obtained that $\sin(\arctan(\frac{1}{\sqrt{U_{max}^4-1}}))=\frac{1}{U_{max}^2}$ and $\cos(\arctan(\frac{1}{\sqrt{U_{max}^4-1}}))=\frac{\sqrt{U_{max}^4-1}}{U_{max}^2}$, hence, $\sin(\beta)=\frac{\sqrt{U_{max}^4-1}}{U_{max}^2}$ and $\cos(\beta)=-\frac{1}{U_{max}^2}$. Substituting the value of $\sin(\beta), \cos(\beta)$, and $r=\frac{1}{\sqrt{1+U_{max}^2}}$ into (\ref{M_L+}), (\ref{M_L-}), (\ref{M_R+}), and (\ref{M_R-}), it is obtained that
    \begin{align}
        &\bold{M}_{L^+}(r,\beta)\bold{M}_{L^-}(r,\beta)=\bold{M}_{R^+}(r,\beta)\bold{M}_{R^-}(r,\beta) \notag \\
        =&\left(\begin{array}{ccc}
        \frac{U_{max}^2-2}{U_{max}^2} & \frac{2\sqrt{U_{max}^2-1}}{U_{max}^2} & 0 \\
        \frac{2\sqrt{U_{max}^2-1}}{U_{max}^2} & -\frac{U_{max}^2-2}{U_{max}^2} & 0 \\
        0 & 0 & -1 \\
        \end{array} \right).
    \end{align}

    With a similar proof for $L^-_\beta|L^+_\beta$ and $R^-_\beta|R^+_\beta$ paths, the lemma is proved.

\subsection{Proof of Lemma \ref{lem GC|CG replace}}
\label{append C}
 Suppose a path is of type $G^-_\sigma L^-_\beta|L^+_\beta$ with $\beta=\arctan(\frac{1}{\sqrt{U_{max}^4-1}})+\frac{\pi}{2}$ and $\sigma\geq0$. Suppose that it can be replaced by a path of type $L^-_\beta|L^+_\beta G^+_{\sigma}$. It suffices to show that $\bold{M}_{L^-}(r,\beta)\bold{M}_{L^+}(r,\beta)\bold{M}_{G^+}(\sigma)=\bold{M}_{G^-}(\sigma)\bold{M}_{L^-}(r,\beta)\bold{M}_{L^+}(r,\beta)$. As shown in the proof of Lemma \ref{lem full cusp equivalence}, $\sin(\beta)=\frac{\sqrt{U_{max}^4-1}}{U_{max}^2}$ and $\cos(\beta)=-\frac{1}{U_{max}^2}$. Substituting the values of $\sin(\beta), \cos(\beta)$, and $r=\frac{1}{\sqrt{1+U_{max}^2}}$ into (\ref{M_G+}), (\ref{M_L+}), and (\ref{M_L-}), it is obtained that
    \begin{align}
        &\bold{M}_{L^-}(r,\beta)\bold{M}_{L^+}(r,\beta)\bold{M}_{G^+}(\sigma) \notag \\
        =&\bold{M}_{G^-}(\sigma)\bold{M}_{L^-}(r,\beta)\bold{M}_{L^+}(r,\beta) \notag \\
        =&\left(\begin{array}{ccc}
        \xi_1 & \xi_2 & 0 \\
        \xi_2 & -\xi_1 & 0 \\
        0 & 0 & -1 \\
        \end{array} \right),
    \end{align}
    where $\xi_1=c(\sigma)\frac{U_{max}^2-2}{U_{max}^2}-s(\sigma)\frac{2\sqrt{U_{max}^2-1}}{U_{max}^2}$ and $\xi_2=-s(\sigma)\frac{U_{max}^2-2}{U_{max}^2}-c(\sigma)\frac{2\sqrt{U_{max}^2-1}}{U_{max}^2}$.
    
    With similar proofs for path types $G^-_\sigma R^-_\beta|R^+_\beta$, $G^+_\sigma L^+_\beta|L^-_\beta$, and $G^+_\sigma R^+_\beta|R^-_\beta$, the lemma is proved.

\subsection{Proof of Lemma \ref{lem CCC2G}}
\label{appnd D}
Consider a $L^+_\mu R^+_\epsilon L^+_\mu$ path where $0<\mu<\arctan(\frac{1}{\sqrt{U_{max}^4-1}})+\frac{\pi}{2}$. For it to be replaced by a $G^+_\theta$ path, the following equation must be satisfied: 
    \begin{equation}
    \label{Eq:LRL2G}
    \bold{M}_{L^+}(r,\mu)\bold{M}_{R^+}(r,\epsilon)\bold{M}_{L^+}(r,\mu)=\bold{M}_{G^+}(\theta).
    \end{equation}

    We utilize the following axial vectors for the characterization of $\epsilon$:
    \begin{equation}
    \label{Eq:uLp_u_Rp}
        \bold{u}_{L^+}:=
        \begin{pmatrix}
        \sqrt{1 - r^2}, 0, r
        \end{pmatrix}^T,
        \bold{u}_{R^+}:=
        \begin{pmatrix}
        -\sqrt{1 - r^2}, 0, r
        \end{pmatrix}^T,
    \end{equation}
    where $\bold{u}_{L^+}$ and $\bold{u}_{R^+}$ are the axial vectors of $\bold{M}_{L^+}(r,\cdot)$ and $\bold{M}_{R^+}(r,\cdot)$, respectively. Therefore, $\bold{M}_{L^+}(r,\mu)\bold{u}_{L^+}=\bold{u}_{L^+}$, $\bold{u}_{L^+}^T\bold{M}_{L^+}(r,\mu)=\bold{u}_{L^+}^T$, $\bold{M}_{R^+}(r,\mu)\bold{u}_{R^+}=\bold{u}_{R^+}$, and $\bold{u}_{R^+}^T\bold{M}_{R^+}(r,\mu)=\bold{u}_{R^+}^T$. 

    By pre-multiplying both sides of (\ref{Eq:LRL2G}) with $\bold{u}_{L^+}^T$ and post-multiplying with $\bold{u}_{L^+}$, it is obtained that:
    \begin{equation}
    \label{Eq:LRL2G_uLpuLp}
    \bold{u}_{L^+}^T\bold{M}_{R^+}(r,\epsilon)\bold{u}_{L^+}=\bold{u}_{L^+}^T\bold{M}_{G^+}(\theta)\bold{u}_{L^+}.
    \end{equation}

    By further substituting (\ref{M_G+}), (\ref{M_R+}) and (\ref{Eq:uLp_u_Rp}) into (\ref{Eq:LRL2G_uLpuLp}), it is obtained that:
    \begin{equation}
    \label{Eq:LRL2G_1}
        r^2-c(\theta)(r^2-1)=(4r^2-4r^4)\left(c(\epsilon)-1\right)+1.
    \end{equation}

    Similarly, by pre-multiplying both sides of (\ref{Eq:LRL2G}) with $\bold{u}_{L^+}^T$ and post-multiplying with $\bold{u}_{R^+}$, it is obtained that:
    \begin{align}
    \label{Eq:LRL2G_2}
        r^2&+c(\theta)(r^2-1) \notag \\
        =&(4r^2-4r^4)\Big(c(\mu)(-1+2r^2)+s(\mu)s(\epsilon)+c(\epsilon)(-1+2r^2) \notag \\
        &+c(\mu)c(\epsilon)(1-2r^2)+1-2r^2\Big)+2r^2-1.
    \end{align}

    By taking the sum of (\ref{Eq:LRL2G_1}) and (\ref{Eq:LRL2G_2}) on both sides and utilizing the fact that $4r^2-4r^4\neq0$ for $r\leq\frac{1}{\sqrt{2}}$, we obtain:
    \begin{equation}
    \label{eq 54}
        c(\epsilon)\Big(c(\mu)(1-2r^2)+2r^2\Big)+s(\epsilon)s(\mu)=c(\mu)(1-2r^2)+2r^2.
    \end{equation}
    
    We now divide each side of the above equation by $\sqrt{(c(\mu)(1-2r^2)+2r^2)^2+s^2(\mu)}$, noting that this quantity is non-zero. This is justified since $s^2(\mu)>0$ for $0<\mu<\arctan(\frac{1}{\sqrt{U_{max}^4-1}})+\frac{\pi}{2}\leq\pi$. Furthermore, defining $c(\delta):=\frac{c(\mu)(1-2r^2)+2r^2}{\sqrt{(c(\mu)(1-2r^2)+2r^2)^2+s^2(\mu)}}$ and $s(\delta):=\frac{s(\mu)}{\sqrt{(c(\mu)(1-2r^2)+2r^2)^2+s^2(\mu)}}$, (\ref{eq 54}) can be rewritten as:
    \begin{equation}
        c(\epsilon-\delta)=c(\delta).
    \end{equation}

    Therefore,
    \begin{equation}
    \label{eq 56}
        \epsilon=c^{-1}(c(\delta))+\delta.
    \end{equation}

    It is worth noting that $c^{-1}(c(\delta))$ has two possible values within $[0,2\pi]$, namely, $\delta$ and $2\pi-\delta$, we next prove that it can only be equal to $\delta$. Suppose $s\Big(c^{-1}(c(\delta))\Big)=s(2\pi-\delta)=-s(\delta)$, then $c(\epsilon)=c(\delta)c\Big(c^{-1}(c(\delta))\Big)-s(\delta)s\Big(c^{-1}(c(\delta))\Big)=c^2(\delta)+s^2(\delta)=1$. Since it is defined that $0<\epsilon<2\mu<2\pi$, this is not possible. Hence, $s\Big(c^{-1}(c(\delta))\Big)=s(\delta)$. Therefore, $c^{-1}(c(\delta))=\delta$, and by (\ref{eq 56}) we obtain $\epsilon=2\delta$.

    Next, we will show that $0<\epsilon=2\delta<2\mu$. Since $0<\mu<\arctan(\frac{1}{\sqrt{U_{max}^4-1}})+\frac{\pi}{2}\leq\pi$, $s(\mu)>0$. Therefore, $s(\delta)=\frac{s(\mu)}{\sqrt{(c(\mu)(1-2r^2)+2r^2)^2+s^2(\mu)}}>0$, hence, $0<\delta<\pi$. We further compare $\delta$ with $\mu$ by taking their difference:
    \begin{align}
        s(\mu-\delta)&=s(\mu)c(\delta)-c(\mu)s(\delta) \notag \\
        &=\frac{s(\mu)\Big(c(\mu)(1-2r^2)+2r^2\Big)-c(\mu)s(\mu)}{\sqrt{(c(\mu)(1-2r^2)+2r^2)^2+s^2(\mu)}} \notag \\
        &=\frac{2r^2s(\mu)\Big(1-c(\mu)\Big)}{\sqrt{(c(\mu)(1-2r^2)+2r^2)^2+s^2(\mu)}}.
    \end{align}

    Since $0<\mu<\arctan(\frac{1}{\sqrt{U_{max}^4-1}})+\frac{\pi}{2}\leq\pi$, $s(\mu)>0$ and $-1<c(\mu)<1$. Hence, $s(\mu-\delta)>0$. Furthermore, since $0<\delta<\pi$, $\mu-\delta>0$. Hence, $\mu>\delta>0$, and $0<\epsilon=2\delta<2\mu$.

    Substituting $c(\epsilon)=c(2\delta)=c^2(\delta)-s^2(\delta)$ and $s(\epsilon)=s(2\delta)=2s(\delta)c(\delta)$ into (\ref{Eq:LRL2G}), it can be checked that:
    \begin{align}
    \bold{M}_{G^+}(\theta)&=\bold{M}_{L^+}(r,\mu)\bold{M}_{R^+}(r,\epsilon)\bold{M}_{L^+}(r,\mu) \notag \\
    &=\left(\begin{array}{ccc}
    c(\theta) & -s(\theta) & 0 \\
    s(\theta) & c(\theta) & 0 \\
    0 & 0 & 1 \\
    \end{array} \right),
    \end{align}
    where $c(\theta)=\frac{(4r^2-8r^4)c(\mu)+(4r^2+4r^4)c^2(\mu)-8r^2+4r^4+1}{(4r^2-8r^4)c(\mu)+(-4r^2+4r^4)c^2(\mu)+4r^4+1}$ and $s(\theta)=\frac{4r\big((1-2r^2)s(\mu)+2r^2c(\mu)s(\mu)\big)}{(4r^2-8r^4)c(\mu)+(-4r^2+4r^4)c^2(\mu)+4r^4+1}$.

Visualizations of a $G^+_\theta$ path replacing a $L^+_\mu R^+_\epsilon L^+_\mu$ path are shown in Fig. \ref{fig:L_pos_R_pos_L_pos} for $\mu=\frac{\pi}{3}$ and $\mu=\frac{2\pi}{3}$. In both cases, $U_{max}=3$ (or $r=\frac{1}{\sqrt{10}}$) and the corresponding values of $\epsilon$ and $\theta$ are calculated.

 \begin{figure}[h]
    \centering
    \subfigure[$\mu=\frac{\pi}{3}$, $\epsilon=0.61\pi$, $\theta=0.35\pi$]{\includegraphics[width=0.24\textwidth]{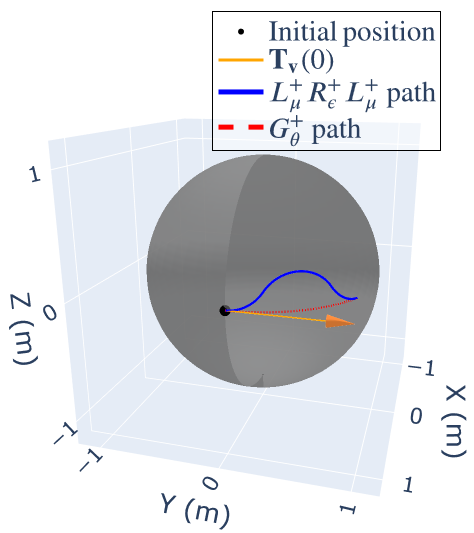}}
    \subfigure[$\mu=\frac{2}{3}\pi$, $\epsilon=1.14\pi$, $\theta=0.42\pi$]{\includegraphics[width=0.24\textwidth]{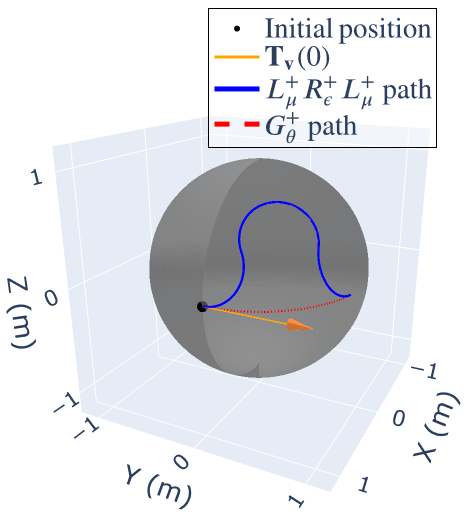}}
    \caption{Replacing a $L^+_\mu R^+_\epsilon L^+_\mu$ path with a $G^+_\theta$ path}
    \label{fig:L_pos_R_pos_L_pos}
    \end{figure}

    With similar proofs for $R^+_\mu L^+_\epsilon R^+_\mu$, $L^-_\mu R^-_\epsilon L^-_\mu$, and $R^-_\mu L^-_\epsilon R^-_\mu$ paths, the lemma is proved.

\bibliographystyle{IEEEtran}
\bibliography{IEEEabrv,root}

\end{document}